\newcommand{\ex}[2]{{\ifx&#1& \mathbb{E} \else \underset{#1}{\mathbb{E}} \fi \left[#2\right]}}
\newcommand{\pr}[2]{{\ifx&#1& \mathbb{P} \else \underset{#1}{\mathbb{P}} \fi \left[#2\right]}}
\newcommand{\exc}[3]{{\ifx&#1& \mathbb{E} \else \underset{#1}{\mathbb{E}} \fi \left[ #2 \middle| #3 \right]}}
\newcommand{\prc}[3]{{\ifx&#1& \mathbb{P} \else \underset{#1}{\mathbb{P}} \fi \left[ #2 \middle| #3 \right]}}
\newcommand{\var}[2]{{\ifx&#1& \mathbb{V} \else \underset{#1}{\mathbb{V}} \fi \left[#2\right]}}
\newcommand{\dr}[3]{\mathrm{D}_{#1}\left(#2\middle\|#3\right)}
\newcommand{\sign}{\mathsf{sign}}
\newcommand{\tvd}[2]{d_{\text{TV}}\left(#1,#2\right)}
\newcommand{\nope}[1]{}
\newtheorem{theorem}{Theorem}
\newtheorem{lemma}[theorem]{Lemma}
\newtheorem{definition}[theorem]{Definition}
\newtheorem{corollary}[theorem]{Corollary}
\newtheorem{proposition}[theorem]{Proposition}
\newtheorem{remark}[theorem]{Remark}
\numberwithin{theorem}{section}
\newcommand*{\citetal}[1]{\AtNextCite{\AtEachCitekey{\defcounter{maxnames}{2}}}\textcite{#1}}
\newcommand*{\citet}[1]{\AtNextCite{\AtEachCitekey{\defcounter{maxnames}{999}}}\textcite{#1}}
\newcommand*{\citep}[1]{\cite{#1}}
\title{Privacy Auditing with One (1) Training Run}
\author{
    \href{http://www.thomas-steinke.net/}{\color{black}{Thomas Steinke}}\thanks{Google. \texttt{\{steinke,srxzr,jagielski\}@google.com}. Reverse alphabetical author order.}%
    \and 
    \href{https://people.cs.umass.edu/~milad/}{\color{black}{Milad Nasr}}\footnotemark[1]%
    \and
    \href{https://jagielski.github.io/}{\color{black}{Matthew Jagielski}}\footnotemark[1]%
}
\date{}%
\begin{document}
\maketitle
\begin{abstract}
    We propose a scheme for auditing differentially private machine learning systems with a single training run. This exploits the parallelism of being able to add or remove multiple training examples independently. We analyze this using the connection between differential privacy and statistical generalization, which avoids the cost of group privacy. Our auditing scheme requires minimal assumptions about the algorithm and can be applied in the black-box or white-box setting.
\end{abstract}

\section{Introduction}
Differential privacy (DP) \cite{dwork2006calibrating} provides a quantifiable privacy guarantee by ensuring that no person's data significantly affects the probability of any outcome. 
Formally, a randomized algorithm $M$ satisfies $(\varepsilon,\delta)$-DP if, for any pair of inputs $x,x'$ differing only by the addition or removal of one person's data and any measurable $S$, we have 
\begin{equation}
    \pr{}{M(x) \in S} \le e^\varepsilon \cdot \pr{}{M(x') \in S} + \delta.\label{eq:dp}
\end{equation}
A DP algorithm is accompanied by a mathematical proof giving an \emph{upper bound} on the privacy parameters $\varepsilon$ and $\delta$.
In contrast, a \emph{privacy audit} provides an empirical \emph{lower bound} on the privacy parameters.
Privacy audits allow us to assess the tightness of the mathematical analysis \cite{jagielski2020auditing,nasr2023tight} or, if the lower and upper bounds are contradictory, to detect errors in the analysis or in the algorithm's implementation \cite{tramer2022debugging}.

Typically, privacy audits obtain a lower bound on the privacy parameters directly from the DP definition \eqref{eq:dp}.
That is, we construct a pair of inputs $x,x'$ and a set of outcomes $S$ and we estimate the probabilities $\pr{}{M(x) \in S}$ and $\pr{}{M(x') \in S}$.
However, estimating these probabilities requires running the algorithm $M$ hundreds of times.
This approach to privacy auditing is computationally expensive, which raises the question
\begin{quote}
    \emph{Can we perform privacy auditing using a single run of the algorithm M?}
\end{quote}
This is the question we address in our work.

\subsection{Our Contributions}

\textbf{Our approach (\S\ref{sec:procedure}):} 
The DP definition \eqref{eq:dp} considers adding or removing a single person's data to or from the dataset.
We consider multiple people's data and the dataset independently includes or excludes each person's data point. Our analysis exploits the parallelism of multiple independent data points in a single run of the algorithm in lieu of multiple independent runs.

Our auditing procedure operates as follows. We identify $m$ data points (i.e., training examples or ``canaries'') to either include or exclude and we flip $m$ independent unbiased coins to decide which of them to include or exclude. We then run the algorithm on the randomly selected dataset. Based on the output of the algorithm, the auditor ``guesses'' whether or not each data point was included or excluded (or it can abstain from guessing for some data points). We obtain a lower bound on the privacy parameters from the fraction of guesses that were correct. 

Intuitively, if the algorithm is $(\varepsilon,0)$-DP, then the auditor can correctly guess each inclusion/exclusion coin flip with probability at most $\frac{e^\varepsilon}{e^\varepsilon+1}$. Thus DP implies a high-probability upper bound on the fraction of correct guesses and, conversely, a large fraction of correct guesses implies a high-probability lower bound on the privacy parameters.

\textbf{Our analysis (\S\ref{sec:theory}):}
Na\"ively, analyzing the addition or removal of multiple data elements would rely on group privacy; but this does not exploit the fact that the data items were included or excluded independently.
Instead, we leverage the connection between DP and generalization \cite{dwork2015preserving,dwork2015generalization,bassily2016algorithmic,rogers2016max,jung2019new,steinke2020reasoning}.
Our main theoretical contribution is an improved analysis of this connection that is tailored to yield nearly tight bounds in our setting.

Informally, if we run a DP algorithm on i.i.d.~samples from some distribution, then, conditioned on the output of the algorithm, the samples are still ``close'' to being i.i.d.~samples from that distribution.
There is some technicality in making this precise, but, roughly speaking, we show that including or excluding $m$ data points independently for one run is essentially as good as having $m$ independent runs (as long as $\delta$ is small).

\textbf{Our results (\S\ref{sec:experiments}):} We implement our new auditing framework to audit DP-SGD training on a WideResNet model, trained on the CIFAR10 dataset across multiple configurations. Our approach successfully achieves an empirical lower bound of $\varepsilon \ge 1.8$, compared to a theoretical upper bound of $\varepsilon \le 4$ in the white-box setting. The $m$ examples we insert for auditing (known in the literature as ``canaries'') do not significantly impact the accuracy of the final model (less than a $5\%$ decrease in accuracy) and our procedure only requires a single end-to-end training run. Such results were previously unattainable in the setting where only one model could be trained.

\section{Our Auditing Procedure}\label{sec:procedure}

\newcommand{\tin}{{\text{IN}}}
\newcommand{\tout}{{\text{OUT}}}
\newcommand{\xin}{x_{\text{IN}}}
\newcommand{\xdata}{x_{\text{FIXED}}}
\newcommand{\xout}{x_{\text{OUT}}}
\newcommand{\nin}{{n_{\text{IN}}}}
\newcommand{\nout}{{n_{\text{OUT}}}}
\newcommand{\scin}{s_{\text{IN}}}
\newcommand{\sout}{s_{\text{OUT}}}
\begin{algorithm}[h]
    \caption{Auditor with One Training Run}\label{alg:generic-audit}
    \begin{algorithmic}[1]
        \State \textbf{Data:} $x \in \mathcal{X}^{n}$ consisting of $m$ auditing examples (a.k.a.~canaries) and $n-m$ non-auditing examples.
        \State \textbf{Parameters:} Algorithm to audit $\mathcal{A}$, number of examples to randomize $m$, number of positive $k_+$ and negative $k_-$ guesses.
        \State For $i \in [m]$ sample $S_i \in \{-1,+1\}$ independently with $\ex{}{S_i}=0$. Set $S_i=1$ for all $i \in [n] \setminus [m]$.
        \State Partition $x$ into $\xin \in \mathcal{X}^\nin$ and $\xout \in \mathcal{X}^\nout$ according to $S$, where $\nin+\nout=n$. Namely, if $S_i=1$, then $x_i$ is in $\xin$; and, if $S_i=-1$, then $x_i$ is in $\xout$.
        \State Run $\mathcal{A}$ on input $\xin$ with appropriate parameters, outputting $w$.
        \State Compute the vector of scores $Y = \left( \textsc{Score}(x_i,w) : i \in [m] \right) \in \mathbb{R}^m$.
        \State Sort the scores $Y$. Let $T \in \{-1,0,+1\}^m$ be $+1$ for the largest $k_+$ scores and $-1$ for the smallest $k_-$ scores.\\(I.e., $T \in \{-1,0,+1\}^m$ maximizes $\sum_i^m T_i \cdot Y_i$ subject to $\sum_i^m |T_i| = k_++k_-$ and $\sum_i^m T_i = k_+-k_-$.)
        \State \textbf{Return:} The vector $S \in \{-1,+1\}^m$ indicating the true selection and the guesses $T \in \{-1,0,+1\}^m$.
    \end{algorithmic}
\end{algorithm}

We now present our auditing procedure in Algorithm \ref{alg:generic-audit}. %
We independently include each of the first $m$ examples with 50\% probability and exclude it otherwise.\footnote{Alternatively, we could also consider a different probability of inclusion; our theoretical results can handle this (see Proposition \ref{prop:approx}). However, this seems unlikely to be useful, as it intuitively lowers the signal-to-noise ratio. Another alternative is to non-independently choose which points to include to ensure $\xin$ has a fixed size; see Appendix \ref{sec:replace_dp}.}
Our approach is applicable to both white-box auditing in the sense that the adversary has access to all intermediate values of the model weights and black-box auditing in the sense that the adversary only sees the final model weights (or can only query the final model). 
In both cases we compute a ``score'' for each example and ``guess'' whether the example is included or excluded based on these scores.
Specifically, we guess that the examples with the $k_+$ highest scores are included and the examples with the $k_-$ lowest scores are excluded, and we abstain from guessing for the remaining $m-k_+-k_-$ auditing examples; the setting of these parameters will depend on the application.

Note that we only randomize the first $m$ examples $x_1, \cdots, x_m$ (which we refer to as ``auditing examples'' or ``canaries''); the last $n-m$ examples $x_{m+1}, \cdots, x_n$ are always included and, thus, we do not make any guesses about them. To get the strongest auditing results we would set $m=n$, but we usually want to set $m<n$. For example, computing the score of all $n$ examples may be computationally prohibitive, so we only compute the scores of $m$ examples. Also we may wish to artificially construct $m$ examples to be easy to identify (i.e., canaries), but still include $n-m$ ``real'' examples to ensure that $\mathcal{A}$ still produces a useful model. (I.e., having more training examples improves the performance of the model.) 

Intuitively, the vector of scores $Y$ should be correlated with the true selection $S$, but too strong a correlation would violate DP. This is the basis of our audit.
Specifically, the auditor computes $T$ from $Y$ which is a ``guess'' at $S$. By the postprocessing property of DP, the guesses $T$ are a differentially private function of the true $S$, which means that they cannot be too accurate. 

To obtain a lower bound on the DP parameters, in Section \ref{sec:theory}, we show that DP implies a high-probability upper bound on the number of correct guesses $W := \sum_i^m \max\{ 0 , T_i \cdot S_i \}$. The observed value of $W$ then yields a high-probability lower bound on the DP parameters. To be more precise, we have the following guarantee.

\begin{theorem}[Informal version of Theorem \ref{thm:main}]\label{thm:main-informal}
    Let $(S,T) \in \{-1,+1\}^m \times \{-1,0,+1\}^m$ be the output of Algorithm \ref{alg:generic-audit}. Assume the algorithm to audit $\mathcal{A}$ satisfies $(\varepsilon,\delta)$-DP.
    Let $r:=k_++k_-=\|T\|_1$ be the number of guesses.
    Then, for all $v \in \mathbb{R}$,
    \begin{equation}\pr{S \gets \{-1,+1\}^m \atop T \gets M(S)}{\sum_i^m \max\{0, T_i \cdot S_i\} \ge  v} \le \pr{\check{W} \gets \mathsf{Binomial}\left(r,\frac{e^\varepsilon}{e^\varepsilon+1}\right)}{\check{W} \ge v} +  O(\delta).\label{eq:main-inf}\end{equation}
\end{theorem}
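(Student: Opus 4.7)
The plan is to combine a Bayesian posterior argument with a coupling to a sum of Bernoullis. First, since $T$ is a deterministic post-processing of $w=\mathcal{A}(\xin)$ and since flipping a single coordinate $S_i$ corresponds to adding or removing one canary from $\xin$, the map $S \mapsto T$ is $(\varepsilon,\delta)$-DP per coordinate. I would handle the $\delta$ term via the standard reduction of approximate DP to ``lossy'' pure DP: for each coordinate $i$ there is an event $E_i$ of probability at most $O(\delta)$ off which the pure-$\varepsilon$ likelihood-ratio bound holds for the conditional distribution of $T$ given $S_i$ and $S_{-i}$. This will cost an additive $O(\delta)$ in the final tail bound, matching the target.

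Second, I would establish a per-coordinate posterior bound. Fix any $i\in[m]$, any value $s_{-i}$ for the other coordinates, and any $t$ with $t_i \neq 0$. Since $S_i$ is uniform and independent of $S_{-i}$, Bayes' rule gives
\[
\pr{}{S_i = t_i \mid S_{-i}=s_{-i},\, T=t}
= \frac{\pr{}{T=t \mid S_i = t_i,\, S_{-i}=s_{-i}}}{\pr{}{T=t \mid S_i = t_i,\, S_{-i}=s_{-i}} + \pr{}{T=t \mid S_i=-t_i,\, S_{-i}=s_{-i}}},
\]
which the pure-$\varepsilon$ likelihood-ratio bound controls by $\tfrac{e^\varepsilon}{e^\varepsilon+1}$, using the elementary fact that $P/(P+Q) \le \tfrac{e^\varepsilon}{e^\varepsilon+1}$ whenever $P \le e^\varepsilon Q$.

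Third, I would conclude via a coupling. Reveal $S_1,\dots,S_m$ sequentially in any fixed order, with $T$ already observed up front. By the posterior bound, at every step the conditional probability that $S_i = T_i$ (given the history, on the good event) is at most $p := e^\varepsilon/(e^\varepsilon+1)$. Since $r=\|T\|_1=k_++k_-$ is deterministic, a standard coupling (or equivalently a supermartingale/Chernoff argument on the partial sums of $W$) shows that $W=\sum_{i:T_i\neq 0}\mathbf{1}[S_i=T_i]$ is stochastically dominated by $\mathsf{Binomial}(r,p)$; accounting for the bad events then gives the additive $O(\delta)$ slack in \eqref{eq:main-inf}.

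The main obstacle I anticipate is executing the $\delta$-reduction uniformly across all $m$ coordinates: the ``bad'' events $E_i$ must be controlled simultaneously as we iterate the Bayes step and the coupling along the sequential revelation, and a naive union bound over $i$ would blow the $O(\delta)$ slack up to $O(m\delta)$. So the reduction likely has to be interleaved with the revelation---e.g., by defining a single global bad event at which we give up and charging its probability once---rather than invoked separately per coordinate.
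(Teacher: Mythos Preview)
Your approach---Bayesian per-coordinate posterior bound plus sequential coupling to a Binomial, with per-coordinate ``bad events'' to absorb $\delta$---is exactly the paper's strategy (Proposition~\ref{prop:pure} for pure DP, Lemma~\ref{lem:adp_bayes} and Proposition~\ref{prop:approx} for approximate DP). Your anticipated obstacle, however, is a non-issue: the informal $O(\delta)$ in the statement hides an $m$-dependence; the paper's formal bound (Theorem~\ref{thm:main}) carries an additive term $\alpha\cdot 2m\delta$ with $\alpha\le 1$, so the ``naive'' $O(m\delta)$ you dismiss already suffices for the informal claim. The paper's refinement is not to remove the $m$ factor but to shrink the constant: instead of a single global bad event, it lets $F$ count the coordinates where the per-step event fails, uses $W \le \widetilde W + F$ (each failure can inflate $W$ by at most $|T_i|\le 1$) together with $\mathbb{E}[F]\le 2m\delta$, and then optimizes over the worst-case distribution of $F$ via LP duality, obtaining $\alpha = \max_{i\in[m]} \tfrac{1}{i}\bigl(\Pr[\check W^*\ge v-i]-\Pr[\check W^*\ge v]\bigr)$; for typical $v$ this brings the additive term down to $O((m/r)\delta)$.
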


If we ignore $\delta$ for the moment, Theorem \ref{thm:main-informal} says that the number of correct guesses is stochastically dominated by $\mathsf{Binomial}\left(r,\frac{e^\varepsilon}{e^\varepsilon+1}\right)$, where $r=k_++k_-$ is the total number of guesses. This binomial distribution is precisely the distribution of correct guesses we would get if $T$ was obtained by independently performing $(\varepsilon,0)$-DP randomized response on $r$ bits of $S$.
In other words, the theorem says that $(\varepsilon,0)$-DP randomized response is the worst-case algorithm in terms of the number of correct guesses. In particular, this means the theorem is tight (when $\delta=0$)

The binomial distribution is well-concentrated. In particular, for all $\beta \in (0,1)$, we have 
\begin{equation}
    \pr{\check{W} \gets \mathsf{Binomial}\left(r,\frac{e^\varepsilon}{e^\varepsilon+1}\right)}{\check{W} \ge \underbrace{\frac{r \cdot e^\varepsilon}{e^\varepsilon+1} + \sqrt{\frac12 \cdot r \cdot \log(1/\beta)}}_{=v} } \le \beta.\label{eq:bin_hoeffding}
\end{equation}

There is an additional $O(\delta)$ term in the guarantee \eqref{eq:main-inf}. The exact expression for this term is somewhat complex. It is always $\le 2m\delta$, but it is much smaller than this for reasonable parameter values. In particular, for $v$ as in Equation \ref{eq:bin_hoeffding} with $\beta \le 1/r^4$, this term is $\le O(\tfrac{m}{r} \delta)$.

Theorem \ref{thm:main-informal} gives us a hypothesis test: If $\mathcal{A}$ is $(\varepsilon,\delta)$-DP, then the number of correct guesses $W$ is $\le \frac{r \cdot e^\varepsilon}{e^\varepsilon+1} + O(\sqrt{r})$ with high probability. Thus, if the observed number of correct guesses $v$ is larger than this, we can reject the hypothesis that $\mathcal{A}$ satisfies $(\varepsilon,\delta)$-DP. 
We can convert this hypothesis test into a confidence interval (i.e., a lower bound on $\varepsilon$) by finding the largest $\varepsilon$ that we can reject at a desired level of confidence; see Section \ref{sec:hypt_est}.

\section{Related Work}
The goal of privacy auditing is to empirically estimate the privacy provided by an algorithm, typically to accompany a formal privacy guarantee. Early work on auditing has often been motivated by trying to identify bugs in the implementations of differentially private data analysis algorithms~\cite{ding2018detecting, bichsel2018dp}.

Techniques for auditing differentially private machine learning typically rely on conducting some form of membership inference attack \cite{shokri2017membership};\footnote{\citet{shokri2017membership} coined the term ``membership inference attack'' and were the first to apply such attacks to machine learning systems. However, similar attacks were developed for applications to genetic data \cite{homer2008resolving,sankararaman2009genomic,dwork2015robust} and in cryptography \cite{boneh1998collusion,tardos2008optimal}. } these attacks are designed to detect the presence or absence of an individual example in the training set. Essentially, a membership inference attack which achieves some true positive rate (TPR) and false positive rate (FPR) gives a lower bound on the privacy parameter $\varepsilon\ge\log_e(\text{TPR}/\text{FPR})$ (after ensuring statistical validity of the TPR and FPR estimates). 

\citet{jayaraman2019evaluating} use standard membership inference attacks to evaluate different privacy analysis algorithms. \citet{jagielski2020auditing} consider inferring membership of worst-case ``poisoning'' examples to conduct stronger membership inference attacks and understand the tightness of privacy analysis. \citet{nasr2021adversary} measure the tightness of privacy analysis under a variety of threat models, including showing that the DP-SGD analysis is tight in the threat model assumed by the standard DP-SGD analysis.

Improvements to auditing have been made in a variety of directions. For example, \citet{nasr2023tight} and \citet{maddock2022canife} take advantage of the iterative nature of DP-SGD, auditing individual steps to understand privacy of the end-to-end algorithm. Improvements have also been made to the basic statistical techniques for estimating the $\varepsilon$ parameter, for example by using Log-Katz confidence intervals \cite{lu2022general}, Bayesian techniques \cite{zanella2022bayesian}, or auditing algorithms in different privacy definitions \cite{nasr2023tight}.
\citet{andrew2023one} build on the observation that, when performing membership inference, analyzing the case where the data is not included does not require re-running the algorithm; instead we can re-sample the excluded data point; if the data points are i.i.d.~from a nice distribution, this permits closed-form analysis of the excluded case.

A recent heuristic proposed to improve the efficiency of auditing is performing membership inference on multiple examples simultaneously. This heuristic was proposed by \citet{malek2021antipodes}, and evaluated more rigorously by \citet{zanella2022bayesian}. However, this heuristic is not theoretically justified, as the TPR and FPR estimates are not based on independent samples. In our work, we provide a proof of the validity of this heuristic. In fact, with this proof, we show for the first time that standard membership inference attacks, which attack multiple examples per training run, can be used for auditing analysis; prior work using these attacks must make an independence assumption. As a result, auditing can take advantage of progress in the membership inference field \citep{carlini2022membership, wen2022canary}.

\section{Background}
We briefly review some standard background material. Readers may wish to skip to the next section and revisit this only if necessary.
\subsection{Differential Privacy}They 
We recite the definitions of differential privacy and some relevant relaxations. For detailed background, see the tutorial by \citet{vadhan2017complexity} or the textbook by \citet{dwork2014algorithmic}.
\begin{definition}[Differential Privacy {\cite{dwork2006calibrating,dwork2006our}}]
    Let $M : \mathcal{X}^* \to \mathcal{Y}$ be a randomized algorithm, where $\mathcal{X}^* = \bigcup_{n \ge 0} \mathcal{X}^n$.
    We say $M$ is $(\varepsilon,\delta)$-differentially private ($(\varepsilon,\delta)$-DP) if, for all $x,x' \in \mathcal{X}^*$ differing only by the addition or removal of one element, we have \[ \forall S \subset \mathcal{Y} ~~~~~ \pr{}{M(x) \in S} \le e^\varepsilon \cdot \pr{}{M(x') \in S} + \delta.\]
\end{definition}
\begin{definition}[R\'enyi Differential Privacy {\cite{mironov2017renyi}}]
    We say $M : \mathcal{X}^* \to \mathcal{Y}$ is $(\alpha,\check\varepsilon)$-R\'enyi differentially private ($(\alpha,\check\varepsilon)$-RDP) if, for all $x,x' \in \mathcal{X}^*$ differing only by the addition or removal of one element, we have \[ \dr{\alpha}{M(x)}{M(x')} \le \check\varepsilon,\] where $\dr{\alpha}{P}{Q} := \frac{1}{\alpha-1}\log\ex{Y \gets P}{\left(\frac{P(Y)}{Q(Y)}\right)^{\alpha-1}}$ denotes the R\'enyi divergence of order $\alpha$.
\end{definition}
\begin{definition}[Concentrated Differential Privacy {\cite{dwork2016concentrated,bun2016concentrated}}]
    We say $M : \mathcal{X}^* \to \mathcal{Y}$ is $\rho$-zero concentrated differentially private ($\rho$-zCDP) if, for all $x,x' \in \mathcal{X}^*$ differing only by the addition or removal of one element, we have \[ \forall \alpha>1 ~~~~~ \dr{\alpha}{M(x)}{M(x')} \le \alpha \cdot \rho.\]
\end{definition}

\begin{remark}\label{rem:addremove}
    In this paper, we focus on to the addition or removal notion of DP, rather than replacement. (In Appendix \ref{sec:replace_dp}, we consider replacement.)
    Note that, in our theoretical analysis, we consider DP algorithms of the form $M:\{0,1\}^m \to \mathcal{Y}$. In this case, DP is with respect to flipping one of the input bits, as each bit indicates whether some example is included or excluded.
\end{remark}

The main property of DP that we use is invariance under \emph{postprocessing}. That is, if $M : \mathcal{X}^* \to \mathcal{Y}$ satisfies DP and $F : \mathcal{Y} \to \mathcal{Z}$ is an arbitrary function, then $F \circ M : \mathcal{X}^
* \to \mathcal{Z}$ also satisfies DP with the same parameters.

\paragraph{Gaussian Mechanism}
A common method for achieving DP is Gaussian noise addition. The following gives the optimal DP guarantee for the Gaussian mechanism.

\begin{lemma}[{\cite[Theorem 8]{balle2018improving}}]\label{lem:gauss}
    Let $q : \mathcal{X}^* \to \mathbb{R}$ be a function with sensitivity $\Delta : = \sup_{x,x'} |q(x)-q(x')|$. (In the supremum $x,x'\in\mathcal{X}^*$ are restricted to differ only by the addition or removal of one element.)
    Fix $\sigma^2>0$ and let $\rho := \Delta^2/2\sigma^2$.
    Define $M : \mathcal{X}^* \to \mathbb{R}$ by $M(x) = \mathcal{N}(q(x),\sigma^2)$.
    Then, for any $\varepsilon \ge 0$, the algorithm $M$ satisfies $(\varepsilon,\delta)$-DP with \[\delta = \overline\Phi\left(\frac{\varepsilon-\rho}{\sqrt{2\rho}}\right) - e^\varepsilon \cdot \overline\Phi\left(\frac{\varepsilon+\rho}{\sqrt{2\rho}}\right),\] where $\overline\Phi(z) := \pr{Z \gets \mathcal{N}(0,1)}{Z>z} = \frac{1}{\sqrt{2\pi}} \int_z^\infty \exp(-x^2/2) \mathrm{d}x$.
    Furthermore, $M$ satisfies $\rho$-zCDP.
\end{lemma}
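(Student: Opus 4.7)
The idea is to reduce the multivariate privacy question to a one-dimensional calculation between two Gaussians with means $\Delta$ apart, and then to express $\delta(\varepsilon)$ as a functional of the privacy loss random variable whose distribution is itself Gaussian.

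\textbf{Step 1 (Reduction to a canonical pair).} For any admissible $x,x'$, the distributions $M(x) = \mathcal{N}(q(x),\sigma^2)$ and $M(x') = \mathcal{N}(q(x'),\sigma^2)$ are one-dimensional Gaussians whose means differ by some $\Delta' := |q(x)-q(x')| \le \Delta$. Translating coordinates and using the fact that the bound $\delta(\varepsilon)$ we are about to derive is monotonically non-decreasing in $\Delta'$ (since $\rho = \Delta'^2/2\sigma^2$ only grows), it suffices to establish the bound for the canonical pair $P = \mathcal{N}(0,\sigma^2)$ and $Q = \mathcal{N}(\Delta,\sigma^2)$. Symmetry of the formula in swapping $P \leftrightarrow Q$ (replace $\Delta$ by $-\Delta$) then yields both directions of the DP inequality, taking care of the addition-or-removal definition.

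\textbf{Step 2 (Privacy loss and change of measure).} Write $L(z) := \log(p(z)/q(z))$ for the privacy loss random variable, where $p,q$ are the densities of $P,Q$. The standard identity
\[
\delta(\varepsilon) \;=\; \sup_S \bigl( P(S) - e^\varepsilon Q(S) \bigr) \;=\; \ex{z \sim P}{\max\{0,\, 1 - e^{\varepsilon - L(z)}\}}
\]
(attained by $S = \{z : L(z) > \varepsilon\}$) rearranges as
\[
\delta(\varepsilon) \;=\; \pr{z \sim P}{L(z) > \varepsilon} \;-\; e^\varepsilon \cdot \ex{z \sim P}{\mathbb{1}[L(z) > \varepsilon] \cdot e^{-L(z)}}.
\]
Since $e^{-L(z)} = q(z)/p(z)$, the second expectation is a change of measure, equal to $\pr{z \sim Q}{L(z) > \varepsilon}$.

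\textbf{Step 3 (Gaussian calculation).} For the canonical pair, $L(z) = \rho - z\Delta/\sigma^2$. Under $z \sim P$, the quantity $z\Delta/\sigma^2$ is $\mathcal{N}(0, 2\rho)$, so $L \sim \mathcal{N}(\rho, 2\rho)$. Under $z \sim Q$, it is $\mathcal{N}(2\rho, 2\rho)$, so $L \sim \mathcal{N}(-\rho, 2\rho)$. Substituting the Gaussian tail $\overline\Phi$ into the display of Step 2 yields exactly
\[
\delta(\varepsilon) \;=\; \overline\Phi\!\left(\frac{\varepsilon - \rho}{\sqrt{2\rho}}\right) \;-\; e^\varepsilon \cdot \overline\Phi\!\left(\frac{\varepsilon + \rho}{\sqrt{2\rho}}\right).
\]

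\textbf{Step 4 (zCDP).} A direct computation of the Rényi divergence of two Gaussians with equal variance gives $\dr{\alpha}{\mathcal{N}(\mu_1,\sigma^2)}{\mathcal{N}(\mu_2,\sigma^2)} = \tfrac{\alpha (\mu_1 - \mu_2)^2}{2\sigma^2}$, which is at most $\alpha \cdot \rho$ for any admissible pair, giving $\rho$-zCDP.

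\textbf{Main obstacle.} There is no deep obstacle; the content is the change-of-measure trick in Step 2 that turns $\delta$ into a difference of tail probabilities, together with the observation that the privacy loss of a Gaussian mechanism is itself a Gaussian. The only point requiring care is the reduction in Step 1: one must verify that the formula is monotone in $\Delta'$ and symmetric under swapping $P$ and $Q$, so that a single inequality suffices to handle both directions of the addition/removal definition of DP.
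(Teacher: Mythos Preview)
The paper does not prove this lemma; it is quoted as background with a citation to \cite[Theorem 8]{balle2018improving} and used without further argument. Your proposal is correct and is essentially the proof given in that reference: reduce to a canonical one-dimensional Gaussian pair, express the optimal $\delta(\varepsilon)$ via the privacy loss random variable and a change of measure, then use that the privacy loss for Gaussians is itself Gaussian with mean $\pm\rho$ and variance $2\rho$. The zCDP claim is likewise the standard R\'enyi-divergence calculation for equal-variance Gaussians. There is nothing to compare against in the present paper, and no gap in your argument beyond the routine verification you already flagged (monotonicity of the formula in the sensitivity, and symmetry under swapping $P$ and $Q$).
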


\subsection{DP-SGD --  Differentially Private Stochastic Gradient Descent}

The algorithm whose privacy we are most interested in auditing is Differentially Private Stochastic Gradient Descent (DP-SGD, Algorithm \ref{alg:dpsgd}). This is the workhorse of private machine learning both in theory \cite{bassily2014private} and in practice \cite{abadi2016deep}.

\begin{algorithm}[h]
    \caption{DP-SGD --  Differentially Private Stochastic Gradient Descent}\label{alg:dpsgd}
    \begin{algorithmic}[1]
        \State \textbf{Input:} $x \in \mathcal{X}^n$
        \State \textbf{Model:} Loss function $f : \mathbb{R}^d \times \mathcal{X} \to \mathbb{R}$.
        \State \textbf{Parameters:} Number of iterations $\ell \ge 1$, clipping threshold $c>0$, noise multiplier $\sigma>0$, sampling probability $q \in (0,1]$, learning rate $\eta>0$.
        \State Initialize $w_0 \in \mathbb{R}^d$.
        \For{$t = 1, \cdots \ell$}
            \State Sample $S^t \subseteq [n]$ where each $i \in [n]$ is included independently with probability $q$.
            \State Compute $g_i^t = \nabla_{w^{t-1}} f(w^{t-1},x_i) \in \mathbb{R}^d$ for all $i \in S^t$.
            \State Clip $\hat{g}_i^t = \min\left\{1,\frac{c}{\|g_i^t\|_2}\right\} \cdot g_i^t \in \mathbb{R}^d$ for all $i \in S^t$.
            \State Sample $\xi^t  \in \mathbb{R}^d$ from $\mathcal{N}(0,\sigma^2c^2I)$.
            \State Sum $\tilde{g}^t = \xi^t + \sum_{i \in S^t} \hat{g}_i^t  \in \mathbb{R}^d$.
            \State Update $w^t = w^{t-1} - \eta \cdot \tilde{g}^t \in \mathbb{R}^d$.
        \EndFor
        \State \textbf{Output:} $w^0,w^1,\cdots,w^\ell$.
    \end{algorithmic}
\end{algorithm}

DP-SGD satisfies differential privacy. Much ink has been spilled precisely quantifying its privacy properties \cite[etc.]{mironov2019r,wang2019subsampled,koskela2020computing,gopi2021numerical,zhu2022optimal}. A simple guarantee is the following.

\begin{proposition}[\cite{mironov2019r,steinke2022composition}]\label{prop:rdp-dpsgd}
    DP-SGD (Algorithm \ref{alg:dpsgd}) satisfies $(2,\check\varepsilon)$-RDP for \[\check\varepsilon = \ell \cdot \log\left(1 + q^2 \cdot \left(\exp(1/\sigma^2)-1\right)\right) \approx \ell \cdot q^2 \cdot \frac{1}{\sigma^2}.\]
\end{proposition}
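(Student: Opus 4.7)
The plan is to chain together three standard RDP ingredients: a per-step Gaussian mechanism analysis, Poisson-subsampling amplification at order $\alpha = 2$, and composition across the $\ell$ iterations.

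First, I would analyze a single iteration of DP-SGD with the subsampling set $S^t$ treated as fixed. Conditioned on the previous iterate $w^{t-1}$, the update depends on the data only through the sum $\sum_{i \in S^t} \hat g_i^t$ to which Gaussian noise $\xi^t \sim \mathcal{N}(0,\sigma^2 c^2 I)$ is added. Since each clipped gradient $\hat g_i^t$ has $\ell_2$-norm at most $c$, the add/remove sensitivity of this sum is $\Delta = c$. Applying Lemma~\ref{lem:gauss} with noise standard deviation $\sigma c$ shows that this unsubsampled step is $\rho$-zCDP with $\rho = c^2/(2\sigma^2 c^2) = 1/(2\sigma^2)$, which in particular gives $(2, 1/\sigma^2)$-RDP at order $\alpha = 2$. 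Forming $w^t = w^{t-1} - \eta \tilde g^t$ and releasing it is just post-processing and does not degrade the guarantee.

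Second, I would re-introduce the Poisson subsampling. Each step of DP-SGD is the above Gaussian mechanism applied to a Poisson sample of rate $q$. The tight subsampling-amplification bound for RDP specialized to the integer order $\alpha = 2$, due to \cite{mironov2019r}, asserts that Poisson-subsampling a $(2, \check\varepsilon_0)$-RDP mechanism at rate $q$ yields a $(2, \log(1 + q^2(e^{\check\varepsilon_0}-1)))$-RDP mechanism. Plugging in $\check\varepsilon_0 = 1/\sigma^2$ gives a per-step guarantee of $(2, \log(1 + q^2 (e^{1/\sigma^2}-1)))$-RDP.

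Third, I would compose the $\ell$ iterations. Rényi divergences tensorize at a fixed order, so $\ell$ adaptive applications of a $(\alpha, \check\varepsilon_1)$-RDP mechanism form an $(\alpha, \ell \check\varepsilon_1)$-RDP mechanism (see \cite{steinke2022composition}). Applying this to the per-step bound yields exactly $\check\varepsilon = \ell \log(1 + q^2(e^{1/\sigma^2}-1))$, as claimed. The informal approximation $\check\varepsilon \approx \ell q^2 / \sigma^2$ then follows from $\log(1+x) \le x$ together with $e^{1/\sigma^2} - 1 \approx 1/\sigma^2$ when $\sigma$ is moderately large. The only step that is not elementary is the subsampling-amplification bound, for which I would simply quote \cite{mironov2019r}; the choice $\alpha = 2$ is precisely what makes the underlying second-moment calculation on a mixture of Gaussians close in the clean form $\log(1 + q^2(e^{\check\varepsilon_0} - 1))$, and the rest of the argument is bookkeeping (sensitivity, post-processing, and additive RDP composition).
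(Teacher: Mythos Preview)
The paper does not supply its own proof of this proposition; it simply quotes the result with citations to \cite{mironov2019r} and \cite{steinke2022composition}. Your three-step sketch (per-step Gaussian RDP, Poisson-subsampling amplification at order $\alpha=2$, additive composition over $\ell$ rounds) is exactly the derivation those references carry out, and your numerics are correct: the unsubsampled step is $1/(2\sigma^2)$-zCDP, hence $(2,1/\sigma^2)$-RDP, and the second-moment calculation for the Poisson mixture collapses to $\log(1+q^2(e^{1/\sigma^2}-1))$ at $\alpha=2$.

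One small caution on phrasing: the statement ``Poisson-subsampling a $(2,\check\varepsilon_0)$-RDP mechanism yields $(2,\log(1+q^2(e^{\check\varepsilon_0}-1)))$-RDP'' is cleanest in one direction of the neighboring relation (bounding $\dr{2}{(1-q)P+qQ}{P}$ via expanding the square); the reverse direction $\dr{2}{P}{(1-q)P+qQ}$ does not simplify to the same closed form for an arbitrary base mechanism. For the Gaussian mechanism the two directions coincide by symmetry, so nothing is lost here, but if you present the amplification step as a black-box lemma it is safer to state it for the sampled Gaussian mechanism specifically (as \cite{mironov2019r} do) rather than for arbitrary RDP mechanisms.
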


If $\check\varepsilon \le 1$, then DP-SGD should provide meaningful privacy protection. 
In particular, $(2,\check\varepsilon)$-RDP implies that membership inference has a maximum accuracy (in the balanced case) of %
\begin{equation}
    \frac12 +\frac12 \sqrt{\frac{e^{\check\varepsilon}-1}{e^{\check\varepsilon}+3}} \approx \frac12 + \frac14 \sqrt{\check\varepsilon}.
\end{equation}
Our goal is to audit this guarantee.

\subsection{Hypothesis Testing \& Statistical Estimation}\label{sec:hypt_est}

Our goal is to estimate the privacy parameters of the algorithm that we are auditing. As prior work has noted~\cite{ding2018detecting, jagielski2020auditing}, this task can be framed as statistical estimation, with a goal of outputting a statistical lower bound on the privacy parameters. These lower bounds will have a corresponding confidence level, roughly representing the probability that the lower bound could have been produced even when analyzing an algorithm with perfect privacy. As empirical methods, it is impossible to have 100\% confidence in our methods, so we will generally use 95\% confidence in our experiments, comparable to the use of $p<0.05$ in science literature.

To be precise, our auditor runs the algorithm $M$ and outputs $\varepsilon_
{\text{LB}} \ge 0$ with the following guarantee. If $M$ satisfies $(\varepsilon_{\text{true}},\delta)$-DP, then, with probability at least $1-\beta$, we have $\varepsilon_{\text{LB}} \le \varepsilon_{\text{true}}$. Here $1-\beta$ is the confidence level and $\delta \ge 0$ is fixed.
Note that this is a frequentist guarantee, rather than a Bayesian guarantee. That is, the probability is with respect to our auditing procedure, rather than a statement about our beliefs about $M$. 

We can also view this in terms of hypothesis testing. Here we start with a ``null hypothesis'' that $M$ satisfies $(\varepsilon_{\text{null}},\delta)$-DP and the auditor's goal is to test this hypothesis by running $M$.
If the auditor rejects this null hypothesis, then this gives us a lower bound $\varepsilon_{\text{LB}} = \varepsilon_{\text{null}}$.

The difference between hypothesis testing and statistical estimation is that a hypothesis test starts with a given $\varepsilon_{\text{null}}$ and outputs a binary decision to reject or not, while an estimator outputs a number $\varepsilon_{\text{LB}}$. However, we can convert between these:

\begin{lemma}\label{lem:testci}
    For each $M$, let $A_M \in \Omega$ be a random variable and let $P_M \in \mathbb{R}$ be a fixed number.
    For each $\varepsilon,\beta > 0$, let $T_{\varepsilon,\beta} \subset \Omega$ satisfy 
    \begin{equation}
        \forall M ~~~~~ \left( P_M = \varepsilon ~~\implies~~ \pr{}{A_M \in T_{\varepsilon,\beta}}\le\beta \right) . \label{eq:hyptest}
    \end{equation}
    Further suppose that, if $\varepsilon_1 \le \varepsilon_2$, then $T_{\varepsilon_1,\beta} \supset T_{\varepsilon_2,\beta}$.
    Then, for all $M$ and all $\beta > 0$,
    \begin{equation}
        \pr{}{P_M \ge \sup\left\{\varepsilon > 0 : A_M \in T_{\varepsilon,\beta} \right\}} \ge 1 - \beta. \label{eq:ci}
    \end{equation}
\end{lemma}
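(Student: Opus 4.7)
The plan is to prove Lemma \ref{lem:testci} by a short contrapositive argument that uses the monotonicity hypothesis on the family $\{T_{\varepsilon,\beta}\}_\varepsilon$ to ``slide'' the $\varepsilon$ at which we invoke the hypothesis test \eqref{eq:hyptest} onto the true value $P_M$.

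Fix an arbitrary $M$ and write $\varepsilon^\star := P_M$ and $\widehat{\varepsilon} := \sup\{\varepsilon > 0 : A_M \in T_{\varepsilon,\beta}\}$, viewed as a random quantity depending on $A_M$. The desired inequality \eqref{eq:ci} is equivalent to $\pr{}{\widehat{\varepsilon} > \varepsilon^\star} \le \beta$, so the entire task reduces to upper bounding this probability.

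First I would establish the deterministic event inclusion $\{\widehat{\varepsilon} > \varepsilon^\star\} \subseteq \{A_M \in T_{\varepsilon^\star,\beta}\}$. On the event $\widehat{\varepsilon} > \varepsilon^\star$, pick any $\alpha \in (\varepsilon^\star, \widehat{\varepsilon})$; since $\alpha$ is strictly less than the supremum, it cannot be an upper bound of the set, so there exists $\varepsilon' > \alpha > \varepsilon^\star$ with $A_M \in T_{\varepsilon',\beta}$. The monotonicity hypothesis, applied with $\varepsilon_1 = \varepsilon^\star \le \varepsilon' = \varepsilon_2$, gives $T_{\varepsilon^\star,\beta} \supset T_{\varepsilon',\beta}$, and therefore $A_M \in T_{\varepsilon^\star,\beta}$, which is the claimed inclusion. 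Second, I would apply the hypothesis \eqref{eq:hyptest} to this same $M$ at $\varepsilon := \varepsilon^\star = P_M$; since the premise $P_M = \varepsilon^\star$ holds by definition, we get $\pr{}{A_M \in T_{\varepsilon^\star,\beta}} \le \beta$. Composing these two steps yields $\pr{}{\widehat{\varepsilon} > \varepsilon^\star} \le \beta$, which is exactly \eqref{eq:ci}.

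There is no substantive obstacle: all the probabilistic content is compressed into one invocation of \eqref{eq:hyptest}, and everything else is just unpacking the definition of supremum plus one use of monotonicity. The only bookkeeping concerns mild edge cases that do not affect the conclusion: if $\{\varepsilon > 0 : A_M \in T_{\varepsilon,\beta}\}$ is empty, then under the natural convention $\widehat{\varepsilon} \le 0 < \varepsilon^\star$, so $\{\widehat{\varepsilon} > \varepsilon^\star\}$ is vacuous and the bound holds trivially; and the argument implicitly uses $\varepsilon^\star > 0$, which holds in every intended application since $P_M$ represents a true DP parameter.
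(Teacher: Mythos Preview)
Your proposal is correct and follows essentially the same approach as the paper's proof: both show the event inclusion $\{\widehat{\varepsilon} > P_M\} \subseteq \{A_M \in T_{P_M,\beta}\}$ via monotonicity and then invoke \eqref{eq:hyptest} at $\varepsilon = P_M$. The paper expresses the inclusion slightly more compactly as $A_M \in \bigcup_{\varepsilon \ge P_M} T_{\varepsilon,\beta} = T_{P_M,\beta}$, but the logic is identical.
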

\begin{proof}
    Fix a realization of $A_M$ and suppose $P_M < \sup\left\{\varepsilon > 0 : A_M \in T_{\varepsilon,\beta} \right\}$. Then there exists some $\varepsilon \ge P_M$ with $A_M \in T_{\varepsilon,\beta}$ and, hence, \[ A_M \in \bigcup_{\varepsilon \ge P_M} T_{\varepsilon,\beta} = T_{P_M,\beta}.\]
    The equality above follows from our monotonicity assumption on $T$.
    Thus \[\pr{}{P_M < \sup\left\{\varepsilon > 0 : A_M \in T_{\varepsilon,\beta} \right\}} \le \pr{}{A_M \in T_{P_M,\beta}} \le \beta,\] as required.
\end{proof}
To interpret Lemma \ref{lem:testci}, $M$ is an algorithm and $P_M$ is the ``true'' privacy parameter $\varepsilon$ that it satisfies. (We're considering $\delta$ to be fixed.) The random variable $A_M$ is the output of our auditing procedure applied to $M$. (This is our test statistic in the language of hypothesis testing.) The hypothesis test's rejection set is $T_{\varepsilon,\beta}$ and Equation \ref{eq:hyptest} guarantees that, if $M$ is indeed $(\varepsilon,\delta)$-DP (i.e., the null hypothesis is true), then the probability that we reject the null hypothesis is at most $\beta$.
Equation \ref{eq:ci} then shows how to estimate the true privacy parameter $P_M$ from $A_M$; we simply take the largest $\varepsilon$ for which we can reject the corresponding null hypothesis.

Note that Lemma \ref{lem:testci} needs to make a technical monotonicity assumption. In our setting this simply means that, if a given realization of the test statistic $A_M$ allows us to reject the null hypothesis that $M$ is $(\varepsilon_2,\delta)$-DP and $\varepsilon_1 \le \varepsilon_2$, then we can also reject the null hypothesis that $M$ is $(\varepsilon_1,\delta)$-DP. 

\subsection{Stochastic Dominance}

In our theoretical analysis we use the concept of stochastic dominance. Specifically, we use this to formalize the ``worst-case'' DP algorithm for auditing.

\begin{definition}[Stochastic Dominance]
    Let $X, Y \in \mathbb{R}$ be random variables. We say $X$ is stochastically dominated by $Y$ (or $Y$ stochastically dominates $X$) if $\pr{}{X>t} \le \pr{}{Y>t}$ for all $t \in \mathbb{R}$. 
    Equivalently, $X$ is stochastically dominated by $Y$ if there exists a coupling (i.e., a joint distribution that matches the marginal distributions of $X$ and $Y$) such that $\pr{}{X \le Y}=1$.
\end{definition}

Stochastic dominance is preserved under sums/convolutions:

\begin{lemma}\label{lem:sumdominance}
    Suppose $X_1$ is stochastically dominated by $Y_1$.
    Suppose that, for all $x \in \mathbb{R}$, the conditional distribution $X_2|X_1=x$ is stochastically dominated by $Y_2$.
    Assume that $Y_1$ and $Y_2$ are independent.
    Then $X_1+X_2$ is stochastically dominated by $Y_1+Y_2$.
\end{lemma}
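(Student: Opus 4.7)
The plan is to use the coupling characterization of stochastic dominance from the definition: build a joint distribution of $(X_1, X_2, Y_1, Y_2)$ under which $X_1 \le Y_1$ and $X_2 \le Y_2$ both hold almost surely, the $(X_1, X_2)$ marginal matches the given joint law of $X_1$ and $X_2 \mid X_1$, and the $(Y_1, Y_2)$ marginal is the product of the marginals of $Y_1$ and $Y_2$ (so that $Y_1+Y_2$ has the target distribution). Summing the two pointwise inequalities then gives $X_1+X_2 \le Y_1+Y_2$ almost surely, which by the coupling direction of the definition is exactly stochastic dominance of the sums.

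The construction proceeds in three pieces. First, by hypothesis there is a coupling $\mu_1$ of $X_1$ and $Y_1$ with $X_1 \le Y_1$ almost surely; and for each fixed $x$ there is a coupling $\mu_{2,x}$ of the conditional law of $X_2 \mid X_1 = x$ with the law of $Y_2$ such that the first coordinate is almost surely at most the second. From $\mu_{2,x}$ I extract the conditional kernel $K_x(\cdot \mid y)$ describing the first coordinate given the second. The joint sample is then: draw $(X_1, Y_1) \sim \mu_1$; independently draw $Y_2$ from its marginal; and finally draw $X_2$ from $K_{X_1}(\cdot \mid Y_2)$. The $(X_1, Y_1)$ marginal is $\mu_1$ by construction; the $(Y_1, Y_2)$ marginal is the product, because $Y_2$ was drawn independently of the first step; and integrating $Y_2$ out of the third step against its marginal recovers the conditional law of $X_2 \mid X_1 = x$, since under $\mu_{2,x}$ the second coordinate already has the $Y_2$ marginal. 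The inequalities $X_1 \le Y_1$ and $X_2 \le Y_2$ hold by the defining properties of $\mu_1$ and $\mu_{2,X_1}$ respectively, so $X_1 + X_2 \le Y_1 + Y_2$ almost surely.

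The one delicate point is the measurable dependence of $\mu_{2,x}$ (and hence $K_x$) on $x$; for real-valued random variables this is standard via quantile coupling or Borel disintegration, so it is not a genuine obstruction. If one prefers to avoid the kernel construction altogether, a cleaner alternative proof goes through the equivalent characterization that $X$ is stochastically dominated by $Y$ iff $\ex{}{\phi(X)} \le \ex{}{\phi(Y)}$ for every bounded non-decreasing $\phi$: condition on $X_1$ to obtain $\pr{}{X_1+X_2 > t} \le \ex{X_1}{\pr{}{Y_2 > t - X_1}}$, observe that $\phi(x) := \pr{}{Y_2 > t - x}$ is non-decreasing in $x$, apply the characterization to the domination $X_1 \prec Y_1$, and finally use the independence of $Y_1$ and $Y_2$ to rewrite $\ex{Y_1}{\pr{}{Y_2 > t - Y_1}}$ as $\pr{}{Y_1+Y_2 > t}$.
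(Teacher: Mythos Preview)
Your proposal is correct. Your primary argument via an explicit coupling is a genuinely different route from the paper's proof, which instead manipulates tail probabilities directly: for each $t$ the paper writes $\pr{}{X_1+X_2>t}=\ex{X_1}{\prc{}{X_2>t-X_1}{X_1}}$, bounds the inner probability by $\pr{}{Y_2>t-X_1}$ using the conditional dominance, swaps the order of integration by Fubini (treating $Y_2$ as an independent fresh variable) to get $\ex{Y_2}{\pr{}{X_1>t-Y_2}}$, and then replaces $X_1$ by $Y_1$ using the remaining dominance and independence. This is exactly your ``cleaner alternative'' in the final paragraph, so you anticipated the paper's argument even if you did not lead with it. The tradeoff is that the paper's five-line computation avoids entirely the measurable-selection issue you flag (choosing $x\mapsto\mu_{2,x}$ measurably), whereas your coupling construction is more conceptual and makes the ``worst case is independent Bernoulli'' picture used later in the paper (e.g., in the induction for Proposition~\ref{prop:pure}) feel more inevitable. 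Both approaches are standard; the paper simply opts for the one with fewer technical prerequisites.
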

\begin{proof}
    For all $t \in \mathbb{R}$, we have
    \begin{align*}
        \pr{}{X_1+X_2>t} &= \ex{X_1}{\prc{X_2}{X_2>t-X_1}{X_1}} \\
        &\le \ex{X_1}{\pr{Y_2}{Y_2 > t - X_1}} \tag{$Y_2$ dominates $X_2|X_1$} \\
        &= \ex{Y_2}{\pr{X_1}{X_1 > t - Y_2}} \\
        &\le \ex{Y_2}{\pr{Y_1}{Y_1 > t - Y_2}} \tag{$Y_1$ dominates $X_1$ \& independence} \\
        &= \pr{}{Y_1 + Y_2 > t}.
    \end{align*}
\end{proof}

\section{Theoretical Analysis}\label{sec:theory}

\begin{figure}[h]
    \centering
    \includegraphics[width=0.75\textwidth]{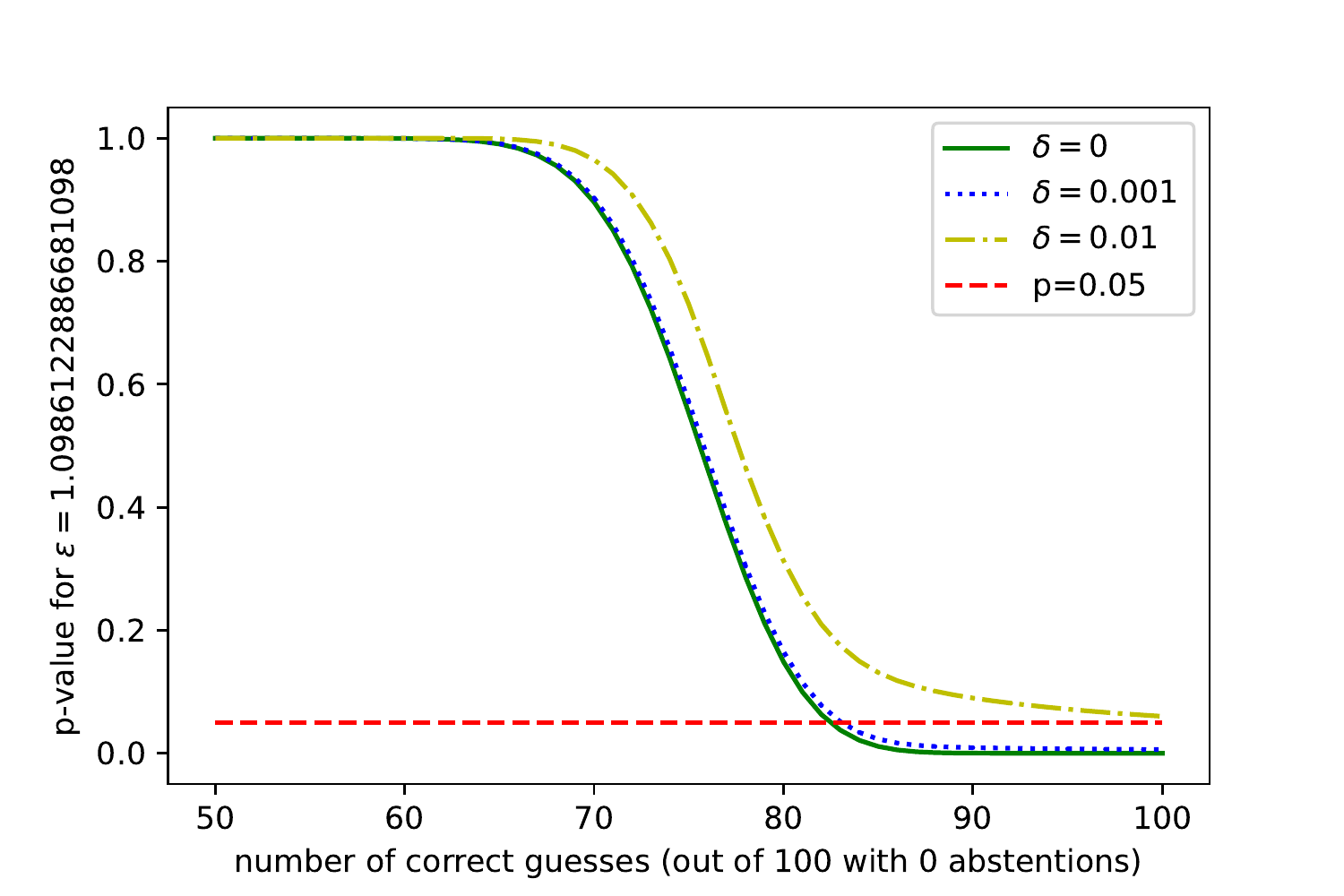}
    \caption{Theorem \ref{thm:main}'s p-value as the number of correct guesses changes for fixed $\varepsilon=\log 3$ (i.e., ideally 75\% of guesses correct). The total number of examples and guesses is 100.}
    \label{fig:v-p}
\end{figure}
\begin{figure}[h]
    \centering
    \includegraphics[width=0.75\textwidth]{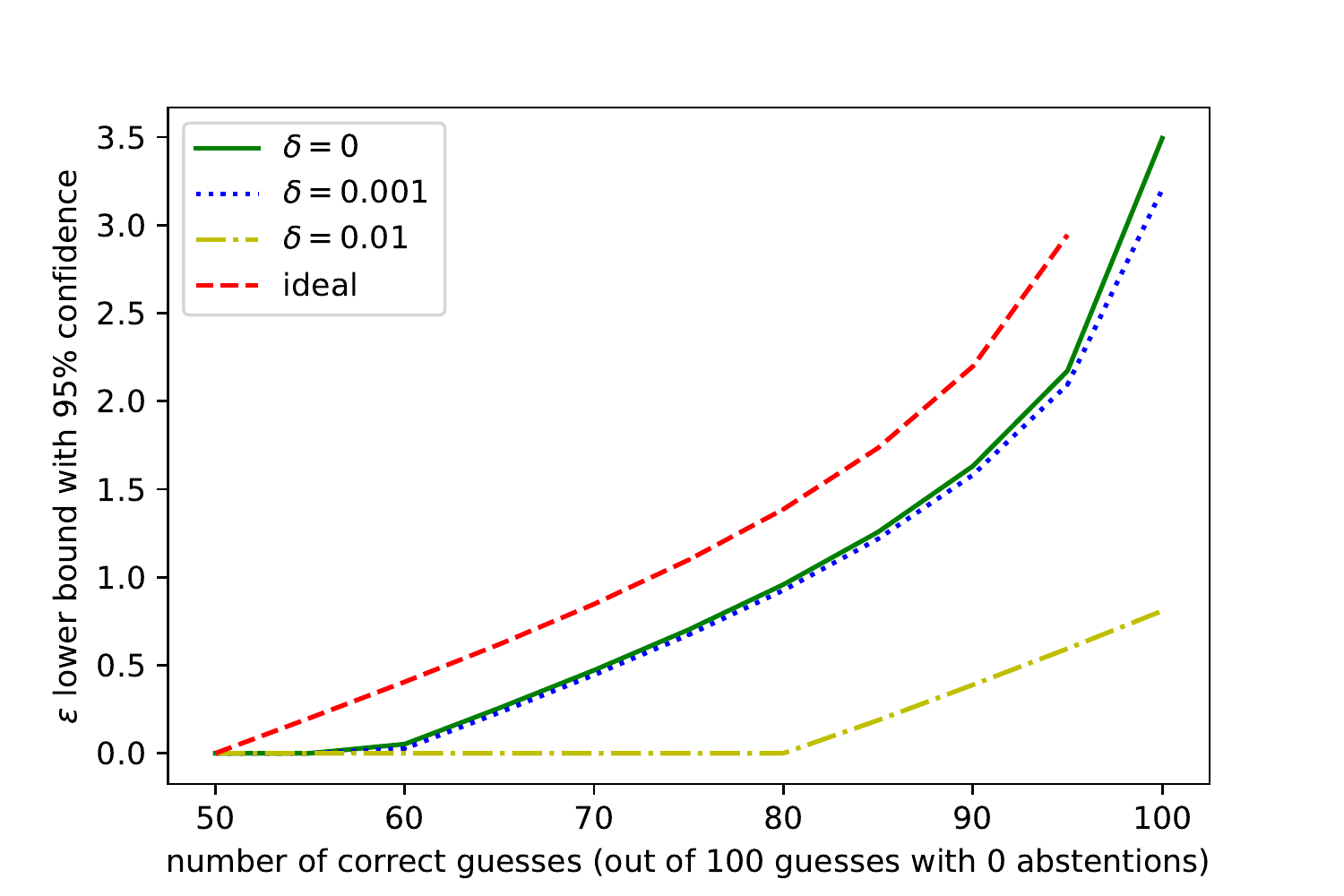}
    \caption{Lower bound on the privacy parameter $\varepsilon$ given by Theorem \ref{thm:main} with 95\% confidence as the number of correct guesses changes. The total number of examples and guesses is 100. For comparison, we plot the \texttt{ideal} $\varepsilon$ that gives $100 \cdot \frac{ e^\varepsilon}{e^\varepsilon+1}$ correct guesses. %
    }
    \label{fig:v-eps}
\end{figure}
\begin{figure}[h]
    \centering
    \includegraphics[width=0.75\textwidth]{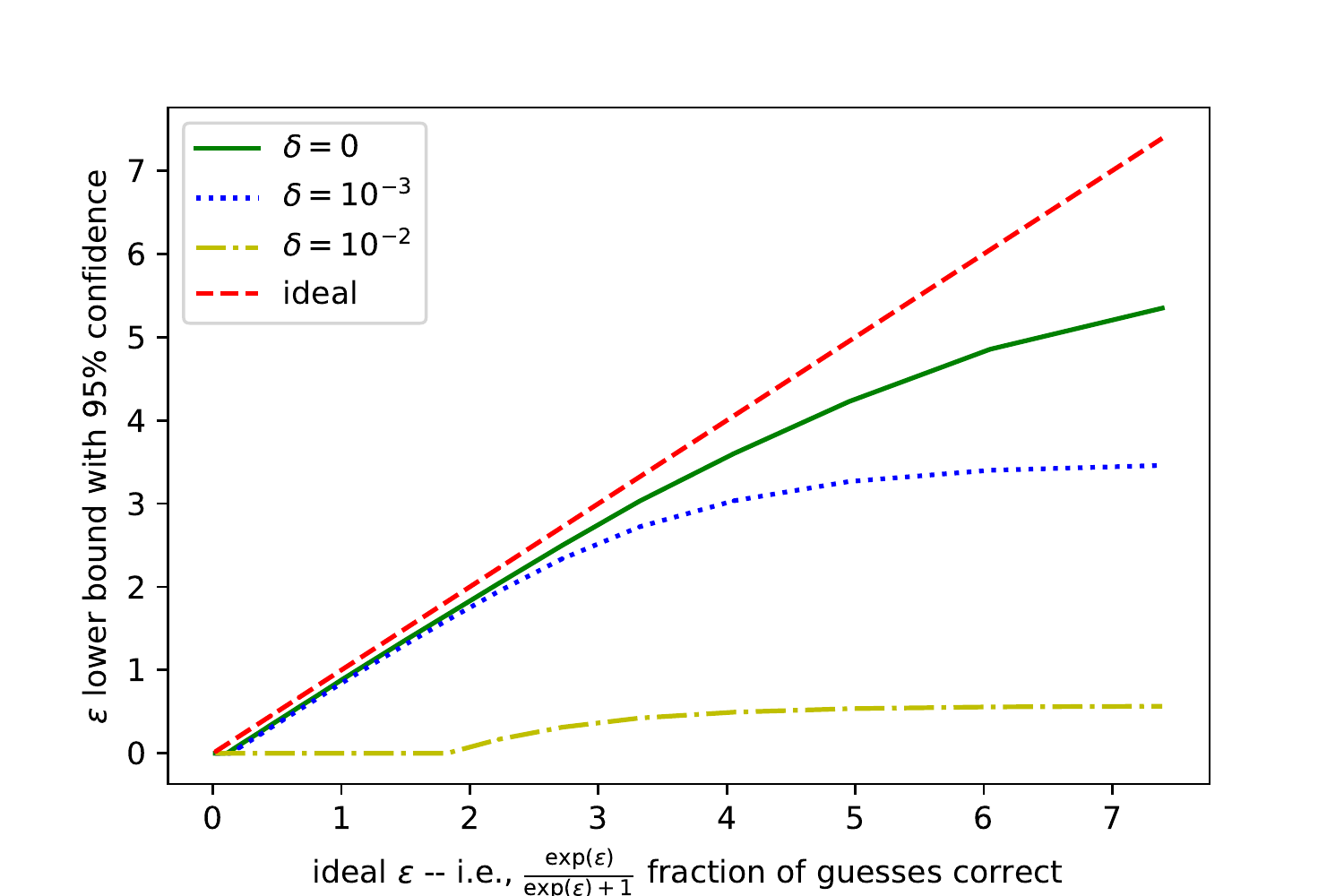}
    \caption{Lower bound on the privacy parameter $\varepsilon$ given by Theorem \ref{thm:main} with 95\% confidence as the number of correct guesses changes. The total number of examples and guesses is 1000 (with no abstentions). Here we plot the ideal $\varepsilon$ on the horizontal axis, so that the number of correct guesses is $1000 \cdot \frac{e^\varepsilon}{e^\varepsilon+1}$.}
    \label{fig:eps-eps}
\end{figure}

To analyze the results of our audit, we leverage the connection between DP and generalization \citep{dwork2015preserving,dwork2015generalization,bassily2016algorithmic,rogers2016max,jung2019new,steinke2020reasoning}.
Unfortunately, directly applying the existing results from the literature is unlikely to yield meaningful results, as the constants are not optimal.
Thus we provide an analysis of DP's generalization guarantees that is suitable for our application and which has sharp constants.

We consider the following formalism. The algorithm $M : \{-1,+1\}^m \to \mathbb{R}^m$ takes in a vector of bits and outputs a vector of ``guesses''. 
Each input bit indicates whether or not a particular example is included in or excluded from the dataset. 
In particular, the DP guarantee ensures that the outputs are indistinguishable if we flip one bit, which corresponds to adding or removing the corresponding data point.
Each coordinate of the output is a guess for the corresponding input bit; the sign of the score should match the corresponding input bit, while the magnitude is a reflection of the confidence.

The algorithm $M$ represents both the ``real'' algorithm (e.g., DP-SGD) and the auditor which postprocesses the output of the real algorithm into guesses. In this formalism, the examples themselves are considered fixed and not part of the input -- i.e., the examples are ``hardcoded'' into $M$. The algorithm $M$ is an abstraction for our analysis, rather than a realistic system. 

We evaluate the quantity \[W:=\sum_i^m \max\{ 0 , T_i \cdot S_i \},\] where $S$ is uniform on $\{-1,+1\}^m$ and $T=M(S)$. If $T_i$ and $S_i$ disagree in sign (i.e., the guess is wrong), then $\max\{ 0 , T_i \cdot S_i \} = 0$; if they agree (i.e., the guess is right), then $\max\{ 0 , T_i \cdot S_i \}=|T_i|$. That is, $W$ increases when we guess correctly and the increase is proportional to how much ``weight'' we placed on that guess. 
The auditor seeks to maximize $W$ and then we compare it to a baseline that is consistent with DP. (The analysis in this section focuses on computing this baseline.) Incorrect guesses do not increase $W$, but they do increase the baseline. Note that we can guess $T_i=0$, which amounts to abstaining from making a guess; this doesn't increase $W$, but also doesn't increase the baseline.

Our formalism is inspired by that of \citet{steinke2020reasoning}, who also restrict to binary inputs.
In contrast, most of the work connecting DP and generalization does not do this. The benefit of restricting to binary inputs which represent inclusion or exclusion of a data point is that it simplifies our analysis.

\subsection{Pure DP Analysis}\label{subsec:pure}
We first consider the pure DP ($\delta=0$) case, as it is considerably simpler than the general case.
We follow the analysis of \citet{jung2019new} with some refinement. Specifically, rather than relying on a Hoeffding bound, we show that it is stochastically dominated by a Binomial distribution.
This result is tight -- i.e., if $M$ independently performs a randomized response for each input bit, then the inequality becomes an equality. 

\begin{proposition}[Pure DP Version of Main Result]\label{prop:pure}
    Let $M : \{-1,+1\}^m \to \mathbb{R}^m$ satisfy $(\varepsilon,0)$-DP.
    Let $S \in \{-1,+1\}^m$ be uniformly random. Let $T=M(S) \in \mathbb{R}^m$.
    Then, for all $v \in \mathbb{R}$ and all $t \in \mathbb{R}^m$ in the support of $T$,\footnote{To be precise, this holds with probability 1, but may fail for $t$ in a set of measure zero under $T$. Note that $S \gets \{-1,1\}^m$ denotes that $S$ is uniform on the set $\{-1,1\}^m$ and $\check S \gets \mathsf{Bernoulli}\left(\frac{e^\varepsilon}{e^\varepsilon+1}\right)$ denotes that $\check S \in \{0,1\}^m$ has a product distribution with each coordinate having expectation $\frac{e^\varepsilon}{e^\varepsilon+1}$.}
    \[\prc{S \gets \{-1,1\}^m \atop T \gets M(S)}{\sum_i^m \max\{ 0 , T_i \cdot S_i\} \ge  v }{T=t} \le \pr{\check{S} \gets \mathsf{Bernoulli}\left(\frac{e^\varepsilon}{e^\varepsilon+1}\right)^m}{\sum_i^m \check{S}_i \cdot |t_i|  \ge  v} =: \beta(m,\varepsilon,v,t). \]%
\end{proposition}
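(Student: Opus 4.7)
The plan is to condition on the realization $T=t$ and show that, under this conditioning, the random variable $\sum_{i=1}^m \max\{0, T_i S_i\}$ — which equals $\sum_{i=1}^m \mathbf{1}[\operatorname{sign}(S_i) = \operatorname{sign}(t_i)] \cdot |t_i|$ — is stochastically dominated by $\sum_{i=1}^m \check S_i |t_i|$. Writing $C_i := \mathbf{1}[S_i = \operatorname{sign}(t_i)]$, coordinates with $t_i = 0$ contribute $0$ on both sides and can be dropped, so the target reduces to a statement about weighted sums of $\{0,1\}$-valued random variables with weights $|t_i|$.

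The heart of the argument is a per-coordinate posterior bound: for every $i$ and every fixing $s_{-i} \in \{-1,+1\}^{m-1}$,
\[
\prc{}{S_i = \operatorname{sign}(t_i)}{S_{-i} = s_{-i},\, T = t} \le \frac{e^\varepsilon}{e^\varepsilon + 1} =: p.
\]
This is immediate from $(\varepsilon,0)$-DP and Bayes' rule: DP applied to the pair of inputs differing in coordinate $i$ says the conditional density of $T$ given $(S_i = +1, S_{-i})$ versus $(S_i = -1, S_{-i})$ has ratio in $[e^{-\varepsilon}, e^\varepsilon]$ a.e., and since $S_i$ is uniform given $S_{-i}$, Bayes turns this density ratio bound into the stated posterior bound. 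The tower property then upgrades it: for any function $g$ of $S_{-i}$, conditioning additionally on $g(S_{-i})$ preserves the inequality. Taking $g(S_{-i}) = (C_1,\dots,C_{i-1})$ — well-defined because each $C_j$ for $j<i$ depends only on $S_j$ and on the fixed $t_j$ — yields
\[
\prc{}{C_i = 1}{C_1,\dots,C_{i-1},\, T=t} \le p.
\]

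To finish, I induct on $k$ using Lemma \ref{lem:sumdominance}. Let $A_k := \sum_{i=1}^k C_i |t_i|$ and $B_k := \sum_{i=1}^k \check S_i |t_i|$ with the $\check S_i$ independent Bernoulli$(p)$. Suppose inductively that $A_{k-1}$ conditional on $T=t$ is stochastically dominated by $B_{k-1}$. For any realization $a$ of $A_{k-1}$, averaging the per-coordinate bound over realizations of $(C_1,\dots,C_{k-1})$ consistent with $a$ gives $\prc{}{C_k = 1}{A_{k-1}=a,\, T=t} \le p$, so $C_k |t_k|$ is conditionally dominated by $\check S_k |t_k|$. Since $B_{k-1}$ and $\check S_k$ are independent, Lemma \ref{lem:sumdominance} delivers $A_k \mid T=t$ stochastically dominated by $B_k$, completing the induction and hence the proposition.

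The main subtlety is measure-theoretic: if $T$ is continuous then the conditioning event $\{T=t\}$ has probability zero, so the DP hypothesis must be interpreted as a density-ratio inequality holding for almost every $t$, and the posterior bound inherits that `a.e.' caveat — precisely what the proposition's footnote acknowledges. Once this point is granted, no further estimates are needed; the rest of the argument is a clean Bayes/tower manipulation glued together by Lemma \ref{lem:sumdominance}.
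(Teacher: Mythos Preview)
Your proposal is correct and follows essentially the same approach as the paper: establish the per-coordinate posterior bound $\prc{}{S_i=\operatorname{sign}(t_i)}{\text{(some conditioning on the other }S_j\text{'s)},\,T=t}\le \frac{e^\varepsilon}{e^\varepsilon+1}$ via Bayes and the DP density-ratio bound, then chain these together by induction using Lemma~\ref{lem:sumdominance}. The only cosmetic difference is that the paper conditions on the prefix $S_{<i}$ directly in the Bayes step, whereas you condition on all of $S_{-i}$ and then coarsen to $(C_1,\dots,C_{i-1})$ via the tower property; both routes are valid and yield the same conclusion.
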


Proposition \ref{prop:pure} is Bayesian: We condition on the output and then consider the probability that each guess was right. The vector $\check{S}$ should be seen as indicating whether each guess was right. The proposition says that, in the worst case, each guess is correct independently with probability $\frac{e^\varepsilon}{e^\varepsilon+1}$.

How do we use this result? Suppose we have conducted an audit and observed $s$ and $t$ as the output of Algorithm \ref{alg:audit}. Let $v= \sum_i^m \max\{ 0 , s_i \cdot t_i \}$. Following Lemma \ref{lem:testci}, we choose a desired confidence $1-\beta<1$ (e.g., $\beta=0.05$) and then we choose $\varepsilon\ge 0$ so that $\beta(m,\varepsilon,v,t)=\beta$. Then this value of $\varepsilon$ is our lower bound.

In the language of hypothesis testing, $W=\sum_i^m \max\{ 0 , T_i \cdot S_i\}$ is the test statistic and our null hypothesis is that $M$ is $\varepsilon$-DP. Under the null hypothesis we have $\pr{}{W \ge v}\le \beta(m,\varepsilon,v,t)$. Thus, if $v$ is the observed value of the test statistic, then $\beta(m,\varepsilon,v,t)$ is our p-value. And we can reject the null hypothesis if, say, $\beta(m,\varepsilon,v) \le 0.05$.

\begin{proof}
    Fix some $t\in\mathbb{R}^m$. 
    We now analyze the distribution of $S$ conditioned on $M(S)=t$. Note that the unconditional distribution of $S$ is uniform on $\{-1,+1\}^m$ and $M$ is $(\varepsilon,0)$-DP.
    We perform the analysis one bit at a time. Fix some $i \in [m]$ and $s_{<i} \in \{-1,+1\}^{i-1}$.
    By Bayes' law and $(\varepsilon,0)$-DP, 
    \begin{align*}
        &\pr{}{S_i=1 | M(S)=t, S_{<i}=s_{<i}} \\
        &~= \frac{\pr{}{M(S)=t|S_i=1, S_{<i}=s_{<i}}\cdot \pr{}{S_i=1|S_{<i}=s_{<i}}}{\pr{}{M(S)=t|S_{<i}=s_{<i}}} \\
        &~= \frac{\pr{}{M(S)=t|S_i=1,S_{<i}=s_{<i}}\cdot \pr{}{S_i=1}}{\pr{}{M(S)=t|S_i=1,S_{<i}=s_{<i}}\cdot \pr{}{S_i=1} + \pr{}{M(S)=t|S_i=-1,S_{<i}=s_{<i}}\cdot \pr{}{S_i=-1}}\\
        &~= \frac{\pr{}{M(S)=t|S_i=1,S_{<i}=s_{<i}}\cdot \frac12 }{\pr{}{M(S)=t|S_i=1,S_{<i}=s_{<i}}\cdot \frac12 + \pr{}{M(S)=t|S_i=-1,S_{<i}=s_{<i}}\cdot \frac12}\\
        &~= \frac{1}{1 + \pr{}{M(S)=t|S_i=-1,S_{<i}=s_{<i}}/\pr{}{M(S)=t|S_i=1,S_{<i}=s_{<i}}}\\
        &~\in \left[ \frac{1}{1+e^{\varepsilon}}, \frac{1}{1+e^{-\varepsilon}} \right].
    \end{align*}
    Thus  $ \pr{}{S_i = \sign(T_i) | T=t, S_{<i}=s_{<i}} \le \frac{1}{1+e^{-\varepsilon}} = \frac{e^\varepsilon}{e^\varepsilon+1}$.
    
    With this in hand, we can prove the result by induction. We assume inductively that $W_{m-1} := \sum_i^{m-1}  \max\{ 0 , T_i \cdot S_i \}$ is stochatiscally dominated by $\check{W}_{m-1} := \sum_i^{m-1} \check{S}_{i} \cdot |t_{i}| $ where $\check{S} \gets \mathsf{Bernoulli}\left(\frac{e^\varepsilon}{e^\varepsilon+1}\right)^{m-1} $.
    As above, conditioned on the value of $W_{m-1}$, the variable $\max\{ 0 , T_m \cdot S_m \} = |T_m| \cdot \mathbb{I}[S_m = \sign(T_m)]$ is stochastically dominated by $|T_m| \cdot \mathsf{Bernoulli}\left(\frac{e^\varepsilon}{e^\varepsilon+1}\right)$. By Lemma \ref{lem:sumdominance}, $W_m = W_{m-1}+\max\{ 0 , T_m \cdot S_m \}$ is stochastically dominated by $\check{W}_{m} := \sum_i^m \check{S}_{i} \cdot |t_{i}| $ where $\check{S} \gets \mathsf{Bernoulli}\left(\frac{e^\varepsilon}{e^\varepsilon+1}\right)^{m} $.
\end{proof}

\subsection{Approximate DP Analysis}\label{sec:approx}

We extend the pure DP analysis (\S\ref{subsec:pure}) to approximate DP ($\delta>0$).
This becomes quite messy. In the pure DP case, we can condition on an arbitrary output $t$. In the approximate DP case, some outputs are ``bad'' in the sense that the privacy loss is unbounded. 
To handle this we do two things: First, we require the guesses to be bounded (i.e., $T \in [-1,+1]^m$ instead of $T \in \mathbb{R}^m$), which ensures that bad outputs cannot skew things too much. Second, the guarantees we prove have an additional failure probability that depends on $\delta$.

Our analysis most closely resembles that of \citet{rogers2016max}. Essentially, we repeat the analysis for the pure DP case, but add some failure events, and carefully account for how much they can distort the results.

\begin{theorem}[Main Result]\label{thm:main}
    Let $M : \{-1,+1\}^m \to [-1,+1]^m$ satisfy $(\varepsilon,\delta)$-DP.
    Let $S \in \{-1,+1\}^m$ be uniformly random. Let $T=M(S) \in [-1,+1]^m$.
    Then, for all $v \in \mathbb{R}$,
    \begin{equation}
    \pr{S \gets \{-1,+1\}^m \atop T \gets M(S)}{\sum_i^m \max\{0, T_i \cdot S_i\} \ge  v} \le \beta + \alpha\cdot 2m\cdot \delta, 
    \end{equation}
    where     
    \begin{align}
        \beta &= \pr{\check{W}^*}{\check{W}^* \ge  v},\\
        \alpha &= \max \left\{ \frac{1}{i} \left( \pr{\check{W}^*}{\check{W}^* \ge  v-i} - \beta \right) : i \in \{1,2,\cdots,m\} \right\}.
    \end{align}
    Here $\check{W}^*$ is any distribution on $\mathbb{R}$ that stochastically dominates $\check{W}(t) := \sum_i^m \check{S}_i |t_i|$ for $\check{S} \gets \mathsf{Bernoulli}\left(\frac{e^\varepsilon}{e^\varepsilon+1}\right)^m$ for all $t$ in the support of $T$.

\end{theorem}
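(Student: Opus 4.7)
My plan is to adapt the bit-by-bit Bayesian induction of Proposition \ref{prop:pure} by introducing a per-coordinate ``failure event'' that carries the $\delta$-slack of approximate DP. As in the pure case, I would expand $\pr{}{W \ge v}$ by revealing one coordinate at a time, conditioning on the prefix $S_{<i}=s_{<i}$ and on $T$. Conditioning on $S_{<i}$ and marginalising over $S_{>i}$ yields an $(\varepsilon,\delta)$-DP map from $S_i$ to $T$ by post-processing, so the Bayes-ratio calculation from Proposition \ref{prop:pure} still applies pointwise whenever the density ratio is bounded. The first step is to exhibit, for each $i$ and each $s_{<i}$, a ``bad'' set of outputs $B_i(s_{<i})\subseteq[-1,+1]^m$ on which the pointwise bound $\pr{}{S_i=\sign(t_i)\mid T=t, S_{<i}=s_{<i}}\le \frac{e^\varepsilon}{e^\varepsilon+1}$ may fail, but whose posterior mass is controlled (with an overall constant factor of two coming from the two possible signs of $T_i$, or equivalently the two directions in which the DP inequality can fail).

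Next, let $N$ be the random number of coordinates $i$ for which the realised $(S_{<i},T)$ lies in $B_i(S_{<i})$. By linearity of expectation, the per-coordinate failure bound yields $\ex{}{N}\le 2m\delta$. Outside the failure sets, the Proposition \ref{prop:pure} induction, combined with Lemma \ref{lem:sumdominance}, shows that the contribution of the ``good'' coordinates to $W$ is stochastically dominated by $\check W^*$. This is where the hypothesis $T\in[-1,+1]^m$ enters: on a bad coordinate the contribution to $W$ is at most $|T_i|\le 1$, so a configuration with $N$ bad coordinates can exceed the good-case bound by at most $N$. Hence $W$ is stochastically dominated by $\check W^*+N$, with $\check W^*$ independent of $N$.

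Combining the two ingredients is a short calculation. Conditioning on $N$,
\[
\pr{}{W\ge v}\le\sum_{i=0}^m\pr{}{N=i}\cdot\pr{}{\check W^*\ge v-i} = \beta+\sum_{i=1}^m\pr{}{N=i}\,\bigl(\pr{}{\check W^*\ge v-i}-\beta\bigr).
\]
By the definition of $\alpha$, each bracket is at most $i\cdot\alpha$, so the sum telescopes to $\alpha\cdot\ex{}{N}\le 2m\delta\cdot\alpha$, which is exactly the error term claimed in the theorem.

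The main technical obstacle is the first step: rigorously defining $B_i(s_{<i})$ so that its posterior mass really is bounded appropriately, uniformly over $s_{<i}$, in the presence of continuous outputs. The pure-DP argument is pointwise and hence clean; under approximate DP one must carefully excise the region where the privacy-loss random variable exceeds $\varepsilon$ in either direction, confirm that the residual conditional distribution behaves well enough that the Bayes bound applies on its complement, and verify that the measure-theoretic bookkeeping survives marginalising $S_{>i}$ and inducting forwards through Lemma \ref{lem:sumdominance}. This is the step that closely parallels the analysis of \citet{rogers2016max}, and I would follow their template of first bounding the privacy loss under the product direction and then transferring the bound to the conditional posterior via the uniform prior on $S_i$.
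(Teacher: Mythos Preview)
Your proposal matches the paper's approach in all its essential moves: a coordinate-by-coordinate Bayesian analysis with per-coordinate failure events whose expected count is bounded by $2m\delta$, stochastic domination of $W$ by $\check W^*+N$, and then the final inequality. Your direct calculation at the end---splitting off $\beta$ and bounding each summand by $i\alpha$---is exactly the verification that the paper's chosen dual-feasible solution $(\alpha,\beta)$ upper-bounds the primal LP it sets up; the paper just chooses to frame the same computation via LP duality.

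The one point worth flagging is your choice of \emph{deterministic} bad sets $B_i(s_{<i})$. The natural candidate, $\{t:\text{the density ratio exceeds }e^{\varepsilon}\text{ in either direction}\}$, does not in general have posterior mass bounded by $2\delta$: approximate DP only bounds the \emph{excess} $P(B)-e^\varepsilon Q(B)\le\delta$, not $P(B)$ itself. The paper sidesteps this by using \emph{randomized} failure indicators: it first applies the Kairouz--Oh--Viswanath decomposition (Lemma~\ref{lem:kov}) to write each pair of conditionals as a $(1-\delta')$-mixture of an $(\varepsilon,0)$-indistinguishable pair plus a $\delta'$-weight residual, and then packages this into a randomized event $E_{P,Q}(t)$ (Lemma~\ref{lem:adp_bayes}) that fails with probability $\le\delta$ under \emph{either} conditional and on whose success the pure-DP Bayes bound holds exactly. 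The symmetrized event $E^Q_P=E_{P,Q}\cdot E_{Q,P}$ then gives the factor of $2$ you anticipated. This is the concrete replacement you want for your $B_i(s_{<i})$; with it in hand, the rest of your sketch goes through verbatim.
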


To evaluate the bound of Theorem \ref{thm:main}, we need to identify $\check{W}^*$ and compute its distribution. We can set $\pr{\check{W}^*}{\check{W}^* \ge v} = \sup_{t \in \mathsf{support}(T)} \pr{\check{W}}{\check{W}(t) \ge  v}$. This can be difficult to compute, depending on what we know about the support of $T$.
If the support of $T$ is nice, we can compute this explicitly; e.g., see Corollary \ref{cor:ternary}.
There are other things we can do. For example, if we have bounds on $\sup_{t \in \mathsf{support}(T)} \|t\|_2$ and $\sup_{t \in \mathsf{support}(T)} \|t\|_1$, then we can use a concentration inequality to bound $\sup_{t \in \mathsf{support}(T)} \pr{\check{W}}{\check{W}(t) \ge  v}$ and then use this bound as the distribution of $\check{W}^*$. This yields the following corollary.

\begin{corollary}[Analytic Version of Main Result]\label{cor:hoeffding}
    Let $M : \{-1,+1\}^m \to [-1,+1]^m$ satisfy $(\varepsilon,\delta)$-DP.
    Let $S \in \{-1,+1\}^m$ be uniformly random. Let $T=M(S) \in [-1,+1]^m$.
    Suppose $\pr{}{\|T\|_2 \le r_2}=1$ and $\pr{}{\|T\|_1 \le r_1}=1$.
    Then, for all $v \ge \frac{e^\varepsilon}{e^\varepsilon+1} \cdot r_1 + 2$, we have
    \begin{align}
        \pr{S \gets \{-1,+1\}^m \atop T \gets M(S)}{\sum_i^m \max\{0, T_i \cdot S_i\} \ge  v} &\le f(v) +  2m\cdot \delta \cdot  \max \left\{ \frac{f(v-i) - f(v)}{i} : i \in [m] \right\} \label{eq:cor:hoeffding:1}\\
        &\le f(v) + 2m\delta \cdot \max\left\{ \frac{2}{v - \frac{e^\varepsilon}{e^\varepsilon+1} r_1} , f\left(\frac12\left(v+\frac{e^\varepsilon}{e^\varepsilon+1} r_1\right)\right) \right\},\label{eq:cor:hoeffding:2}
    \end{align}
    where
    \[f(v) := \left\{ \begin{array}{cl} \exp\left(\frac{-2}{r_2^2}\left(v - \frac{e^\varepsilon}{e^\varepsilon+1} r_1\right)^2\right) & \text{ if } v \ge \frac{e^\varepsilon}{e^\varepsilon+1} r_1 \\ 1 &  \text{ if } v < \frac{e^\varepsilon}{e^\varepsilon+1} r_1\end{array} \right\}.\]
\end{corollary}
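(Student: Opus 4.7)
The plan is to combine Theorem \ref{thm:main} with Hoeffding's inequality applied to $\check{W}(t)$. Fix any $t$ in the support of $T$; the random variable $\check{W}(t) = \sum_{i=1}^m \check{S}_i |t_i|$ is a sum of $m$ independent random variables with $\check{S}_i |t_i| \in [0, |t_i|]$ and mean $p|t_i|$, where $p := e^\varepsilon/(e^\varepsilon+1)$. Hoeffding's inequality then gives $\pr{}{\check{W}(t) \ge v} \le \exp\!\left(-2(v - p\|t\|_1)^2 / \|t\|_2^2\right)$ whenever $v \ge p\|t\|_1$. Since $\|t\|_1 \le r_1$ and $\|t\|_2 \le r_2$ almost surely, and since the right-hand side is decreasing in $\|t\|_1$ (for $v \ge p\|t\|_1$) and increasing in $\|t\|_2$, this tail is uniformly bounded above by $f(v)$ for every $t$ in the support of $T$ and every $v \ge pr_1$. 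The extension $f(v) = 1$ for $v < pr_1$ makes $f$ a valid monotone survival function on $\mathbb{R}$, so one can take $\check{W}^*$ to be any random variable with this survival function; by construction it stochastically dominates $\check{W}(t)$ for every $t$ in the support. Plugging into Theorem \ref{thm:main} yields $\beta = f(v)$ and $\alpha = \max_{i \in [m]} (f(v-i) - f(v))/i$, which is exactly inequality \eqref{eq:cor:hoeffding:1}.

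For the analytic bound \eqref{eq:cor:hoeffding:2}, the plan is to split the max defining $\alpha$ at the threshold $i^\star = (v - pr_1)/2$, which is at least $1$ by the hypothesis $v \ge pr_1 + 2$. For integer $i > i^\star$, I would bound $f(v-i) - f(v) \le 1$ and $1/i < 2/(v - pr_1)$, yielding $(f(v-i) - f(v))/i < 2/(v - pr_1)$. For integer $i$ with $1 \le i \le i^\star$, the argument satisfies $v - i \ge (v + pr_1)/2 \ge pr_1$, so by monotonicity of $f$ and $i \ge 1$ one gets $(f(v-i) - f(v))/i \le f(v-i) \le f((v + pr_1)/2)$. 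Taking the maximum over the two cases gives the stated bound on $\alpha$, and multiplying by $2m\delta$ and adding $\beta = f(v)$ produces \eqref{eq:cor:hoeffding:2}.

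The argument is essentially bookkeeping once Hoeffding has been applied; I do not expect any serious obstacle. The two things that need care are (i) verifying that a single $\check{W}^*$ with survival function $f$ can simultaneously dominate every $\check{W}(t)$ in the support, so that Theorem \ref{thm:main} applies off the shelf, and (ii) keeping track of monotonicity directions in both the Hoeffding step (so that the worst-case $t$ gives $\|t\|_1 = r_1$ and $\|t\|_2 = r_2$) and the case split (so that $v - i \ge (v+pr_1)/2 \ge pr_1$ puts us in the exponential regime of $f$).
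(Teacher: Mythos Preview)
Your proposal is correct and follows essentially the same route as the paper: apply Hoeffding to $\check{W}(t)$, take $f$ as the survival function of a dominating $\check{W}^*$, plug into Theorem~\ref{thm:main}, then for \eqref{eq:cor:hoeffding:2} split the $\max$ at the threshold $(v-pr_1)/2$ (the paper does this over real $x\in[1,\infty)$ rather than integers, but the bookkeeping is identical). One slip to fix in your writeup: the Hoeffding tail $\exp\!\big(-2(v-p\|t\|_1)^2/\|t\|_2^2\big)$ is \emph{increasing} in $\|t\|_1$ on the range $v\ge p\|t\|_1$, not decreasing; your conclusion that the worst case is at $\|t\|_1=r_1$, $\|t\|_2=r_2$ is still correct, so only the stated direction needs correcting.
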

In particular, if we substitute $v = \frac{e^\varepsilon}{e^\varepsilon+1} r_1 + r_2 \cdot \sqrt{\frac12 \log(1/\beta)}$ into Equation \ref{eq:cor:hoeffding:2}, we get
\begin{equation}
    \pr{S \gets \{-1,+1\}^m \atop T \gets M(S)}{\sum_i^m \max\{0, T_i \cdot S_i\} \ge  v } \le \beta + 2m\cdot \delta\cdot \max\left\{\frac{1}{r_2\sqrt{\frac12\log(1/\beta)}} , \beta^{1/4}\right\}.
\end{equation}
\begin{proof}
    Fix an arbitrary $t$ the support of $T$.
    Define $\check{W}(t) := \sum_i^m \check{S}_i |t_i|$ for $\check{S} \gets \mathsf{Bernoulli}\left(\frac{e^\varepsilon}{e^\varepsilon+1}\right)^m$.
    Then $\ex{}{\check{W}(t)} = \frac{e^\varepsilon}{e^\varepsilon+1} \|t\|_1$.
    By Hoeffding's inequality, for all $\lambda \ge 0$, \[\pr{}{\check{W}(t) \ge \frac{e^\varepsilon}{e^\varepsilon+1} \|t\|_1 + \lambda} \le \exp\left(\frac{-2\lambda^2}{\|t\|_2^2}\right).\]
    Now define $\check{W}^*$ by  \[\pr{}{\check{W}^* \ge v} :=  f(v) := \left\{ \begin{array}{cl} \exp\left(\frac{-2}{r_2^2}\left(v - \frac{e^\varepsilon}{e^\varepsilon+1} r_1\right)^2\right) & \text{ if } v \ge \frac{e^\varepsilon}{e^\varepsilon+1} r_1 \\ 1 &  \text{ if } v < \frac{e^\varepsilon}{e^\varepsilon+1} r_1\end{array} \right\}.\]
    Since $\check{W}^*$ stochastically dominates $\check{W}(t)$ for all $t$ in the support of $T$, we can apply Theorem \ref{thm:main} to obtain the first part of the result \eqref{eq:cor:hoeffding:1}.
    
    Next, for any $c \ge 1$, we have
    \begin{align*}
        &\max \left\{ \frac{f(v-i) - f(v)}{i} : i \in [m] \right\} \\
        &\le \max \left\{ \frac{f(v-x)}{x} : x \in [1,\infty) \right\} \tag{$f(v)\ge 0$ and $[m] \subset [1,\infty)$}\\
        &= \max\left\{ \max \left\{ \frac{f(v-x)}{x} : x \in [1,c] \right\} , \max \left\{ \frac{f(v-x)}{x} : x \in [c,\infty) \right\} \right\} \\
        &\le \max\left\{ \max \left\{ \frac{f(v-x)}{1} : x \in [1,c] \right\} , \max \left\{ \frac{1}{x} : x \in [c,\infty) \right\} \right\} \\
        &= \max\left\{ f(v-c) , \frac1c \right\}.
    \end{align*}
    Setting $c = \frac12\left(v-\frac{e^\varepsilon}{e^\varepsilon+1} r_1\right)$ yields the second part of the result \eqref{eq:cor:hoeffding:2}
\end{proof}

In the next corollary we restrict $M$ to ternary outputs, so it must either guess ($T_i = \pm 1$) or abstain ($T_i=0$). We bound the number of guesses by $r$.
In this case the dominating distribution $\check{W}^*$ is a binomial distribution, which is relatively easy to compute.
This is the form of Theorem \ref{thm:main} that we use in all of our experimental results.
We provide pseudocode in Appendix \ref{app:code}.

\begin{corollary}[Ternary Guesses]\label{cor:ternary}
    Let $M : \{-1,+1\}^m \to \{-1,0,+1\}^m$ satisfy $(\varepsilon,\delta)$-DP.
    Let $S \in \{-1,+1\}^m$ be uniformly random. Let $T=M(S) \in \{-1,0,+1\}^m$.
    Suppose $\pr{}{\|T\|_1 \le r}=1$.
    Then, for all $v \in \mathbb{R}$,
    \[\pr{S \gets \{-1,+1\}^m \atop T \gets M(S)}{\sum_i^m \max\{0, T_i \cdot S_i\} \ge  v} \le f(v) +  2m\cdot \delta \cdot  \max \left\{ \frac{f(v-i) - f(v)}{i} : i \in \{1,2,\cdots,m\} \right\},\]
    where
    \[f(v) := \pr{\check{W} \gets \mathsf{Binomial}\left(r,\frac{e^\varepsilon}{e^\varepsilon+1}\right)}{\check{W} \ge v}.\]
\end{corollary}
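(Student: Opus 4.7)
The plan is to derive the corollary as a direct specialization of Theorem \ref{thm:main}, with the dominating distribution $\check{W}^*$ chosen to be $\mathsf{Binomial}(r, e^\varepsilon/(e^\varepsilon+1))$. The hypotheses of Theorem \ref{thm:main} require $M$ to map into $[-1,+1]^m$, which is satisfied since $\{-1,0,+1\}^m \subset [-1,+1]^m$.

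First I would exploit the ternary structure: for any $t \in \{-1,0,+1\}^m$, each $|t_i|$ is either $0$ or $1$, so
\[
\check{W}(t) \;=\; \sum_{i=1}^m \check{S}_i \, |t_i| \;\sim\; \mathsf{Binomial}\!\left(\|t\|_1,\; \tfrac{e^\varepsilon}{e^\varepsilon+1}\right),
\]
since $\check{W}(t)$ is a sum of exactly $\|t\|_1$ i.i.d.\ $\mathsf{Bernoulli}(e^\varepsilon/(e^\varepsilon+1))$ variables (the terms with $|t_i|=0$ contribute nothing).

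Next I would verify stochastic dominance. By hypothesis $\|T\|_1 \le r$ almost surely, so it suffices to show that $\mathsf{Binomial}(k, p)$ is stochastically dominated by $\mathsf{Binomial}(r, p)$ whenever $k \le r$ and $p \in [0,1]$. This is standard: couple the two via $r$ i.i.d.\ $\mathsf{Bernoulli}(p)$ variables $B_1,\dots,B_r$ and let the smaller binomial be $\sum_{i \le k} B_i$ and the larger be $\sum_{i \le r} B_i$; the former is pointwise at most the latter. Taking $\check{W}^* \sim \mathsf{Binomial}(r, e^\varepsilon/(e^\varepsilon+1))$, we conclude that $\check{W}^*$ stochastically dominates $\check{W}(t)$ for every $t$ in the support of $T$.

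Finally I would plug this choice of $\check{W}^*$ into Theorem \ref{thm:main}. With $f(v) := \pr{}{\check{W}^* \ge v}$, the quantity $\beta$ in Theorem \ref{thm:main} becomes $f(v)$ and the quantity $\alpha$ becomes $\max_{i \in [m]} (f(v-i)-f(v))/i$, yielding exactly the stated bound. There is no real obstacle here beyond recognizing the ternary simplification; Theorem \ref{thm:main} does all the heavy lifting, and the only content of the corollary is identifying a convenient, explicitly computable dominating distribution.
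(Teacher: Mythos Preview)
Your proposal is correct and matches the paper's intended argument: the paper states (without an explicit proof) that Corollary~\ref{cor:ternary} follows from Theorem~\ref{thm:main} by taking the dominating distribution $\check{W}^*$ to be $\mathsf{Binomial}(r,e^\varepsilon/(e^\varepsilon+1))$, and your sketch fills in precisely the verification that this choice works via the ternary observation $\check{W}(t)\sim\mathsf{Binomial}(\|t\|_1,e^\varepsilon/(e^\varepsilon+1))$ and the standard coupling for binomial stochastic dominance.
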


Now we delve into the proof of Theorem \ref{thm:main}.
We use a decomposition result of \citet{kairouz2015composition} (see also \cite{murtagh2015complexity} \& \cite[Corollary 24]{steinke2022composition}).
\begin{lemma}\label{lem:kov}
    Let $P$ and $Q$ be probability distributions over $\mathcal{Y}$. Fix $\varepsilon, \delta \ge 0$. Suppose that, for all measurable $S \subset \mathcal{Y}$, we have $P(S) \le e^\varepsilon \cdot Q(S) + \delta$ and $Q(S) \le e^\varepsilon P(S) + \delta$.
    
    Then there exist $\delta' \in [0,\delta]$ and distributions $P'$, $Q'$, $P''$, and $Q''$ over $\mathcal{Y}$ such that the following three properties are all satisfied.
    First, we can express $P$ and $Q$ as convex combinations:
    \begin{align*}
        P &= (1-\delta') P' + \delta' P''  ,\\
        Q &= (1-\delta') Q' + \delta' Q'' .
    \end{align*}
    Second, for all measurable $S \subset \mathcal{Y}$, we have $e^{-\varepsilon} P'(S)\le Q'(S) \le e^\varepsilon P'(S)$.
    Third, there exist measurable $S,T \subset \mathcal{Y}$ such that $P''(S)=1$, $Q''(T)=1$, $\forall S' \subset S ~ P(S') \ge Q(S')$, and $\forall T' \subset T ~ Q(T') \ge P(T')$.
\end{lemma}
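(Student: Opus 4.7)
The plan is to explicitly exhibit the four distributions by partitioning $\mathcal{Y}$ according to the likelihood ratio and routing the unavoidable ``excess'' mass into $P''$ and $Q''$. Let $\mu$ be a common dominating measure (e.g.\ $\mu=P+Q$) and write $p=dP/d\mu$, $q=dQ/d\mu$. Partition $\mathcal{Y}=S\sqcup T$ with $S\subseteq\{p\ge q\}$ and $T\subseteq\{q\ge p\}$ (split the null set $\{p=q\}$ arbitrarily), and let $A=\{p>e^\varepsilon q\}\subset S$ and $B=\{q>e^\varepsilon p\}\subset T$ be the regions where the ratio violates $e^\varepsilon$. The hypothesis, applied to the measurable sets $A$ and $B$, gives $\delta_A:=P(A)-e^\varepsilon Q(A)\le \delta$ and $\delta_B:=Q(B)-e^\varepsilon P(B)\le \delta$; I will take $\delta':=\max(\delta_A,\delta_B)\in[0,\delta]$.

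The core step is to produce a density $p''$ supported on $S$ with $\int p''\,d\mu=1$ and satisfying, pointwise,
\[
\max\{0,\,p(y)-e^\varepsilon q(y)\}\;\le\;\delta'\,p''(y)\;\le\;p(y)-e^{-\varepsilon}q(y)\qquad\text{for }y\in S,
\]
and analogously a density $q''$ on $T$ with the symmetric bounds. Once these exist, defining $P':=(P-\delta'P'')/(1-\delta')$ and $Q':=(Q-\delta'Q'')/(1-\delta')$ (treating $\delta'=1$ as a trivial edge case where $P',Q'$ can be chosen arbitrarily) immediately gives all three claimed properties. The mixture identities hold by construction; the pure-DP condition $e^{-\varepsilon}p'(y)\le q'(y)\le e^\varepsilon p'(y)$ on $S$ (where $q''=0$) is precisely the statement that $\delta' p''(y)$ lies in the bracket above, and symmetrically on $T$; and the monotonicity $P(S')\ge Q(S')$ for $S'\subseteq S$ is immediate from $p\ge q$ on $S$ (and analogously for $T$).

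The main obstacle is verifying the feasibility of those pointwise bounds, i.e.\ that the per-point intervals are nonempty and jointly admit total mass $\delta'$. Integrating the bounds on $S$, the lower envelope contributes $\int_A(p-e^\varepsilon q)\,d\mu=\delta_A$ and the upper envelope contributes $P(S)-e^{-\varepsilon}Q(S)$, so I need $\delta_A\le\delta'\le P(S)-e^{-\varepsilon}Q(S)$ and also $\delta_B\le P(S)-e^{-\varepsilon}Q(S)$ (because $\delta'$ may equal $\delta_B$). The inequality $\delta_A\le P(S)-e^{-\varepsilon}Q(S)$ is immediate since $A\subset S$ and $p\ge q$ on $S\setminus A$. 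The inequality $\delta_B\le P(S)-e^{-\varepsilon}Q(S)$ is the nontrivial one: writing $T=B\sqcup(T\setminus B)$ and using $p\ge e^{-\varepsilon}q$ on $T\setminus B$ together with the identity $P(S)-e^{-\varepsilon}Q(S)=(1-e^{-\varepsilon})-(P(T)-e^{-\varepsilon}Q(T))$ and the definition of $\delta_B$ gives, after cancellation,
\[
P(S)-e^{-\varepsilon}Q(S)-\delta_B\;\ge\;(1-e^{-\varepsilon})\,Q(S)+(e^\varepsilon-1)\,P(B)\;\ge\;0.
\]
The symmetric statements for $T$ follow by swapping $(P,S,A)\leftrightarrow(Q,T,B)$. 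Once feasibility is established, a standard water-filling argument on $S$ (start at the lower envelope and raise $\delta' p''$ wherever there is slack until the integral equals $\delta'$) constructs the density $p''$, and likewise $q''$, completing the proof.
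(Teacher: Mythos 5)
Your proof is correct, but it takes a genuinely different route from the paper's. The paper constructs $P',P''$ by truncating the likelihood ratio at a single threshold $e^{\varepsilon_1}$, constructs $Q',Q''$ by truncating at $e^{\varepsilon_2}$, and then uses an intermediate-value argument (decrease $\varepsilon_1$ from $\varepsilon$ until the resulting excess mass $\delta_1$ hits $\min\{\delta,\tvd{P}{Q}\}$, and similarly for $\varepsilon_2$) to force $\delta_1=\delta_2=\delta'$. You instead fix $\delta'=\max(\delta_A,\delta_B)$ from the start and observe that $P''$, $Q''$ need not be uniquely determined: any density in the pointwise sandwich $\max\{0,p-e^\varepsilon q\}\le\delta'p''\le p-e^{-\varepsilon}q$ on $S$ works, and feasibility of that sandwich (which you verify by the identity $P(S)-e^{-\varepsilon}Q(S)-\delta_B=(e^\varepsilon-1)P(B)+(1-e^{-\varepsilon})Q(S)+[Q(T\setminus B)-P(T\setminus B)]\ge 0$, where $Q(T\setminus B)\ge P(T\setminus B)$ since $q\ge p$ on $T$ — note this is the relevant pointwise inequality, not the $p\ge e^{-\varepsilon}q$ you mention) guarantees existence by water-filling. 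What the paper's route buys is an explicit, single-parameter family of decompositions with no need to argue about nonempty intervals; what your route buys is avoiding the continuity/IVT step and making the degree of freedom in the decomposition visible, which is conceptually clean. Two minor cleanup items you should note: the degenerate case $\delta'=0$ (where $P'=P$, $Q'=Q$ and $P'',Q''$ are arbitrary distributions on $S$, $T$) should be called out alongside $\delta'=1$; and you should remark explicitly that the upper envelope keeps $P'(y)=(p(y)-\delta'p''(y))/(1-\delta')$ nonnegative on $S$ and $Q'(y)$ nonnegative on $T$ (it does, since $\delta'p''\le p-e^{-\varepsilon}q\le p$), so that $P',Q'$ are genuine probability densities.
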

    \begin{proof}
        This proof follows that of \citet{steinke2022composition}. We begin with some formalities:
        Fix some base measure such that $P$ and $Q$ are absolutely continuous with respect to the base measure. (If $P$ and $Q$ are discrete distributions, this can be the counting measure. If they are continuous distributions, this can be the Lebesgue measure. In general, $P+Q$ serves as such a measure.) For $y \in \mathcal{Y}$, let $P(y)$ and $Q(y)$ denote the Radon-Nikodym derivative of $P$ and, respectively, $Q$ with respect to this base measure. %
    
        If $e^{-\varepsilon} \cdot Q(S) \le P(S) \le e^\varepsilon \cdot Q(S)$ for all measurable $S$, then the result follows trivially by setting $\delta'=0$, $P'=P$ and $Q'=Q$, and choosing $P''$ and $Q''$ to be arbitrary distributions supported on $S=\{y \in \mathcal{Y} : P(y) \ge Q(y)\}$ and $T=\{ y \in \mathcal{Y} : P(y) \le Q(y)\}$ respectively. Thus we assume that this is not the case and, hence, that $\delta>0$ and $\tvd{P}{Q}>0$.
        
        Similarly, if $\delta \ge 1$ and $\tvd{P}{Q}=1$, then the result follows trivially by setting $\delta'=1$, $P''=P$, $Q''= Q$, and $P'=Q'$ arbitrary. Thus we assume that $\min\{\delta, \tvd{P}{Q}\} < 1$.
        
        Fix $\varepsilon_1, \varepsilon_2 \in [0,\varepsilon]$ to be determined later.
        Define distributions $P'$, $P''$, $Q'$, and $Q''$ (in terms of their Radon-Nikodym derivatives) as follows.
        For all points $y \in \mathcal{Y}$,
        \begin{align*}
            P'(y) &= \frac{\min\{ P(y) , e^{\varepsilon_1} \cdot Q(y) \}}{1-\delta_1}, \\
            P''(y) &= \frac{P(y) - (1-\delta_1)P'(y)}{\delta_1} = \frac{\max\{0, P(y) - e^{\varepsilon_1} \cdot Q(y)\}}{\delta_1}, \\
            Q'(y) &= \frac{\min\{ Q(y), e^{\varepsilon_2} \cdot P(y) \}}{1-\delta_2}, \\
            Q''(y) &= \frac{Q(y) - (1-\delta_2) Q'(y)}{\delta_2} = \frac{\max\{0, Q(y) - e^{\varepsilon_2} \cdot P(y)\}}{\delta_2},
        \end{align*}
        where $\delta_1,\delta_2\in(0,1)$ are appropriate normalizing constants. (We will choose $\varepsilon_1$ to avoid $\delta_1\in\{0,1\}$ and, likewise, we will choose $\varepsilon_2$ to avoid $\delta_2\in\{0,1\}$.)
        
        By construction, $(1-\delta_1)P' + \delta_1 P'' = P$ and $(1-\delta_2)Q'+\delta_2 Q'' = Q$, so the first property is satisfied.
        Note that $P''$ is supported on $S = \{y \in \mathcal{Y} : P(y) > e^{\varepsilon_1} \cdot Q(y)\}$ and $Q''$ is supported on $T=\{ y \in \mathcal{Y} : Q(y) > e^{\varepsilon_2} \cdot P(y)\}$, which implies the third property.
        
        If $0 < \delta_1=\delta_2 \le \delta$, then we have the appropriate decomposition (with $\delta'=\delta_1=\delta_2$) and, for all $y \in \mathcal{Y}$, we have \[e^{-\varepsilon} \le e^{-\varepsilon_2} \le \frac{P'(y)}{Q'(y)} = \frac{\min\{P(y), e^{\varepsilon_1} \cdot Q(y)\}}{\min\{Q(y), e^{\varepsilon_2} \cdot P(y)\}} \le e^{\varepsilon_1} \le e^\varepsilon ,\] as required for the second property.

        It only remains to show that we can ensure that $0 < \delta_1 = \delta_2 \le \delta$ by appropriately setting $\varepsilon_1, \varepsilon_2 \in [0,\varepsilon]$.
        We have \[\delta_1 = \int_{\mathcal{Y}} \max\{0, P(y) - e^{\varepsilon_1} \cdot Q(y)\} \mathrm{d}y = \int_S P(y) - e^{\varepsilon_1} \cdot Q(y) \mathrm{d}y = P(S) - e^{\varepsilon_1} Q(S),\] where $S = \{y \in \mathcal{Y} : P(y) \ge e^{\varepsilon_1} \cdot Q(y)\}$. If $\varepsilon_1=\varepsilon$, then $\delta_1 \le \delta$ by assumption. If $\varepsilon_1 = 0$, then $\delta_1=\tvd{P}{Q}>0$.
         By decreasing $\varepsilon_1$, we continuously increase $\delta_1$. Thus, by starting at $\varepsilon_1=\varepsilon$ and decreasing $\varepsilon_1$ until either $\varepsilon_1=0$ or $\delta_1=\delta$, we can pick $\varepsilon_1 \in [0,\varepsilon]$ such that $\delta_1 = \min\{ \delta, \tvd{P}{Q} \}\in(0,1)$. Similarly, we can pick $\varepsilon_2 \in [0,\varepsilon]$, such that $\delta_2 = \min\{ \delta, \tvd{P}{Q} \}$.
    \end{proof}

We need a Bayesian version of this decomposition. I.e., suppose we observe a sample from either $P$ or $Q$ and we have a prior on these two possibilities, what is the posterior distribution on possibilities?
The following gives such a result. However, it introduces an event $E_{P,Q}$. Intuitively, when $E_{P,Q}(Y)=1$, then we get the result we would get under pure DP. But $E_{P,Q}(Y)=0$ with probability $\delta$, in which case things can fail arbitrarily.

\textcite[Lemma 3.4]{kasiviswanathan2014semantics} provide a similar result. Ours improves the constant factors and is also stated slightly differently. 

\begin{lemma}\label{lem:adp_bayes}
    Let $P$ and $Q$ be probability distributions over $\mathcal{Y}$. Fix $\varepsilon, \delta \ge 0$. Suppose that, for all measurable $S \subset \mathcal{Y}$, we have $P(S) \le e^\varepsilon \cdot Q(S) + \delta$ and $Q(S) \le e^\varepsilon P(S) + \delta$.
    
    Then there exists a randomized function $E_{P,Q} : \mathcal{Y} \to \{0,1\}$ with the following properties.
    
    Fix $p \in [0,1]$ and suppose $X \gets \mathsf{Bernoulli}(p)$. If $X=1$, sample $Y \gets P$; and, if $X=0$, sample $Y \gets Q$.
    Then, %
    for all $y \in \mathcal{Y}$, we have \[ \pr{X \gets \mathsf{Bernoulli}(p) \atop Y \gets X P + (1-X) Q}{X=1 \wedge E_{P,Q}(Y)=1 | Y=y} \le \frac{p}{p+(1-p)e^{-\varepsilon}}.\]
    Furthermore,  \[\ex{Y \gets P}{E_{P,Q}(Y)} \ge 1-\delta ~~~~ \text{ and } ~~~~ \ex{Y \gets Q}{E_{P,Q}(Y)} \ge 1-\delta.\]
\end{lemma}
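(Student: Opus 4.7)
The plan is to define $E_{P,Q}(y)$ via a pointwise likelihood-ratio threshold. Fixing a common dominating measure for $P$ and $Q$ and writing $P(y), Q(y)$ for the corresponding Radon-Nikodym derivatives, I would define $E_{P,Q}(y)$ to be an independent $\mathsf{Bernoulli}(r(y))$ sample with $r(y) := \min\{1, e^\varepsilon Q(y)/P(y)\}$ (using the convention $r(y) = 1$ when $P(y)=0$). The intuition is that on the ``good'' set $\bar A := \{y : P(y) \le e^\varepsilon Q(y)\}$ the pure-DP Bayesian posterior inequality already holds automatically, so $r=1$ is safe; on the ``bad'' set $A := \{y : P(y) > e^\varepsilon Q(y)\}$, the factor $r(y)$ is shrunk to exactly the amount needed to restore it. This is the analogue, in our Bayesian setting, of the event ``the observed output is not in the privacy loss tail.''

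For the posterior property, I would use Bayes' rule together with the conditional independence of $E_{P,Q}(Y)$ from $X$ given $Y=y$ (since $E_{P,Q}$ depends only on $Y$ and its own fresh randomness) to write
\[\pr{}{X=1 \wedge E_{P,Q}(Y)=1 \mid Y=y} = \frac{pP(y)}{pP(y) + (1-p)Q(y)} \cdot r(y),\]
and then case-analyze on $y \in A$ versus $y \in \bar A$. On $\bar A$ we have $r=1$, and the desired bound reduces, after clearing denominators, to $P(y) \le e^\varepsilon Q(y)$, which holds by definition of $\bar A$. On $A$, substituting $r(y) = e^\varepsilon Q(y)/P(y)$ and simplifying reduces the inequality to $e^\varepsilon Q(y) \le P(y)$, which is again the defining condition of $A$.

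For the mass bound $\ex{Y \gets P}{E_{P,Q}(Y)} \ge 1-\delta$, I would note that $P(y) r(y) = \min\{P(y), e^\varepsilon Q(y)\}$, so integrating gives $\ex{Y \gets P}{E_{P,Q}(Y)} = 1 - (P(A) - e^\varepsilon Q(A))$, and this is at least $1-\delta$ by the $(\varepsilon,\delta)$-DP inequality applied to the set $A$. The harder direction is $\ex{Y \gets Q}{E_{P,Q}(Y)} = Q(\bar A) + e^\varepsilon \int_A Q(y)^2/P(y)\,dy$: here the asymmetry of $r$ in $P$ and $Q$ means there is no direct DP argument, and this is what I expect to be the main obstacle.

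The key step to overcome that obstacle is Cauchy--Schwarz (equivalently, Jensen applied to $x \mapsto x^2$ against the probability measure $\mathbf{1}_A P/P(A)$): $\int_A Q^2/P \ge Q(A)^2/P(A)$. Combining this with the $(\varepsilon,\delta)$-DP inequality $P(A) \le e^\varepsilon Q(A) + \delta$ gives $e^\varepsilon Q(A)/P(A) \ge 1 - \delta/P(A)$, hence $e^\varepsilon Q(A)^2/P(A) \ge Q(A)(1 - \delta/P(A))$, which yields $\ex{Y \gets Q}{E_{P,Q}(Y)} \ge 1 - Q(A)\delta/P(A) \ge 1-\delta$ since $Q(A)/P(A) \le e^{-\varepsilon} \le 1$ on $A$ (because $\int_A P > e^\varepsilon \int_A Q$). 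Boundary cases where $P(A)=0$ or $A$ has measure zero make the claim trivial and can be dispatched separately.
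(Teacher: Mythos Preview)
Your proposal is correct and takes a genuinely different route from the paper. The paper invokes the Kairouz--Oh--Viswanath decomposition (their Lemma~\ref{lem:kov}) to write $P=(1-\delta')P'+\delta'P''$ and $Q=(1-\delta')Q'+\delta'Q''$ with $e^{-\varepsilon}\le P'/Q'\le e^{\varepsilon}$, and then sets $\Pr[E_{P,Q}(y)=1]=(1-\delta')P'(y)/P(y)$; unwinding the construction of $P'$ in Lemma~\ref{lem:kov}, this equals $\min\{1,e^{\varepsilon_1}Q(y)/P(y)\}$ for some $\varepsilon_1\in[0,\varepsilon]$ chosen to balance the two tails so that $\delta_1=\delta_2$. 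You instead take the full exponent $\varepsilon$ from the start, which makes the posterior bound and the $P$-mass bound immediate but leaves the $Q$-mass bound asymmetric---and you close that gap with the Cauchy--Schwarz inequality $\int_A Q^2/P\ge Q(A)^2/P(A)$ combined with $Q(A)\le e^{-\varepsilon}P(A)$ (pointwise on $A$) and $P(A)\le e^\varepsilon Q(A)+\delta$. The paper's route reuses a standard structural lemma and makes the two mass bounds symmetric by construction, at the cost of a separate decomposition argument; your route is more self-contained and elementary, trading the decomposition lemma for a one-line Cauchy--Schwarz step. Both yield the same conclusion, and your $E_{P,Q}$ pointwise dominates the paper's, so in a sense your construction is the sharper of the two.
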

\begin{proof}
    We apply the decomposition from Lemma \ref{lem:kov}:
    There exist distributions $P'$, $Q'$, $P''$, and $Q''$ over $\mathcal{Y}$ and $\delta' \in [0,\delta]$ such that
    \begin{align*}
        P &= (1-\delta') P' + \delta' P''  ,\\
        Q &= (1-\delta') Q' + \delta' Q''  ,
    \end{align*}
    and, for all $y \in \mathcal{Y}$, $e^{-\varepsilon} P'(y) \le Q'(y) \le e^\varepsilon P'(y)$ and $P''(y)>0 \implies  P(y) \ge Q(y)$ and $Q''(y)>0 \implies  P(y) \le Q(y)$. (Here $P(\cdot)$ denotes the Radon-Nikodym derivative of the distribution $P$ with respect to some appropriate base measure and similarly for the other distributions.)

    We define $E_{P,Q} : \mathcal{Y} \to \{0,1\}$ by \[\pr{}{E_{P,Q}(y)=1} = (1-\delta') \cdot \frac{P'(y)}{P(y)} = 1 - \delta' \cdot \frac{P''(y)}{P(y)}.\]%
    Clearly, $\ex{Y \gets P, E_{P,Q}}{E_{P,Q}(Y)} = \int_{\mathcal{Y}} P(y) \pr{}{E_{P,Q}(Y)=1} \mathrm{d}y = \int_{\mathcal{Y}} (1-\delta) P'(y) \mathrm{d}y = 1-\delta' \ge 1-\delta$.
    Also
    \begin{align*}
        \ex{Y \gets Q}{E_{P,Q}(Y)} 
        &= 1 - \delta' \ex{Y \gets Q}{\frac{P''(y)}{P(y)}}\\
        &= 1 - \delta' \int_{\mathcal{Y}} \frac{Q(y)}{P(y)} \cdot P''(y) \mathrm{d}y\\
        &\ge 1 - \delta' \int_{\mathcal{Y}} P''(y) \mathrm{d}y\\
        &= 1-\delta' \ge 1-\delta,
    \end{align*}
    since $P''(y)>0 \implies P(y) \ge Q(y)$.
    For any $y \in \mathcal{Y}$, we have
    \begin{align*}
        & \pr{}{X=1 \wedge E_{P,Q}(Y)=1 | Y=y} \\
        &= \pr{}{X=1 | Y=y} \cdot \pr{}{E_{P,Q}(y)=1} \\
        &= \frac{\pr{}{Y=y | X=1} \cdot \pr{}{X=1}}{\pr{}{Y=y}} \cdot  \pr{}{E_{P,Q}(y)=1}  \\
        &= \frac{P(y) \cdot p}{pP(y)+(1-p)Q(y)} \cdot  \pr{}{E_{P,Q}(y)=1}  \\
        &= \frac{p(1-\delta')P'(y) + p\delta' P''(y)}{p(1-\delta')P'(y) + p\delta' P''(y) + (1-p)(1-\delta')Q'(y) + (1-p)\delta' Q''(y)} \cdot  \pr{}{E_{P,Q}(y)=1}  \\
        &= \frac{p + p\frac{\delta' P''(y)}{(1-\delta')P'(y)}}{p + p\frac{\delta' P''(y)}{(1-\delta')P'(y)} + (1-p)\frac{Q'(y)}{P'(y)} + (1-p)\frac{\delta' Q''(y)}{(1-\delta')P'(y)}} \cdot  \pr{}{E_{P,Q}(y)=1}  \\
        &= \frac{p + p\frac{\delta'}{1-\delta'}\frac{P''(y)}{P'(y)}}{p + (1-p)\frac{Q'(y)}{P'(y)} + \frac{\delta'}{1-\delta'} \cdot \frac{p P''(y) + (1-p) Q''(y)}{P'(y)}} \cdot  \pr{}{E_{P,Q}(y)=1}  \\
        &\le \frac{p + p\frac{\delta'}{1-\delta'}\frac{P''(y)}{P'(y)}}{p + (1-p)e^{-\varepsilon}+ 0} \cdot  \pr{}{E_{P,Q}(y)=1}  \\
        &= \frac{p}{p + (1-p)e^{-\varepsilon}} \cdot \left( 1 + \frac{\delta'}{1-\delta'}\frac{P''(y)}{P'(y)} \right) \cdot  \pr{}{E_{P,Q}(y)=1}  \\
        &= \frac{p}{p + (1-p)e^{-\varepsilon}} \cdot \left( \frac{P(y)}{(1-\delta')P'(y)} \right) \cdot  \pr{}{E_{P,Q}(y)=1}  \\
        &= \frac{p}{p + (1-p)e^{-\varepsilon}}.
    \end{align*}
\end{proof}

Now we can prove an analog of Proposition \ref{prop:pure} for the $(\varepsilon,\delta)$-DP setting.

\begin{proposition}[General Form of Main Result]\label{prop:approx}
    Let $M : \{-1,+1\}^m \to [-1,+1]^m$ satisfy $(\varepsilon,\delta)$-DP.
    Let $S \in \{-1,+1\}^m$ be $m$ independent samples from $2\mathsf{Bernoulli}(p)\!-\!1$ -- i.e., $\pr{}{S_i=1}=p$ independently for each $i \in [m]$. Let $T=M(S) \in [-1,+1]^m$.
    Then, for all $v \in \mathbb{R}$ and all $t \in [-1,+1]^m$,
    \[\prc{S \gets (2\mathsf{Bernoulli}(p)-1)^m, \atop T \gets M(S)}{\sum_i^m \max\{ 0 , T_i \cdot S_i\} \ge  v }{ T=t} \le\] \[\pr{\check{S}^+ \gets \mathsf{Bernoulli}\left( \frac{p \cdot e^\varepsilon}{p \cdot e^\varepsilon + 1-p} \right)^m \atop \check{S}^- \gets \mathsf{Bernoulli}\left( \frac{(1-p) \cdot e^\varepsilon}{(1-p) \cdot e^\varepsilon + p} \right)^m, F}{F(t) + \sum_{i\in [m] : t_i>0} t_i \cdot \check{S}_i^+ + \sum_{i \in [m] : t_i<0} -t_i \cdot \check{S}_i^- \ge  v}, \]
    where $F : [-1,1]^m \to \{0,1,\cdots,m\}$ is independent from $\check{S}^+$ and $\check{S}^-$ and satisfies $\ex{T,F}{F(T)} \le 2m \cdot \delta$.
\end{proposition}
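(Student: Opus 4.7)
My plan is to extend the bit-by-bit induction of Proposition~\ref{prop:pure} to the $(\varepsilon,\delta)$-DP setting by invoking Lemma~\ref{lem:adp_bayes} in place of the explicit Bayes-rule computation. For each coordinate $i$, the lemma buys a pure-DP-style posterior bound on $\pr{}{S_i = \sigma \mid T, S_{<i}}$ modulo a per-coordinate ``bad event'' of probability at most $\delta$; the additive $F(t)$ on the right-hand side will simply count these bad events. The $\check{S}^{\pm}$ on the right-hand side then play the role that the single $\check{S}$ played in the pure DP proof, except that we get two Bernoulli parameters (one for $t_i > 0$, one for $t_i < 0$) because the prior $p$ is no longer $1/2$.

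For the per-coordinate step, I fix $i \in [m]$ and $s_{<i} \in \{-1,+1\}^{i-1}$ and let $P$, $Q$ be the conditional distributions of $T = M(S)$ given $(S_i = +1, S_{<i} = s_{<i})$ and $(S_i = -1, S_{<i} = s_{<i})$ respectively. By postprocessing and $(\varepsilon,\delta)$-DP of $M$, the pair $(P, Q)$ satisfies the hypothesis of Lemma~\ref{lem:adp_bayes}. Applying that lemma to $(P, Q)$ with prior $p$ produces a randomized indicator $E_i^+$ with
\[ \pr{}{S_i = +1 \,\wedge\, E_i^+(T) = 1 \mid T=t,\, S_{<i}=s_{<i}} \le \tfrac{p\,e^\varepsilon}{p\,e^\varepsilon + (1-p)} \]
and $\pr{}{E_i^+(T)=0} \le \delta$; swapping the roles of $P$ and $Q$ and using prior $1-p$ yields an analogous $E_i^-$ with the symmetric bound on $\pr{}{S_i=-1 \wedge E_i^-(T)=1 \mid T=t,S_{<i}=s_{<i}}$. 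Setting $F(t) := \sum_i \bigl( (1 - E_i^+(t)) + (1 - E_i^-(t)) \bigr)$ gives $\ex{}{F(T)} \le 2m\delta$ by linearity. Now condition on $T=t$ and split $\max\{0, t_i S_i\}$ by sign: when $t_i > 0$ and $E_i^+(t) = 1$, the indicator $\mathbb{I}[S_i = +1]$ is (almost-surely, as a conditional probability) dominated by a Bernoulli with the stated parameter $\tfrac{p e^\varepsilon}{p e^\varepsilon + (1-p)}$, and symmetrically when $t_i < 0$; on the bad event $E_i^{\pm}(t)=0$, I bound $\max\{0, t_i S_i\} \le |t_i| \le 1$, which is absorbed into the corresponding summand of $F(t)$. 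I then chain these per-coordinate dominations across $i$ via Lemma~\ref{lem:sumdominance}, doing the induction in the order $i = 1, 2, \ldots, m$ while conditioning on $T$ and $S_{<i}$.

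The main obstacle will be arranging the coupling so that the $\check{S}_i^\pm$ on the right-hand side are \emph{independent of $F$ and of each other}, as the statement requires. The events $E_i^{\pm}(T)$ are themselves functions of $T$, and the conditional probabilities that drive each step depend on $T$, so a naive coupling would entangle $\check{S}_i^\pm$ with both $T$ and the $E_j^\pm$. I would handle this exactly as in Proposition~\ref{prop:pure}: each per-coordinate step is phrased as an \emph{almost-sure} inequality between conditional probabilities (given $T=t$ and $S_{<i}$), and the $\check{S}_i^\pm$ are then realized \emph{after the fact} as fresh external Bernoullis with the stated parameters, using the coupling characterization of stochastic dominance. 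The secondary subtlety is the factor of $2$ in $2m\delta$: Lemma~\ref{lem:adp_bayes} controls only one direction per invocation ($\pr{X = 1 \wedge E = 1}$), so I need both $E_i^+$ and $E_i^-$ to handle both signs of $t_i$ uniformly, and a union bound over these two failure events per coordinate gives exactly $2m\delta$ in total.
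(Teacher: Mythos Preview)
Your proposal is correct and follows essentially the same route as the paper's proof. Both arguments fix $i$ and $s_{<i}$, apply Lemma~\ref{lem:adp_bayes} to the pair $P=M(s_{<i},+1)$, $Q=M(s_{<i},-1)$ (and its swap) to obtain per-coordinate bad events, bound $\max\{0,t_iS_i\}$ by the ``good-event'' contribution plus an indicator of failure, and then chain the good-event contributions via Lemma~\ref{lem:sumdominance}. The only cosmetic difference is that the paper packages the two directions into a single symmetric event $E^{Q}_{P}:=E_{P,Q}\cdot E_{Q,P}$ (so $F=\sum_i\mathbb{I}[E^{Q_i}_{P_i}=0]$), whereas you keep $E_i^{+},E_i^{-}$ separate and set $F=\sum_i\big((1-E_i^{+})+(1-E_i^{-})\big)$; both yield $\ex{}{F}\le 2m\delta$. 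One small wording issue: when you write ``when $t_i>0$ and $E_i^{+}(t)=1$, the indicator $\mathbb{I}[S_i=+1]$ is \ldots dominated by a Bernoulli,'' this reads as conditioning on $E_i^{+}=1$, which Lemma~\ref{lem:adp_bayes} does \emph{not} give---it bounds the \emph{joint} probability $\pr{}{S_i=+1\wedge E_i^{+}=1\mid T=t,S_{<i}}$. The paper (and your actual decomposition) uses exactly that joint indicator $\mathbb{I}[S_i=\mathrm{sign}(t_i)\wedge E_i=1]$ as the summand to which the induction is applied, so just be careful to phrase the per-step domination that way.
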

\begin{proof}
    For $i \in [m] \cup \{0\}$ and $s_{\le i} \in \{-1,1\}^{i}$, let $M(s_{\le i})$ denote the distribution on $[-1,1]^m$ obtained by conditioning $M(S)$ on $S_{\le i} = s_{\le i}$. We can express this as a convex combination: \[M(s_{\le i}) = \sum_{s_{>i} \in \{-1,1\}^{m-i}} M(s_{\le i}, s_{>i}) \cdot \pr{S_{>i} \gets (2\mathsf{Bernoulli}(p)-1)^{m-i}}{S_{>i} = s_{>i}}.\]
    For distributions $P$ and $Q$ on $[-1,1]^m$, let $E_{P,Q} : [-1,1]^m \to \{0,1\}$ be the randomized function promised by Lemma \ref{lem:adp_bayes}. In our analysis, the internal randomness of $E_{P,Q}$ is independent from everything else -- i.e., the only dependence is induced by its input.
    Specifically, for all $i \in [m]$, all $s_{< i} \in \{-1,1\}^{i-1}$, and all $t \in [-1,1]^m$, we have
    \begin{align*}
        \prc{S \gets (2\mathsf{Bernoulli}(p)-1)^n, \atop T \gets M(S), E}{S_i = 1 \wedge E_{M(s_{<i},1),M(s_{<i},-1)}(T)=1 }{ S_{<i}=s_{<i}, T=t} &\le \frac{p \cdot e^\varepsilon}{p \cdot e^\varepsilon + 1-p},\\
        \exc{S \gets (2\mathsf{Bernoulli}(p)-1)^n, \atop T \gets M(S), E}{E_{M(s_{<i},1),M(s_{<i},-1)}(T)}{S_{\le i} = (s_{<i},1)} &\ge 1 - \delta,\\
        \exc{S \gets (2\mathsf{Bernoulli}(p)-1)^n, \atop T \gets M(S), E}{E_{M(s_{<i},1),M(s_{<i},-1)}(T)}{S_{\le i} = (s_{<i},-1)} &\ge 1 - \delta.
    \end{align*}
    Symmetrically, for all $i \in [m]$, all $s_{< i} \in \{-1,1\}^{i-1}$, and all $t \in [-1,1]^m$, we have 
    \begin{align*}
        \prc{S \gets (2\mathsf{Bernoulli}(p)-1)^n, \atop T \gets M(S), E}{S_i = -1 \wedge E_{M(s_{<i},-1),M(s_{<i},1)}(T)=1 }{ S_{<i}=s_{<i}, T=t} &\le \frac{(1-p) \cdot e^\varepsilon}{(1-p) \cdot e^\varepsilon + p},\\
        \exc{S \gets (2\mathsf{Bernoulli}(p)-1)^n, \atop T \gets M(S), E}{E_{M(s_{<i},-1),M(s_{<i},1)}(T)}{S_{\le i} = (s_{<i},-1)} &\ge 1 - \delta,\\
        \exc{S \gets (2\mathsf{Bernoulli}(p)-1)^n, \atop T \gets M(S), E}{E_{M(s_{<i},-1),M(s_{<i},1)}(T)}{S_{\le i} = (s_{<i},1)} &\ge 1 - \delta .
    \end{align*}
    For simplicity, we define a symmetric event: $E_P^Q(y) = E_Q^P(y) := E_{P,Q}(y) \cdot E_{Q,P}(y)$, where the internal randomnesses are again independent.
    Combining these, we have, for all $i \in [m]$, all $s_{< i} \in \{-1,1\}^{i-1}$, and all $t \in [-1,1]^m$, \[\prc{\!\!S \gets (2\mathsf{Bernoulli}(p)\!-\!1)^n,\!\! \atop T \gets M(S), E}{S_i \!=\! \sign(T_i) \!\wedge\! E_{M(s_{<i},1)}^{M(s_{<i},-1)}(T)\!=\!1 }{ T\!=\!t , S_{<i}\!=\!s_{<i}} \!\le\! \left\{ \!\!\begin{array}{cl} \frac{p \cdot e^\varepsilon}{p \cdot e^\varepsilon + 1-p} \!&\! \text{ if } t_i >0 \\ \frac{(1-p) \cdot e^\varepsilon}{(1-p) \cdot e^\varepsilon + p} \!&\! \text{ if } t_i < 0 \end{array} \!\!\right\}\]
    and, for $b \in \{-1,1\}$, we have \[        \exc{S \gets (2\mathsf{Bernoulli}(p)-1)^n, \atop T \gets M(S), E}{E_{M(s_{<i},-1)}^{M(s_{<i},1)}(T)}{S_{\le i} = (s_{<i},b)} \ge 1 - 2\delta .\]
    
    For $k \in [m]$, $s \in \{-1,1\}^m$, and $t \in [-1,1]^m$, define \[\widetilde{W}_k(s,t) := \sum_{i \in [k]} \max\{0, t_i \cdot s_i \} \cdot E_{M(s_{<i},1)}^{M(s_{<i},-1)}(t) = \sum_{i \in [k]} |t_i| \cdot \mathbb{I}[s_i = \sign(t_i) \wedge E_{M(s_{<i},1)}^{M(s_{<i},-1)}(t)=1]\] and \[\check{W}_k(t) = \sum_{i \in [k]} \check{S}_i(t) \cdot |t_i|,\] where, for each $i \in [k]$ independently, $\check{S}(t)_i \gets \mathsf{Bernoulli}(\frac{p \cdot e^\varepsilon}{p \cdot e^\varepsilon + 1-p})$ if $t_i > 0$ and $\check{S}(t)_i \gets \mathsf{Bernoulli}(\frac{(1-p) \cdot e^\varepsilon}{(1-p) \cdot e^\varepsilon + p})$ if $t_i < 0$.
    
    By induction and Lemma \ref{lem:sumdominance}, for any $k \in [m]$ and $t \in [-1,1]^m$, the conditional distribution $(\widetilde{W}_k(S,t)|M(S)=t)$ where $S \gets (2\mathsf{Bernoulli}(p)-1)^m$ is stochastically dominated by $\check{W}_k(t)$.
    
    For $s \in \{-1,1\}^m$ and $t \in [-1,1]^m$, define \[F(s,t) := \sum_i^m \mathbb{I}\left[ E_{M(s_{<i},1)}^{M(s_{<i},-1)}(t)=0 \right],\] so that \[W_m(s,t) := \sum_{i \in [m]} \max\{ 0 , t_i \cdot s_i\} \le \widetilde{W}_m(s,t) + F(s,t). \] Since the conditional distribution $(\widetilde{W}_k(S,t)|M(S)=t)$ where $S \gets (2\mathsf{Bernoulli}(p)-1)^m$ is stochastically dominated by $\check{W}_k(t)$, $W_m$ is stochastically dominated by the convolution $\check{W}_m(T) + F(S,T)$. 
    
    Finally $F(s,t)$ is supported on $\{0,1,\cdots,m\}$ and
    \begin{align*}
        \ex{}{F(s,t)} &= \sum_i^m \pr{}{E_{M(s_{<i},1)}^{M(s_{<i},-1)}(T)=0 } \le 2m \cdot \delta .
    \end{align*}
    Since $\check{W}_m(T)$ does not depend on $S$, the input $S$ does not contribute to the dependence between $F(S,T)$ and $\check{W}_m(T)$, so we can elide this input in the statement -- i.e., $F(T)=F(S,T)$ for $S$ drawn from an appropriate distribution.
\end{proof}

Proposition \ref{prop:approx} is rather unwieldy. It can be simplified by setting $p=\frac12$ and identifying the optimal distribution $F(T)$, which yields Theorem \ref{thm:main}.

\begin{proof}[Proof of Theorem \ref{thm:main}.]
    Let $M : \{-1,1\}^m \to [-1,1]^m$ satisfy $(\varepsilon,\delta)$-DP.
    Let $S \in \{-1,1\}^m$ be uniformly random. Let $T=M(S) \in [-1,1]^m$.
    Setting $p=\frac12$ in Proposition \ref{prop:approx} and averaging over $T$, we have, for all $v \in \mathbb{Z}$,
    \[\pr{S \gets \{-1,1\}^m \atop T \gets M(S)}{\sum_i^m \max\{ 0 , T_i \cdot S_i \} \ge  v } \le \pr{S \gets \{-1,1\}^m , T \gets M(S), \atop \check{S} \gets \mathsf{Bernoulli}\left( \frac{e^\varepsilon}{e^\varepsilon + 1} \right)^m, F}{ F(T) + \sum_i^m \check{S}_i \cdot |T_i| \ge  v}, \]
    where $F$ is arbitrary -- but independent from $\check{S}$ -- except for the constraints that $F(T)$ is supported on $\{0,1,\cdots,m\}$ and $\ex{}{F(T)} \le 2m \cdot \delta$.
    
    Given these constraints, we can formulate finding the optimal distribution $F(t)$ for a given $t \in [-1,1]^m$ and $v \in \mathbb{R}$ as a linear program:
    \begin{align*}
        \text{maximize} ~~&~~ \pr{\check{W}, F}{\check{W}(t) + F(t) \ge  v} = \sum_{i=0}^m \pr{F}{F(t)=i} \cdot \pr{\check{W} }{\check{W}(t) \ge  v-i} \\
        \text{subject to} ~~&~~ \ex{F}{F(t)} = \sum_{i=0}^m \pr{F}{F(t)=i} \cdot i \le 2m \cdot \delta ,\\
        &~~ \sum_{i=0}^m \pr{F}{F(t)=i}=1, \text{ and }\\
        &~~ \pr{F}{F(t)=i} \ge 0 ~~\forall i \in \{0,1,\cdots,m\} ,
    \end{align*}
    where $\check{W}(t) := \sum_i^m \check{S}_i |t_i|$ for $\check{S} \gets \mathsf{Bernoulli}\left(\frac{e^\varepsilon}{e^\varepsilon+1}\right)^m$.
    
    By strong duality, the linear program above has the same value as its dual:
    \begin{align*}
        \text{minimize} ~~&~~ 2m\delta \alpha + \beta \\
        \text{subject to} ~~&~~ \alpha \cdot i + \beta \ge \pr{\check{W}}{\check{W}(t) \ge  v-i} ~~\forall i \in \{0,1,\cdots,m\}, \\
        &~~ \alpha \ge 0.
    \end{align*}
    Any feasible solution to the dual gives an upper bound on the primal. So, in particular, we can use the solution given by
    \begin{align*}
        \beta &= \pr{\check{W}^*}{\check{W}^* \ge  v},\\
        \alpha &= \max \left(\{ 0 \} \cup \left\{ \frac{1}{i} \left( \pr{\check{W}^*}{\check{W}^* \ge  v-i} - \beta \right) : i \in \{1,2,\cdots,m\} \right\}\right),
    \end{align*}
    where $\check{W}^*$ is a distribution on $\mathbb{R}$ that satisfies $\pr{\check{W}^*}{\check{W}^* \ge v-i} \ge \pr{\check{W}}{\check{W}(t) \ge v-i}$ for all $i \in \{0,1,\cdots,m\}$ and all $t$ in the support of $T$.
\end{proof}

Theorem \ref{thm:main} gives a worst-case bound in terms of $T$. Specifically, $\check{W}^*$ must uniformly bound $\check{W}(t)$ for all $t$ in the support of $T$.
Proposition \ref{prop:approx} is more general than this. 
Thus we give another corollary that allows us to have the bound adjust to $T$.
In particular, this result allows the auditing procedure (Algorithm \ref{alg:generic-audit} or \ref{alg:audit}) to dynamically choose the number of guesses $r=k_++k_-$.

\begin{corollary}[Variant of Main Result]\label{cor:adap_T}
    Let $M : \{-1,+1\}^m \to [-1,+1]^m$ satisfy $(\varepsilon,\delta)$-DP. Let $S \in \{-1,+1\}^m$ be uniformly random. Let $T=M(S) \in [-1,+1]^m$. Then, for all $\gamma \in [0,1]$ and $\tau>0$,
    \[\pr{S \gets \{-1,+1\}^m \atop T \gets M(S)}{\sum_i^m \max\{ 0 , T_i \cdot S_i\} \ge g_{m,\varepsilon}(T,\gamma)+\tau} \le \gamma + \frac{2m\delta}{\tau},\]
    where $g_{m,\varepsilon} : [-1,+1]^m \times [0,1] \to \mathbb{R}$ is an arbitrary function satisfying \[\forall t \in [-1,1]^m ~ \forall \gamma \in [0,1] ~~\pr{\check{S} \gets \mathsf{Bernoulli}\left(\frac{e^\varepsilon}{e^\varepsilon+1}\right)^m}{\sum_i^m |t_i| \cdot \check{S}_i \ge g_{m,\varepsilon}(t,\gamma)} \le \gamma.\]
\end{corollary}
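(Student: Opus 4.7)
The plan is to derive this corollary directly from Proposition \ref{prop:approx} with $p=\tfrac12$, followed by a union bound and Markov's inequality. The corollary is essentially a ``data-dependent'' repackaging of the same stochastic dominance that underlies Theorem \ref{thm:main}, trading the optimal linear-programming step for a crude two-term split.

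First, I would invoke Proposition \ref{prop:approx} with $p = \tfrac12$. This gives, for each $t \in [-1,1]^m$,
\[\prc{S, T}{\sum_i^m \max\{0, T_i \cdot S_i\} \ge v}{T=t} \le \pr{\check{S}, F}{\check{W}(t) + F(t) \ge v},\]
where $\check{W}(t) = \sum_i^m \check{S}_i |t_i|$ with $\check{S}_i \gets \mathsf{Bernoulli}(e^\varepsilon/(e^\varepsilon+1))$ independently (the $\check{S}^+$ and $\check{S}^-$ bits of the proposition collapse to a single distribution when $p=\tfrac12$), and $F$ is independent of $\check{S}$, takes values in $\{0,1,\ldots,m\}$, and satisfies $\ex{T,F}{F(T)} \le 2m\delta$.

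Next, I would take $v = g_{m,\varepsilon}(t,\gamma) + \tau$ and apply a union bound: if $\check{W}(t) + F(t) \ge g_{m,\varepsilon}(t,\gamma) + \tau$, then either $\check{W}(t) \ge g_{m,\varepsilon}(t,\gamma)$ or $F(t) \ge \tau$. Hence
\[\pr{\check{S}, F}{\check{W}(t) + F(t) \ge g_{m,\varepsilon}(t,\gamma) + \tau} \le \pr{\check{S}}{\check{W}(t) \ge g_{m,\varepsilon}(t,\gamma)} + \pr{F}{F(t) \ge \tau} \le \gamma + \pr{F}{F(t) \ge \tau},\]
using the defining property of $g_{m,\varepsilon}$.

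Finally, I would take expectation over $T$ and apply Markov's inequality to the nonnegative random variable $F(T)$:
\[\pr{S, T, \check{S}, F}{\sum_i^m \max\{0, T_i S_i\} \ge g_{m,\varepsilon}(T,\gamma) + \tau} \le \gamma + \pr{T, F}{F(T) \ge \tau} \le \gamma + \frac{\ex{}{F(T)}}{\tau} \le \gamma + \frac{2m\delta}{\tau},\]
which is the claim. There is no real obstacle here beyond bookkeeping: the only subtle point is noting that $F$ is independent of $\check{S}$ (so the union bound is valid coordinate-wise) and that the $2m\delta$ bound on $\ex{}{F(T)}$ is what Proposition \ref{prop:approx} hands us, making the Markov step give exactly $2m\delta/\tau$.
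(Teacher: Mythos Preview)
Your proposal is correct and follows essentially the same route as the paper's proof: invoke Proposition \ref{prop:approx} with $p=\tfrac12$, split the event $\{\check{W}(t)+F(t)\ge g_{m,\varepsilon}(t,\gamma)+\tau\}$ via a union bound into $\{\check{W}(t)\ge g_{m,\varepsilon}(t,\gamma)\}$ and $\{F(t)\ge\tau\}$, control the first by the defining property of $g_{m,\varepsilon}$ and the second by Markov, and average over $T$. The only cosmetic difference is that the paper applies Markov per-$t$ (obtaining $\ex{F}{F(t)}/\tau$) and then averages, whereas you average first and then apply Markov; both orders give the same bound, and your remark that independence of $F$ and $\check S$ is needed for the union bound is harmless but unnecessary---the split $A+B\ge c+d \Rightarrow A\ge c \text{ or } B\ge d$ holds regardless.
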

\begin{proof}
    Setting $p=\frac12$ in Proposition \ref{prop:approx} yields
    \begin{align*}
        \forall v \in \mathbb{R} ~ \forall t \in [-1,+1]^m ~~~ &\prc{S \gets \{-1,+1\}^m \atop T \gets M(S)}{\sum_i^m \max\{ 0 , T_i \cdot S_i \} \ge v}{T=t} \\
        &\le \pr{\check{S} \gets \mathsf{Bernoulli}\left(\frac{e^\varepsilon}{e^\varepsilon+1}\right),F}{F(t) + \sum_i^m |t_i| \cdot \check{S}_i \ge v},
    \end{align*}
    where $F : [-1,1]^m \to \{0,1,\cdots,m\}$ satisfies $\ex{S \gets \{-1,+1\}^m \atop T \gets M(S),F}{F(T)}\le 2m\delta$.

    By a union bound and Markov's inequality, we have, for all $t \in [-1,1]^m$, all $\gamma \in [0,1]$, and all $\tau>0$,
    \begin{align*}
        &\pr{\check{S} \gets \mathsf{Bernoulli}\left(\frac{e^\varepsilon}{e^\varepsilon+1}\right),F}{F(t) + \sum_i^m |t_i| \cdot \check{S}_i \ge g_{m,\varepsilon}(t,\gamma)+\tau} \\
        &\le \pr{\check{S} \gets \mathsf{Bernoulli}\left(\frac{e^\varepsilon}{e^\varepsilon+1}\right),F}{\tau + \sum_i^m |t_i| \cdot \check{S}_i \ge g_{m,\varepsilon}(t,\gamma)+\tau} + \pr{F}{F(t) > \tau} \\
        &\le \pr{\check{S} \gets \mathsf{Bernoulli}\left(\frac{e^\varepsilon}{e^\varepsilon+1}\right),F}{\sum_i^m |t_i| \cdot \check{S}_i \ge g_{m,\varepsilon}(t,\gamma)} + \frac{\ex{F}{F(t)}}{\tau} \\
        &\le \gamma + \frac{\ex{F}{F(t)}}{\tau}.
    \end{align*}
    
    We combine inequalities, set $v=g_{m,\varepsilon}(t,\gamma)+\tau$, and average over $T$ to obtain
    \begin{align*}
        \forall \gamma \in [0,1] ~ \forall \tau>0 ~~~
        &\pr{S \gets \{-1,+1\}^m \atop T \gets M(S)}{\sum_i^m \max\{ 0 , T_i \cdot S_i \} \ge g_{m,\varepsilon}(T,\gamma)+\tau} \\
        &\le \gamma + \frac{\ex{S \gets \{-1,+1\}^m \atop T \gets M(S), F}{F(T)}}{\tau} \\
        &\le \gamma + \frac{2m\delta}{\tau}.
    \end{align*}
\end{proof}

\begin{algorithm}[h]
    \caption{DP-SGD Auditor (Instantiation of Algorithm \ref{alg:generic-audit})}\label{alg:audit}
    \begin{footnotesize}
    \begin{algorithmic}[1]
        \State \textbf{Data:} $x \in \mathcal{X}^{n}$ consisting of $m$ auditing examples (a.k.a.~canaries) and $n-m$ non-auditing examples.
        \State \textbf{Parameters:} Number of examples to randomize $m$ for audit, number of positive $k_+$ and negative $k_-$ guesses \texttt{audit-type} (either \texttt{black-box} or \texttt{white-box}).
        \State For $i \in [m]$ sample $S_i \in \{-1,+1\}$ independently with $\ex{}{S_i}=0$. Set $S_i=1$ for all $i \in [n] \setminus [m]$.
        \State Split $x$ into $\xin \in \mathcal{X}^\nin$ and $\xout \in \mathcal{X}^\nout$ according to $S$, where $\nin+\nout=n$. Namely, if $S_i=1$, then $x_i$ is in $\xin$; and, if $S_i=-1$, then $x_i$ is in $\xout$.
        \State Run DP-SGD (Algorithm \ref{alg:dpsgd}) on input $\xin$ with appropriate parameters.
        \State Let $\ell$ be the number of iterations and let $f : \mathbb{R}^d \times \mathcal{X} \to \mathbb{R}$ be the loss.
        \State Let $w^0, \cdots, w^\ell \in \mathbb{R}^d$ be the output of DP-SGD.
        \If{$\text{\texttt{audit-type}}=\text{\texttt{black-box}}$}
            \State Define \textsc{Score}$(x_i, w^\ell) = \ell(w^0,x_i) - \ell(w^\ell,x_i)$ for all $i \in [m]$.
            \State Compute the vector of scores $Y = \left( \textsc{Score}(x_i,w^{\ell}) : i \in [m] \right) \in \mathbb{R}^m$.
        \ElsIf{$\text{\texttt{audit-type}}=\text{\texttt{white-box}}$}
            \Procedure{Score}{$x_*$, $w^1, \cdots, w^\ell$}
                \For{$t=1,\cdots,\ell$}
                    \State Compute $g^t = \nabla_{w_{t-1}} \ell(w_{t-1},x_*) \in \mathbb{R}^d$.
                    \State Clip $\hat{g}^t = \min\left\{1,\frac{c}{\|g^t\|_2}\right\} \cdot g^t \in \mathbb{R}^d$.
                    \State Let $v^t = \left\langle w^{t-1} - w^t , \hat{g}^t \right\rangle \in \mathbb{R}$.
                \EndFor
                \State Return $\sum_{t=1}^\ell v^t \in \mathbb{R}$.
            \EndProcedure
            \State Compute the vector of scores $Y = \left( \textsc{Score}(x_i,w^{0},w^1,\cdots,w^\ell) : i \in [m] \right) \in \mathbb{R}^m$.
        \EndIf
        \State Sort the scores $Y$. Let $T \in \{-1,0,+1\}^m$ be $+1$ for the largest $k_+$ scores and $-1$ for the smallest $k_-$ scores.\\(I.e., $T \in \{-1,0,+1\}^m$ maximizes $\sum_i^m T_i \cdot Y_i$ subject to $\sum_i^m |T_i| = k_++k_-$ and $\sum_i^m T_i = k_+-k_-$.)
        \State \textbf{Return:} The vector $S \in \{-1,+1\}^m$ indicating the true selection and the guesses $T \in \{-1,0,+1\}^m$.
    \end{algorithmic}
    \end{footnotesize}
\end{algorithm}

\section{Experiments}\label{sec:experiments}

\paragraph{Experiment Setup}
Our contributions are focused on improved analysis of an existing privacy attack, and are therefore orthogonal to the design of an attack. As a result, we rely on the experimental setup of the recent auditing procedure of \citet{nasr2023tight}.

We run DP-SGD on the CIFAR-10 dataset with Wide ResNet (WRN-16) \cite{zagoruyko2016wide}, we followed the experimental setup from \citetal{nasr2023tight}. Our experiments reach $~76\%$ test accuracy at $(\varepsilon=8, \delta=10^{-5})$-DP, which is comparable with the state-of-the-art~\cite{de2022unlocking}. Unless specified otherwise, all lower bounds are presented with $95\%$ confidence. Following \citetal{nasr2023tight}, we refer to the setting where the adversary has access to all intermediate steps as ``white-box'' and when the adversary can only see the last iteration as ``black-box.'' We experiment with both settings.  

Algorithm~\ref{alg:audit} summarizes our approach for auditing DP-SGD. The results are converted into lower bounds on the privacy parameters using Theorem \ref{thm:main} / Corollary \ref{cor:ternary}.

We also experiment with both the gradient and input attacks proposed by \citetal{nasr2023tight}. In particular, for the gradient attack we use the strongest attack they proposed -- the ``Dirac canary'' approach -- which  sets all gradients to zero except at a single random index. In our setting where we need to create multiple auditing examples (canaries) we make sure the indices selected in our experiments do not have any repetitions. 
To compute the score for gradient space attacks, we use the dot product between the gradient update and auditing gradient. When auditing in input space, we leverage two different types of injected examples as:
\begin{enumerate}
    \item \textbf{Mislabeled  example}: We select a random subset of the test set and randomly relabel them (ensuring the new label is not the same as the original label).
    \item \textbf{In-distribution example}: We select a random subset of the test set. 
\end{enumerate}

For input space audits, we use the loss of the input example as the score. In our experiments we report the attack with the highest lower bound.

 In our experiments, we evaluate different values of $k_+$ and $k_-$ and only report the highest auditing results. Since this is doing multiple hypothesis testing on the same data, we are reducing the confidence value of our results. However, this is commonly used in the previous works~\cite{zanella2022bayesian, maddock2022canife} and can be easily improved by using a different set of observations to select the parameters for the auditing and another set of the data for the auditing itself (see also Corollary \ref{cor:adap_T}).
\subsection{Gradient Space attacks}

\begin{figure}
    \centering
      \begin{minipage}[t]{0.45\textwidth}
    \centering
    \includegraphics[scale=0.5]{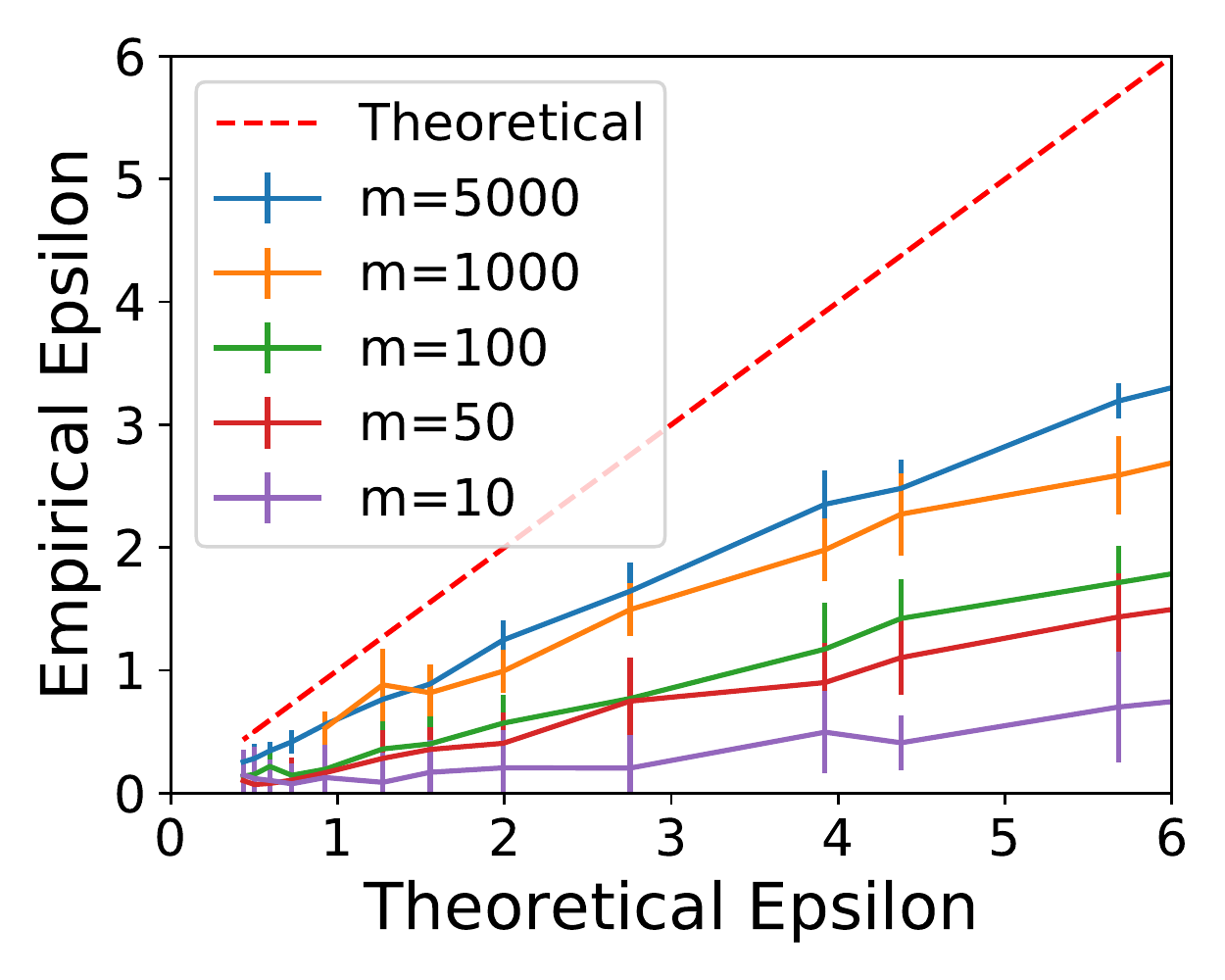}
    \caption{Effect of the number of auditing examples ($m$) in the white-box setting. By increasing the number of the auditing examples we are able to achieve tighter empirical lower bounds. }
    \label{fig:wb_diff_m}
\end{minipage}
\hfill
\begin{minipage}[t]{0.45\textwidth}
    \centering
    \includegraphics[scale=0.5]{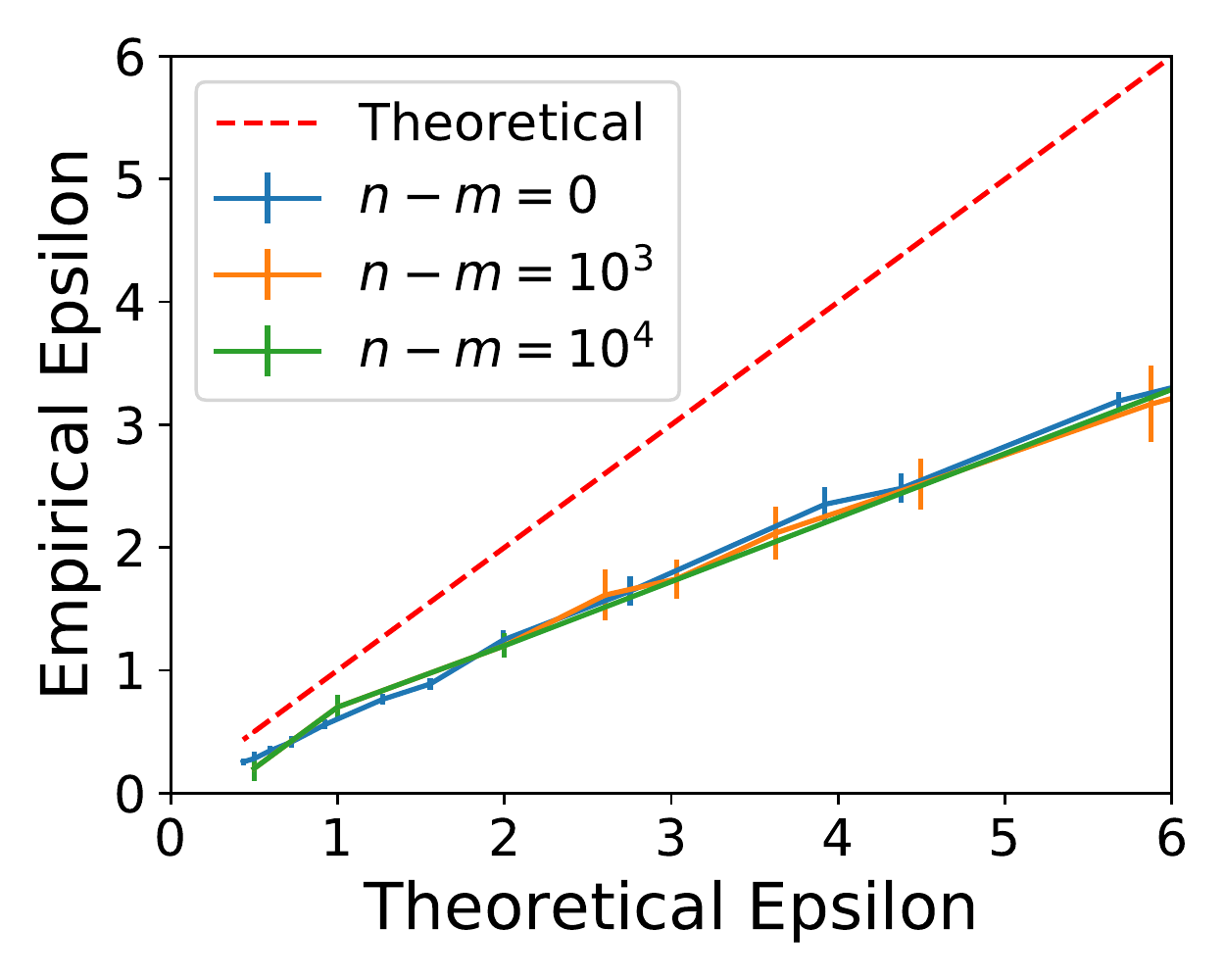}
    \caption{Effect of the number of additional examples ($n-m$) in the white-box setting. Importantly, adding additional examples does not impact the auditing results in the white-box setting. }
    \label{fig:wb_diff_nm}
\end{minipage}
\end{figure}

We start with the strongest attack: We assume white-box access -- i.e., the auditor sees all intermediate iterates of DP-SGD -- and that the auditor can insert examples with arbitrary gradients into the training procedure. First, we evaluate the effect of the number of the auditing example on the tightness. Figure~\ref{fig:wb_diff_m} demonstrates that as the number of examples increases, the auditing becomes tighter. However, the impact of the additional examples eventually diminishes. Intriguingly, adding more non-auditing training examples (resulting in a larger $n$ compared to $m$) does not seem to influence the tightness of the auditing, as depicted in Figure~\ref{fig:wb_diff_nm}. This can be primarily due to the fact that gradient attacks proposed in prior studies can generate near-worst-case datasets, irrespective of the presence of other data points. 

\begin{figure}[t]
  \centering
\begin{minipage}[t]{0.45\textwidth}

    \centering
    \includegraphics[scale=0.5]{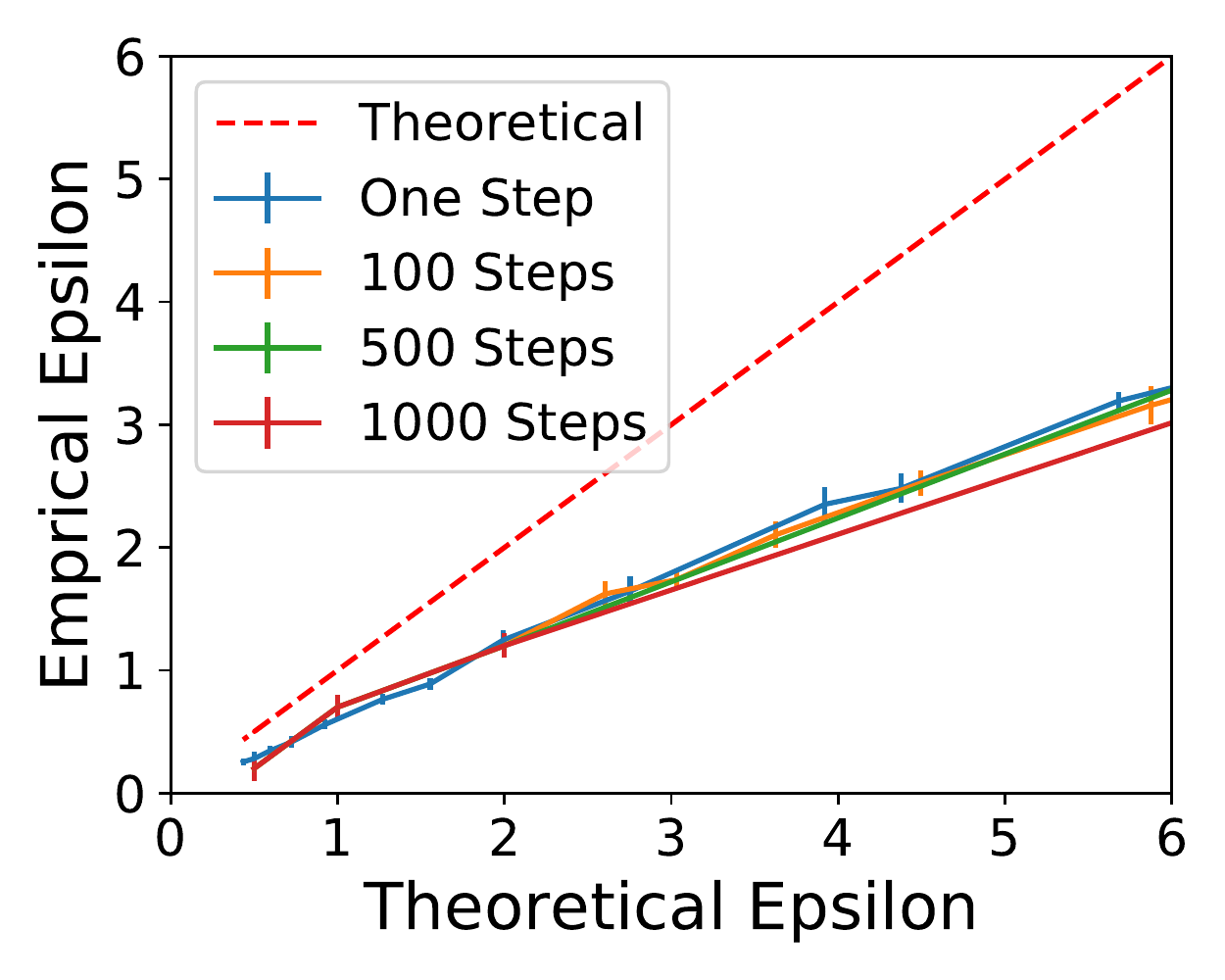}
    \caption{Effect of number of iterations in the white-box setting. Increasing the number of the steps (while keeping the same overall privacy by increasing the added noise) will not effect the auditing results.}
    \label{fig:wb_diff_iterations}
    \end{minipage}
\hfill
\begin{minipage}[t]{0.45\textwidth}
        \centering
    \includegraphics[scale=0.5]{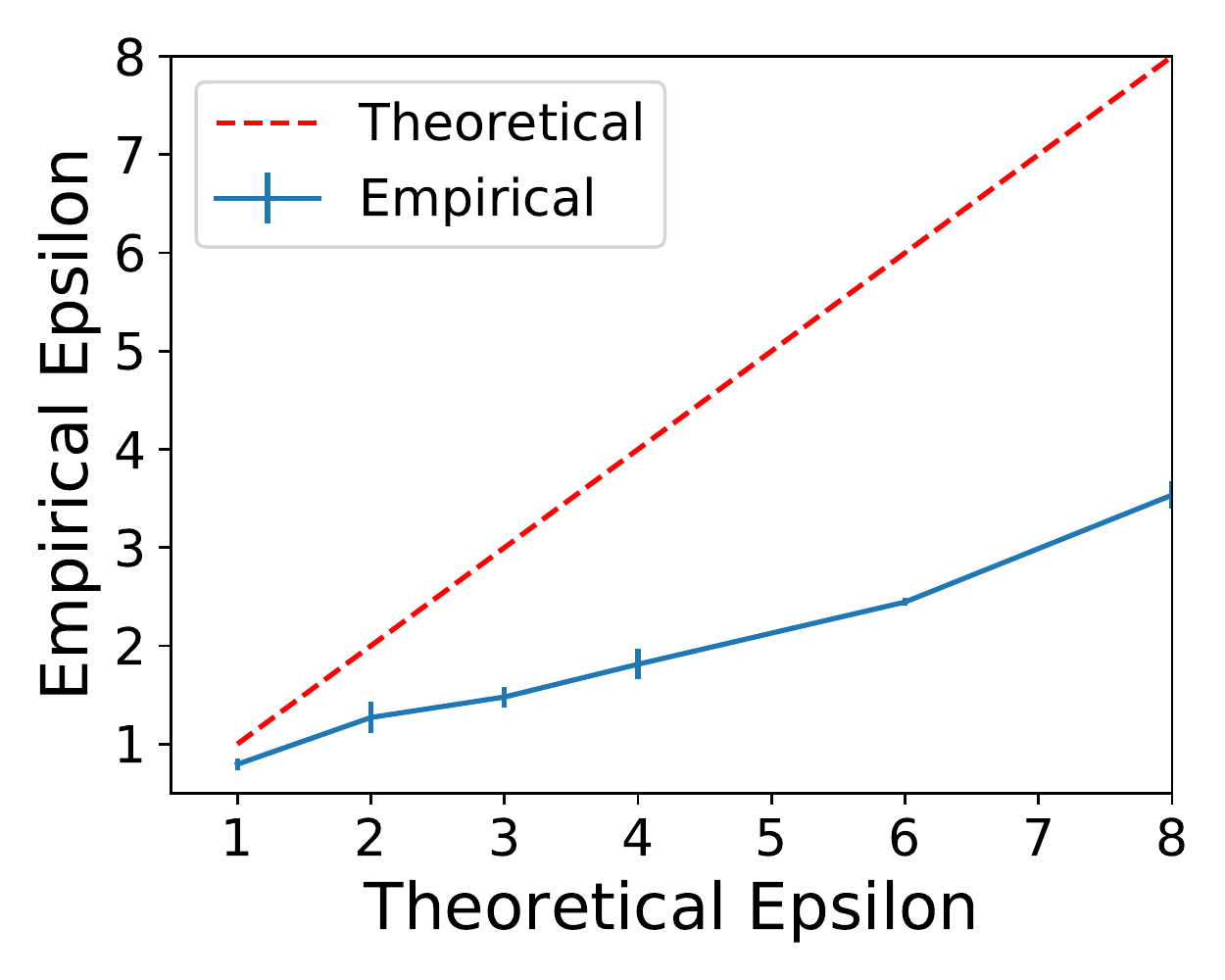}
    
    \caption{Auditing CIFAR10 SoTA in white-box setting using gradient attacks. Our auditing framework can achieve meaningful empirical privacy lower bounds for SoTA models. }
    \label{fig:wb_dpsgd_cifar}
\end{minipage}
\end{figure}

 Another parameter that might affect the auditing results is the number of iterations $\ell$ in the DP-SGD algorithm. As shown in Figure~\ref{fig:wb_diff_iterations} we compare the extreme setting of having one iteration to multiple iterations and we do not observe any significant difference in the auditing when auditing for the equivalent privacy guarantees (by increasing the noise). The results confirm the tightness of composition and that the number of iterations does not have significant effect on auditing in white-box setting.

Now we directly use the parameters used in the training CIFAR10 models. Figure~\ref{fig:wb_dpsgd_cifar} summarizes results for the CIFAR10 models. We used $m=5000$ and all of the training dataset from CIFAR10 ($n=50,000$) for the attack. We were able to achieve $76\%$ accuracy for $\varepsilon=8$ ($\delta=10^{-5}$, compared to $78\%$ when not auditing). We are able to achieve an empirical lower bound of $0.7,1.2,1.8,3.5$ for theoretical epsilon of $1,2,4,8$ respectively. While our results are not as tight as the prior works, we only require a single run of training which is not possible using the existing techniques. In the era of exponentially expanding machine learning models, the computational and financial costs of training these colossal architectures even once are significant. Expecting any individual or entity to shoulder the burden of training such models thousands of times for the sake of auditing or experimental purposes is both unrealistic and economically infeasible. Our method offers a unique advantage by facilitating the auditing of these models, allowing for an estimation of privacy leakage in a white-box setting without significantly affecting performance.

\subsection{Input Space Attacks}
Now we evaluate the effect of input space attacks in the black-box setting. In this attack, the auditor can only insert actual images into the training procedure and cannot control any of the aspects of the training. Then, the adversary can observe the final model as mentioned in Algorithm~\ref{alg:audit}. This is the weakest attack setting.

For simplicity we start with the setting where $m=n$; in other words, all of the examples used to train the model are randomly included or excluded and can be used for auditing.
Figure~\ref{fig:bb_diff_m} illustrates the result of this setting. As we see from the figure, unlike the white-box attack we do not observe a monotonic relationship between the number of auditing examples and the tightness of the auditing. Intuitively, when the number of auditing examples are low then we do not have enough observations to have high confidence lower bounds for epsilon. On the other hand, when the number of auditing examples are high, the model does not have enough capacity to ``memorize'' all of the auditing examples which reduces the tightness of the auditing. However, this can be improved by designing better black-box attacks which we reiterate in the next section. 

We also evaluate the effect of adding additional training data to the auditing in Figure~\ref{fig:bb_diff_nm}. We see that adding superfluous training data significantly reduces the effectiveness of auditing. The observed reduction in auditing effectiveness with the addition of more training data could be attributed to several factors. One interpretation could be that the theoretical privacy analysis in a black-box setting tends to be considerably more loose when the adversary is constrained to this setting. This could potentially result in an overestimation of the privacy bounds. Conversely, it is also plausible that the results are due to the weak black-box attacks and can be improved in the future.

\begin{figure}[t]
  \centering
\begin{minipage}[t]{0.45\textwidth}
        \centering
    \includegraphics[scale=0.5]{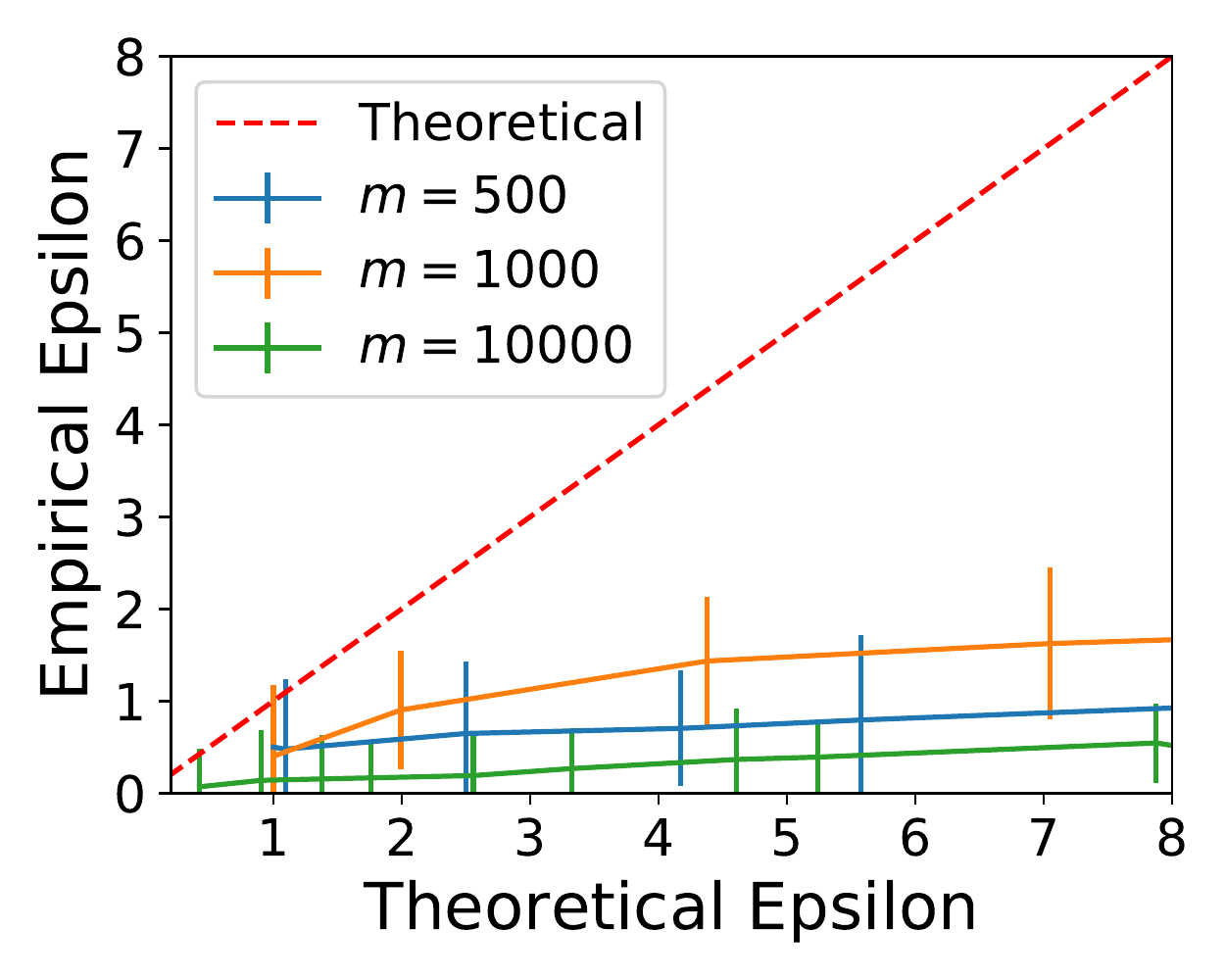}
    
    \caption{Effect of the number of auditing examples ($m$) in the black-box setting. Black-box auditing is very sensitive to the number of auditing examples.}
    \label{fig:bb_diff_m}
\end{minipage}
\hfill
\begin{minipage}[t]{0.45\textwidth}
    \centering
    \includegraphics[scale=0.5]{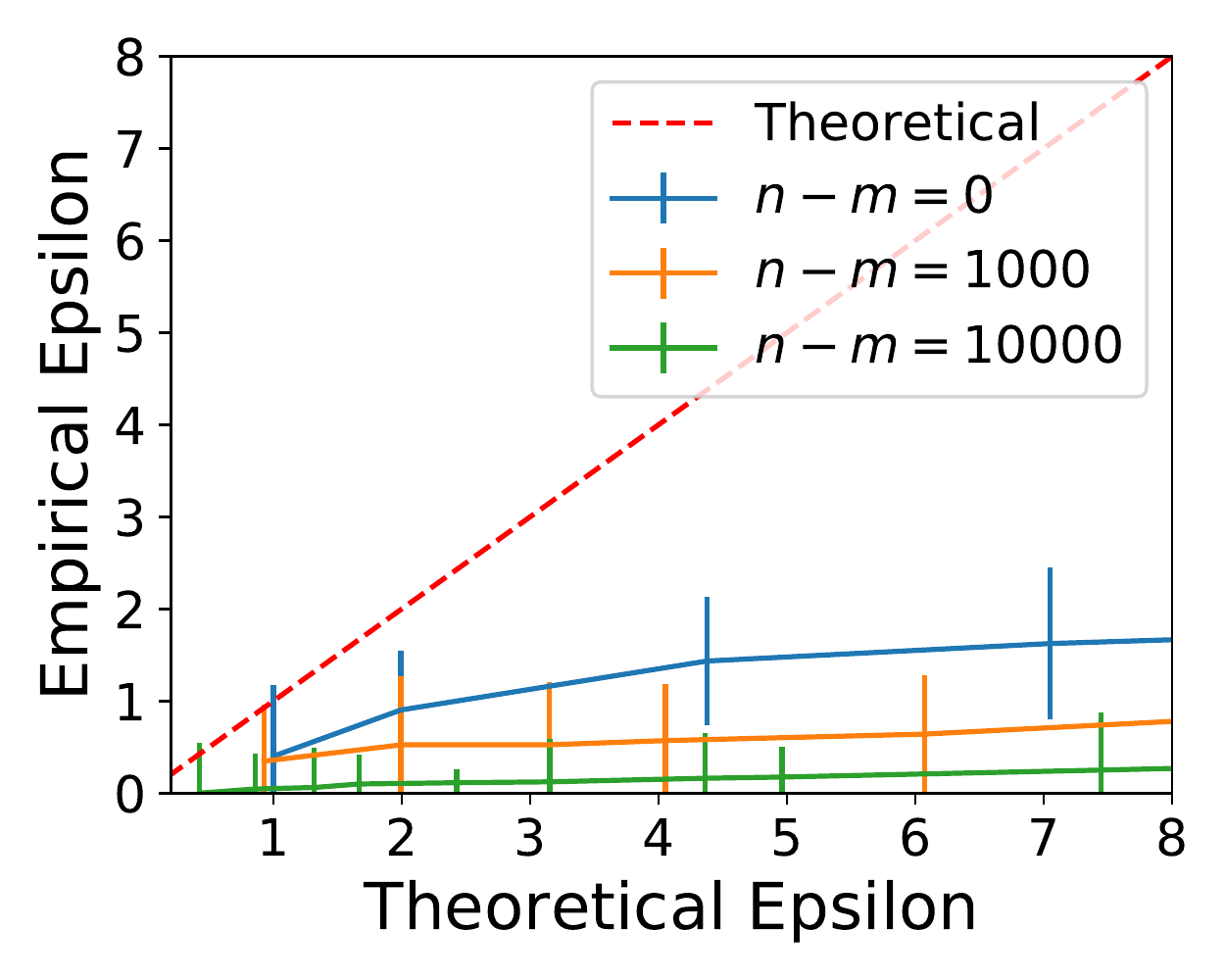}
    \caption{Effect of the number of additional example on auditing ($n-m$) in the black-box setting. By increasing the number of additional examples, the auditing results get significantly looser.}
    \label{fig:bb_diff_nm}
    \end{minipage}
\end{figure}

\section{Discussion}%

\begin{figure}[h]
    \centering
    \includegraphics[width=0.75\textwidth]{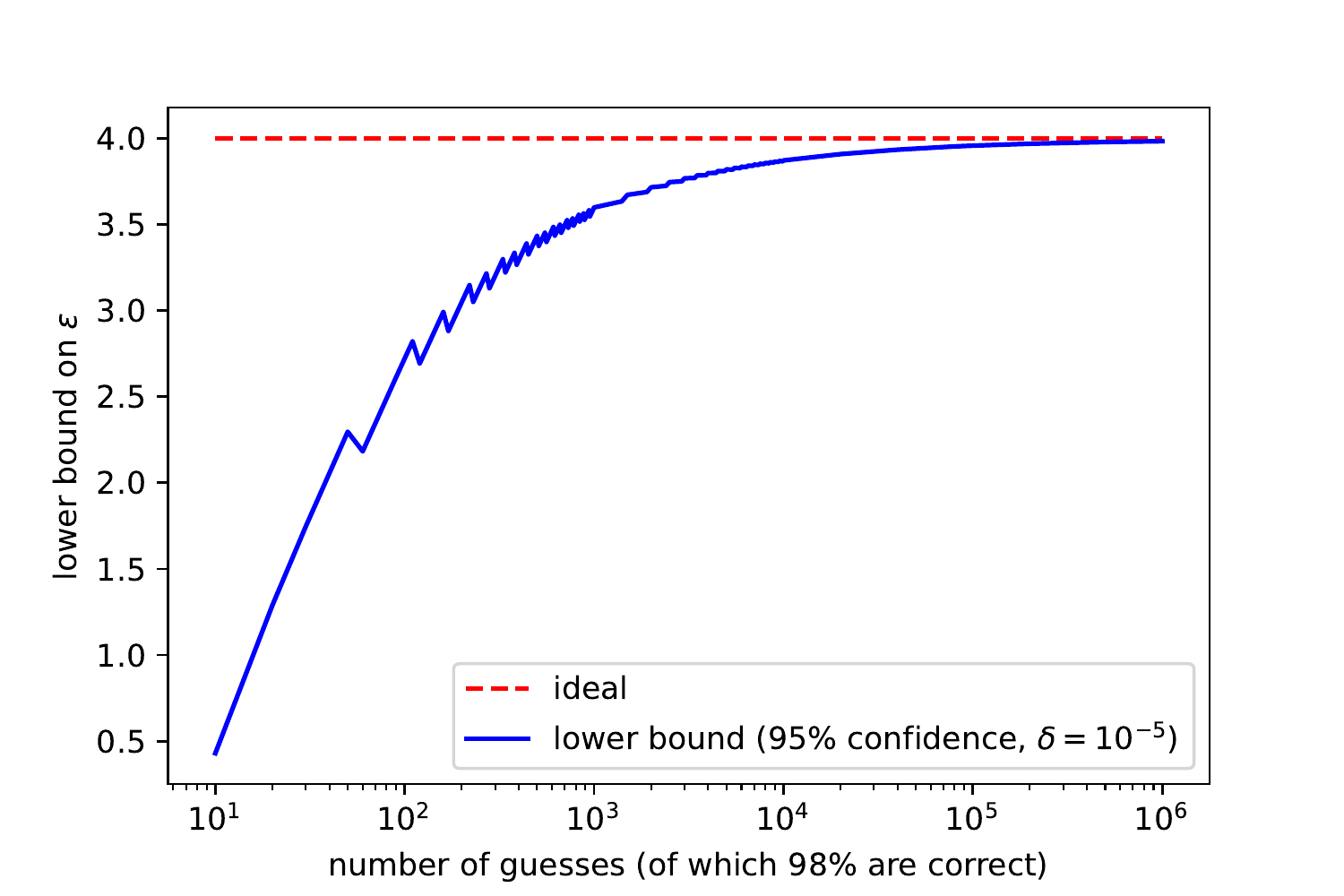}
    \caption{
        Comparison of upper and lower bounds for idealized setting with varying number of guesses. The fraction of correct guesses is always $\frac{e^\varepsilon}{e^\varepsilon+1}$ for $\varepsilon=4$ (i.e., 98.2\%).
    }
    \label{fig:guesses-eps-pure}
\end{figure}
\begin{figure}[h]
    \centering
    \includegraphics[width=0.75\textwidth]{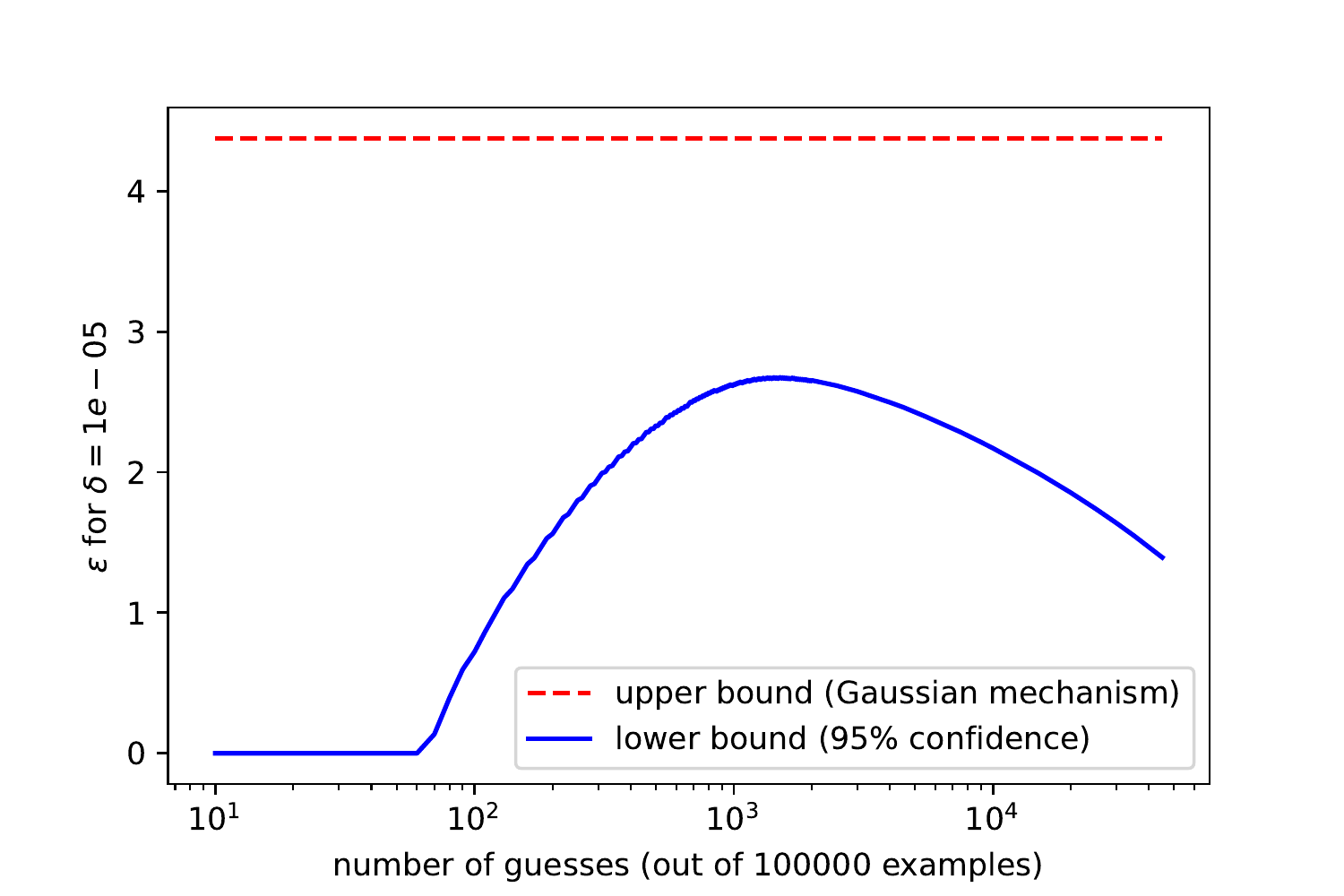}
    \caption{
        Comparison of upper and lower bounds for idealized setting with varying number of guesses. For each example $i \in [m]$, we release $S_i + \xi_i$, where $\xi_i \gets \mathcal{N}(0,4)$ and $S_i \in \{-1,+1\}$ is independently uniformly random and indicates whether the sample is included/excluded. For the upper bound, we compute the exact $(4.38,10^{-5})$-DP guarantee for the Gaussian mechanism (Lemma \ref{lem:gauss}). For the lower bound, we plot the bound of Theorem \ref{thm:main} with 95\% confidence for varying numbers of guesses $r$. We consider a total of $m=100,000$ randomized examples; we guess $T_i=+1$ for the largest $r/2$ scores and we guess $T_i=-1$ for the smallest $r/2$ scores; we guess $T_i=0$ for the remaining $m-r$ examples. 
        The number of correct guesses is set to $\left\lceil r \cdot \prc{}{S_i=+1}{S_i+\xi_i > c} \right\rceil$, where $c$ is a threshold such that $\pr{}{S_i+\xi_i > c} = \frac{r}{2m}$.
    }
    \label{fig:guesses-eps}
\end{figure}
\begin{figure}[h]
    \centering
    \includegraphics[width=0.75\textwidth]{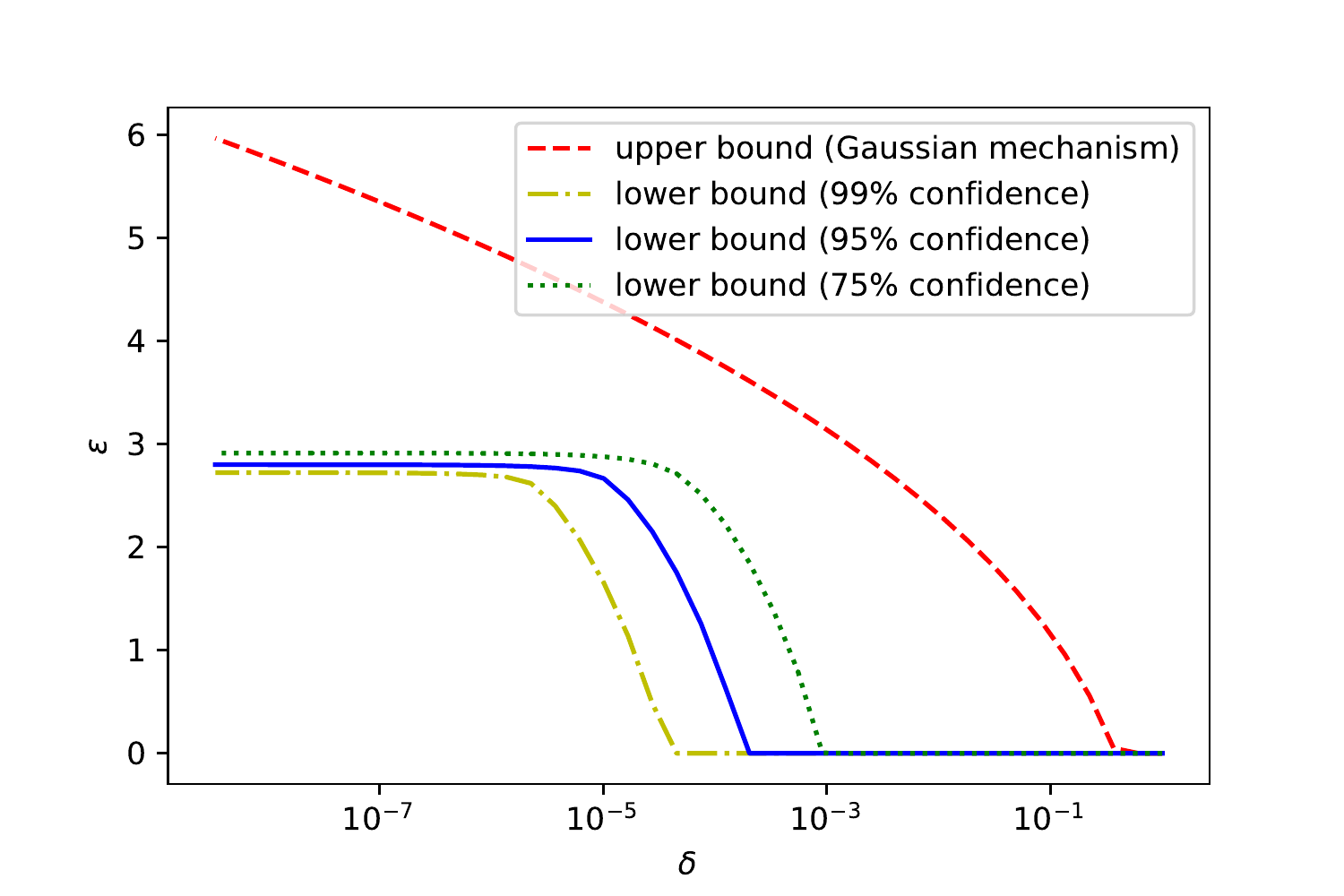}
    \caption{
        Comparison of upper and lower bounds for idealized setting with varying $\delta$. For each example $i \in [m]$, we release $S_i + \xi_i$, where $\xi_i \gets \mathcal{N}(0,4)$ and $S_i \in \{-1,+1\}$ is independently uniformly random and indicates whether the sample is included/excluded. For the upper bound, we compute the exact $(4.38,10^{-5})$-DP guarantee for the Gaussian mechanism (Lemma \ref{lem:gauss}). For the lower bound, we plot the bound of Theorem \ref{thm:main} with 75\%, 95\%, and 99\% confidence. We consider $m=100,000$ randomized examples and 1,500 guesses of which 1,429 are correct.
        This corresponds to guessing $T_i=+1$ for the largest $750$ scores, $T_i=-1$ for the smallest $750$ scores, and $T_i=0$ for the remaining 98,500 examples. 
    }
    \label{fig:delta-eps}
\end{figure}

Our main contribution is showing that we can audit the differential privacy guarantees of an algorithm with a single run. In contrast, prior methods require hundreds -- if not thousands -- of runs, which is computationally prohibitive for all but the simplest algorithms.
Our experimental results demonstrate that in practical settings our methods are able to give meaningful lower bounds on the privacy parameter $\varepsilon$.

However, while we win on computational efficiency, we lose on tightness of our lower bounds.
We now illustrate the limitations of our approach and discuss the extent to which this is inherent, and what lessons we can learn.

But, first, we illustrate that our method can give tight lower bounds. 
In Figure \ref{fig:guesses-eps-pure}, we consider an idealized setting where the number of guesses changes and the fraction that are correct is fixed at $\frac{e^\varepsilon}{e^\varepsilon+1}$ for $\varepsilon=4$ -- i.e., 98.2\% of guesses are correct.\footnote{The number of correct guesses is rounded down to an integer (which results in the lines being jagged). There are no abstentions.} This is the maximum expected fraction of correct guesses compatible with $(4,0)$-DP.
In this setting the lower bound on $\varepsilon$ does indeed come close to $4$. With 10,000 guesses we get $\varepsilon \ge 3.87$ with 95\% confidence.

Note that the lower bound in Figure \ref{fig:guesses-eps-pure} improves as we increase the number of guesses. This is simply accounting for sampling error -- to get a lower bound with 95\% confidence, we must underestimate to account for the fact that the number of correct guesses may have been inflated by chance. As we get more guesses, the relative size of chance deviations reduces.

\textbf{Limitations:}
Next we consider a different idealized setting -- one that is arguably more realistic -- where our method does not give tight lower bounds. 
Suppose $S_i \in \{-1,+1\}$ indicates whether example $i \in [n]$ is included or excluded.
In Figure \ref{fig:guesses-eps}, we consider Gaussian noise addition. That is, we release a sample from $\mathcal{N}(S_i,4)$. (In contrast, Figure \ref{fig:guesses-eps-pure} considers randomized response on $S_i$.)
Lemma \ref{lem:gauss} gives an upper bound of $(4.38,10^{-5})$-DP.
Unlike for randomized response, abstentions matter here.
We consider 100,000 examples, each of which has a score sampled from $\mathcal{N}(S_i,4)$, where $S_i \in \{-1,+1\}$ is uniformly random.
We pick the largest $r/2$ scores and guess $S_i=+1$. Similarly we guess $S_i=-1$ for the smallest $r/2$ scores. We abstain for the remaining $100,000-r$ examples.
If we make more guesses (i.e., increase $r$), then the accuracy goes down and so does our lower bound. We must trade off between more guesses being less accurate on average and more guesses having smaller relative sampling error.

In Figure \ref{fig:guesses-eps}, the highest value of the lower bound is $\varepsilon \ge 2.675$ for $\delta=10^{-5}$, which is attained by $1439$ correct guesses out of $1510$.
In contrast, the upper bound is $\varepsilon=4.38$ for $\delta=10^{-5}$.
To get a matching upper bound of $\varepsilon \le 2.675$ we would need to set $\delta=0.0039334$. In other words, the gap between the upper and lower bounds is a factor of $393\times$ in $\delta$.

Figure \ref{fig:delta-eps} considers the same idealized setting as Figure \ref{fig:guesses-eps}, but we fix the number of guesses to 1,500 out of 100,000 (of which 1,429 are correct); instead we vary $\delta$ and consider different confidence levels.

\begin{algorithm}[h]
    \caption{Pathological Algorithm}\label{alg:pathological}
    \begin{algorithmic}[1]
        \State \textbf{Input:} $s \in \{-1,+1\}^m$
        \State \textbf{Parameters:} $r \in [m]$, $\varepsilon,\delta \ge 0$, $\beta \in [0,1]$. Assume $0 < m\delta \le r\beta$.
        \State Select $U \subset [m]$ of size $|U|=r$ uniformly at random.
        \State Set $T_i=0$ for all $i \notin U$.
        \State Sample $X \gets \mathsf{Bernoulli}(\beta)$.
        \If{$X=1$}
            \For{$i \in U$}
                \State{Independently sample $T_i \in \{-1,+1\}$ with $\pr{}{T_i=s_i} = \frac{m\delta}{r\beta} + \left(1-\frac{m\delta}{r\beta}\right) \cdot \frac{e^\varepsilon}{e^\varepsilon+1}$.}
            \EndFor
        \ElsIf{$X=0$}
            \For{$i \in U$}
                \State{Independently sample $T_i \in \{-1,+1\}$ with $\pr{}{T_i=s_i} =  \frac{e^\varepsilon}{e^\varepsilon+1}$.}
            \EndFor
        \EndIf
        \State \textbf{Output:} $T \in \{-1,0,+1\}^m$.
    \end{algorithmic}
\end{algorithm}

\textbf{Are these limitations inherent?}
Figures \ref{fig:guesses-eps} \& \ref{fig:delta-eps} illustrate the limitations of our approach. They also hint at the causes: The number of guesses versus abstentions, the $\delta$ parameter, and the confidence all have a large effect on the tightness of our lower bound.

Our theoretical analysis is fairly tight; there is little room to improve Theorem \ref{thm:main}. %
We argue that the inherent problem is a mismatch between ``realistic'' DP algorithms and the ``pathological'' DP algorithms for which our analysis is nearly tight. This mismatch makes our lower bound much more sensitive to $\delta$ than it ``should'' be.

To be concrete about what we consider pathological, consider $M : \{-1,+1\}^m \to \{-1,0,+1\}^m$ defined by Algorithm \ref{alg:pathological}.
This algorithm satisfies $(\varepsilon,\delta)$-DP and makes $r$ guesses with $m-r$ abstentions.
In the $X=1$ case, the expected fraction of correct guesses is $\frac{m\delta}{r\beta} + \left(1-\frac{m\delta}{r\beta}\right) \cdot \frac{e^\varepsilon}{e^\varepsilon+1}$.
This is higher than the average fraction of correct guesses, but if we want confidence $1-\beta$ in our lower bound, we must consider this case, as $X=1$ happens with probability $\beta$.

Intuitively, the contribution from $\delta$ to the fraction of correct guesses should be negligible. However, we see that $\delta$ is multiplied by $m/r\beta$. That is to say, in the settings we consider, $\delta$ is multiplied by a factor on the order of $100\times$ or $1000\times$, which means $\delta=10^{-5}$ makes a non-negligible contribution to the fraction of correct guesses. 

It is tempting to try to circumvent this problem by simply setting $\delta$ to be very small. However, as shown in Figure \ref{fig:delta-eps}, the upper bound on $\varepsilon$ also increases as $\delta \to 0$.

Unfortunately, there is no obvious general way to rule out algorithms that behave like Algorithm \ref{alg:pathological}. The fundamental issue is that the privacy losses of the $m$ examples are not independent; we shouldn't expect them to be independent, but we also shouldn't expect them to be pathologically dependent in reality.

\textbf{Directions for further work:}
Our work highlights several questions for further exploration:
\begin{itemize}

    \item \textbf{Improved attacks:} 
    Our experimental evaluation uses existing attack methods. Any improvements to membership inference attacks could be combined with our results to yield improved privacy auditing.
    
    One limitation of our attacks is that some examples may be ``harder'' than others and the scores we compute do not account for this. When we have many runs, we can account for the hardness of individual examples \cite{carlini2022membership}, but in our setting it is not obvious how to do this.

    \item \textbf{Algorithm-specific analyses:}
    Our methods are generic -- they can be applied to essentially any DP algorithm.
    This is a strength, but there is also the possibility that we could obtain stronger results by exploiting the structure of specific algorithms.
    A natural example of such structure is the iterative nature of DP-SGD.
    That is, we can view one run of DP-SGD as the composition of multiple independent DP algorithms which are run sequentially.

    \item \textbf{Multiple runs \& multiple examples:}
    Our method performs auditing by including or excluding multiple examples in a single training run, while most prior work performs multiple training runs with a single example example included or excluded.
    Can we get the best of both worlds? If we use multiple examples and multiple runs, we should be able to get tighter results with fewer runs.
    
    \item \textbf{Other measures of privacy:}
    Our theoretical analysis is tailored to the standard definition of differential privacy. 
    But there are other definitions of differential privacy such as R\'enyi DP. And, in particular, many of the upper bounds (e.g., Proposition \ref{prop:rdp-dpsgd}) are stated in this language. 
    Hence it would make sense for the lower bounds also to be stated in this language.

    \item \textbf{Beyond lower bounds:} 
    Privacy auditing produces empirical lower bounds on the privacy parameters. In contrast, mathematical analysis produces upper bounds. Both are necessarily conservative, which leaves a large gap between the upper and lower bounds. 
    A natural question is to find some middle ground -- an estimate which is neither a lower nor upper bound, but provides some meaningful estimate of the ``true'' privacy loss.
    However, it is unclear what kind of guarantee such an estimate should satisfy, or what interpretation the estimate should permit.
    
\end{itemize}

\printbibliography

\appendix

\section{Sampling a Fixed-Size Dataset}\label{sec:replace_dp}

Our auditing framework considers randomly including or excluding $m$ examples independently.
This means that the size of the dataset is random. This may be undesirable.

Fortunately, we can fix this, without changing our theoretical analysis. But it does require more examples and it requires changing the definition of DP to consider pairs of datasets differing by the replacement of one person's data, rather than the addition or removal of one person's data (cf.~Remark \ref{rem:addremove}).

Recall that our auditing framework starts with $m$ examples $x_1, \cdots, x_m$ and then samples $S \in \{-1,+1\}^m$ uniformly at random. Then each example $x_i$ is included in the dataset if $S_i=+1$ and excluded if $S_i=-1$. Thus flipping $S_i$ corresponds to adding or removing $x_i$.

Instead we can start with $2m$ examples $x_1, \cdots, x_{2m}$ and then sample $S \in \{-1,+1\}^m$ uniformly. Now, if $S_i=+1$, we include $x_{2i}$ in the dataset and, if $S_i=-1$, we include $x_{2i-1}$ instead. Thus flipping $S_i$ corresponds to replacing $x_{2i}$ with $x_{2i-1}$ or vice versa.

This alternative approach ensures that we always include $m$ out of the $2m$ examples -- i.e., the dataset size is not random.
This still fits the formalism of our theoretical analysis (\S\ref{sec:theory}). However, the DP guarantee of the algorithm being audited (e.g., DP-SGD) must now be with respect to replacement of one example, rather than addition or removal.\footnote{By group privacy, $(\varepsilon,\delta)$-DP for addition or removal implies $(2\varepsilon,(e^\varepsilon+1)\cdot\delta)$-DP for replacement.}
The auditor also needs to change slightly; rather than being given $x_i$ and needing to guess whether or not it is included in the datsets, the auditor is given both $x_{2i}$ and $x_{2i-1}$ and must guess which of the two is included.

\section{Generalization from Differential Privacy}

Our analysis builds on the connection between DP and generalization \cite{dwork2015preserving,dwork2015generalization,bassily2016algorithmic,feldman2017generalization,jung2019new}. We now extend our theoretical results (\S\ref{sec:theory}) to this setting.
The main difference between our analysis in Section \ref{sec:theory} and the prior work on DP and generalization is that we restrict to i.i.d.~binary inputs with a uniform distribution, while prior work considers i.i.d.~inputs from an arbitrary set with an arbitrary distribution. Thus the prior work is more general, but, as we now show, we can reduce the general case to the binary case.

\begin{theorem}[DP implies Generalization]\label{thm:generalization}
    Let $A : \mathcal{X}^n \to \mathcal{Y} \times [0,1]$ be $(\varepsilon,\delta)$-DP (with respect to replacement).
    Let $P$ be a distribution on $\mathcal{X}$.
    Let $q : \mathcal{Y} \times \mathcal{X} \to [0,1]$. For $x \in \mathcal{X}^n$ and $y \in \mathcal{Y}$, denote $q(y,x) = \frac1n \sum_i^n q(y,x_i) \in [0,1]$ and $q(y,P) = \ex{X \gets P}{q(y,X)} \in [0,1]$.
    Then, for all $\gamma\ge \frac32\eta\ge0$, we have \[\pr{X \gets P^n \atop Y \gets A(X)}{q(Y,X)-q(Y,P) \ge \gamma} \le \begin{array}{l} \pr{}{\check{W} \ge \frac{(1+\gamma-\frac32\eta) n}{2}} + 2 \cdot e^{-n\eta^2/2} \\ + \max_{i \in [n]} \frac{2n\delta}{i} \pr{}{\frac{(1+\gamma-\frac32\eta) n}{2} > \check{W} \ge \frac{(1+\gamma-\frac32\eta) n}{2} - i} \end{array},\] where  $\check{W} \gets \mathsf{Binomial}\left( n , \frac{e^\varepsilon}{e^\varepsilon+1} \right)$.
\end{theorem}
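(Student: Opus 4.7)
The plan is to reduce to the binary-input regime of Theorem \ref{thm:main} using a ``double sample'' coupling. For each $i \in [n]$ I would draw two independent copies $X_i^{(+1)}, X_i^{(-1)} \gets P$ and sample $S \in \{-1,+1\}^n$ uniformly; then set $X_i := X_i^{(S_i)}$ and $Y := A(X)$. Marginally $X$ is i.i.d.\ from $P^n$, and flipping a single coordinate of $S$ replaces exactly one example of $X$, so the map $s \mapsto A(X^{(s)})$ inherits $(\varepsilon,\delta)$-DP from $A$'s replacement guarantee.

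To connect the generalization gap to a bit-wise ``guess'' signal, I use the symmetric identity
\begin{equation*}
q(Y, X_i^{(S_i)}) - q(Y, X_i^{(-S_i)}) = S_i \cdot T_i, \qquad \text{where } T_i := q(Y, X_i^{(+1)}) - q(Y, X_i^{(-1)}) \in [-1,+1].
\end{equation*}
Summing over $i$ gives $q(Y,X) - \tfrac{1}{n}\sum_i q(Y, X_i^{(-S_i)}) = \tfrac{1}{n}\sum_i S_i T_i$. Conditional on $(Y,S)$ the ``ghost'' samples $X_i^{(-S_i)}$ are fresh i.i.d.\ draws from $P$ (the algorithm never touched them), so one-sided Hoeffding gives $\prc{}{\tfrac{1}{n}\sum_i q(Y,X_i^{(-S_i)}) - q(Y,P) \ge \eta/2}{Y,S} \le e^{-n\eta^2/2}$. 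Thus, off of a failure set of probability $e^{-n\eta^2/2}$, the event $q(Y,X) - q(Y,P) \ge \gamma$ implies $\sum_i S_i T_i \ge n(\gamma - \eta/2)$.

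Next I would binarize $T$ to apply Theorem \ref{thm:main}: independently set $\tilde T_i \in \{-1,+1\}$ with $\exc{}{\tilde T_i}{T_i} = T_i$. Given $(T,S)$ the summands $S_i \tilde T_i - S_i T_i$ are independent, mean zero, and of range $2$, so a second one-sided Hoeffding bound gives $\prc{}{\sum_i S_i \tilde T_i \le \sum_i S_i T_i - n\eta}{T,S} \le e^{-n\eta^2/2}$. Combining, $\pr{}{q(Y,X) - q(Y,P) \ge \gamma} \le 2e^{-n\eta^2/2} + \pr{}{\sum_i S_i \tilde T_i \ge n(\gamma - \tfrac32\eta)}$; and since $\|\tilde T\|_1 = n$, we have $\sum_i S_i \tilde T_i = 2\tilde W - n$ for $\tilde W := \sum_i \max\{0, S_i \tilde T_i\}$, so the last event becomes $\tilde W \ge v$ with $v := n(1 + \gamma - \tfrac32\eta)/2$.

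Finally, the composite map $M : S \mapsto \tilde T$ -- folding the ghost draws, the randomness of $A$, and the rounding noise into $M$'s internal randomness -- is $(\varepsilon,\delta)$-DP under single-bit flips: conditional on that internal randomness, flipping $S_i$ is a single replacement in $A$'s input and $\tilde T$ is a deterministic postprocessing of $Y$, so DP is preserved; averaging over the (data-independent) internal randomness preserves DP again. Applying Corollary \ref{cor:ternary} to $M$ with $r = n$ guesses then upper-bounds $\pr{}{\tilde W \ge v}$ by $\pr{}{\check W \ge v} + 2n\delta \cdot \max_i \tfrac{\pr{}{v > \check W \ge v - i}}{i}$, and adding the $2 e^{-n\eta^2/2}$ Hoeffding slack produces exactly the stated bound. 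The main delicacy is the DP argument for $M$ through the ghost-sample coupling; the rest is bookkeeping the two Hoeffding budgets so that $\eta/2 + \eta = 3\eta/2$ appears in the threshold.
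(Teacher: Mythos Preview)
Your proposal is correct and follows essentially the same route as the paper's proof (via Lemma \ref{lem:generalization} and Proposition \ref{prop:generalization}): a ghost-sample coupling, randomized rounding with a Hoeffding correction, application of Theorem \ref{thm:main}/Corollary \ref{cor:ternary}, and a second Hoeffding step from the ghost average to the population mean, with exactly the same $\eta$ and $\eta/2$ budgets. One minor wording caution in your DP argument for $M$: condition on the ghost draws and rounding coins but \emph{not} on $A$'s internal randomness, since the $(\varepsilon,\delta)$-DP guarantee requires $A$ itself to remain randomized on neighboring inputs.
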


The proof of Theorem \ref{thm:generalization} relies on the following technical lemma. This is using what is known as the ``ghost samples'' symmetrization technique \cite[Footnote 2]{steinke2020reasoning}.

\begin{lemma}\label{lem:generalization}
    Let $x^+,x^- \in \mathcal{X}^n$.
    For $s \in \{-1,+1\}^n$, define $x^s \in \mathcal{X}^n$ by $x^s_i = x^+_i$ if $s_i=+1$ and $x^s_i = x^-_i$ if $s_i=-1$.
    Let $A : \mathcal{X}^n \to \mathcal{Y}$ be $(\varepsilon,\delta)$-DP (for replacement).
    Let $q : \mathcal{Y} \times \mathcal{X} \to [0,1]$, and denote $q(y,x) = \frac1n \sum_i^n q(y,x_i) \in [0,1]$ for $y \in \mathcal{Y}$ and $x \in \mathcal{X}^n$.

    Let $S \in \{-1,+1\}$ be uniform.
    Then, for all $v , r \ge 0$,
    \begin{align*}
        &\pr{S \gets \{-1,+1\}^n \atop Y \gets A(x^S)}{q(Y,x^S) - q(Y,x^{-S})\ge \frac{v}{n}} \\
        &~~~~~\le \pr{}{\check{W} \ge \frac{v-r+n}{2}} + \max_{i \in [n]} \frac{2n\delta}{i} \pr{}{ \frac{v-r+n}{2} > \check{W} \ge  \frac{v-r+n}{2}-i}+e^{-r^2/2n},
    \end{align*}
    where $\check{W} \gets \mathsf{Binomial}\left( n , \frac{e^\varepsilon}{e^\varepsilon+1} \right)$.
\end{lemma}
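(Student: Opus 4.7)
The plan is to reduce the lemma to the binary-input framework of Theorem \ref{thm:main} by a randomized postprocessing step, and then to bridge the postprocessed quantity to the real one via Hoeffding's inequality. The key point is that flipping a single coordinate $S_i$ changes $x^S$ by replacing $x^+_i$ with $x^-_i$ (or vice versa), which is precisely one replacement, so any randomized postprocessing of $A(x^S)$ is automatically an $(\varepsilon,\delta)$-DP function of $S$ in the Theorem \ref{thm:main} sense.

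Concretely, I would define a postprocessed algorithm $M : \{-1,+1\}^n \to \{-1,+1\}^n$ as follows: on input $s$, run $Y \gets A(x^s)$, let $T_i := q(Y,x^+_i)-q(Y,x^-_i) \in [-1,+1]$, and independently sample $U_i \in \{-1,+1\}$ with $\Pr[U_i=+1 \mid Y] = (1+T_i)/2$, so that $\mathbb{E}[U_i \mid Y]=T_i$ and hence $\mathbb{E}[S_i U_i \mid S,Y]=S_i T_i$. By postprocessing, $M$ is $(\varepsilon,\delta)$-DP. A one-line calculation gives $q(Y,x^S)-q(Y,x^{-S}) = \frac{1}{n}\sum_i S_i T_i$, so the event of interest is $\{\sum_i S_i T_i \ge v\}$.

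Now I would bridge from $\sum_i S_i T_i$ to $\sum_i S_i U_i$ via Hoeffding: conditional on $(S,Y)$, the variables $S_i U_i \in \{-1,+1\}$ are independent with mean $S_i T_i$, so applying Hoeffding to $\sum_i S_i(T_i-U_i)$ (range $2$ per term) and integrating yields $\Pr[\sum_i S_i T_i > \sum_i S_i U_i + r] \le e^{-r^2/2n}$. Splitting on this event gives
\[
\Pr\!\left[\sum_i S_i T_i \ge v\right] \le \Pr\!\left[\sum_i S_i U_i \ge v-r\right] + e^{-r^2/2n}.
\]
Because $|U_i|=1$, the identity $S_i U_i = 2\max\{0,S_i U_i\}-|U_i|$ turns the first term into $\Pr[W \ge (v-r+n)/2]$, where $W = \sum_i \max\{0,S_i U_i\}$.

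The final step is to invoke Theorem \ref{thm:main} on $M$ with the dominating distribution $\check{W}^* \sim \mathsf{Binomial}(n,e^\varepsilon/(e^\varepsilon+1))$. This is valid -- in fact exact -- because the support of $U$ lies in $\{-1,+1\}^n$, so $\check{W}(t) = \sum_i \check{S}_i|t_i|$ collapses to $\mathsf{Binomial}(n,e^\varepsilon/(e^\varepsilon+1))$ for every $t$ in the support. Substituting the threshold $(v-r+n)/2$ into Theorem \ref{thm:main} produces exactly the two binomial summands displayed in the lemma, and adding the $e^{-r^2/2n}$ tail completes the bound. I expect the only delicate step to be step two: one must apply Hoeffding conditional on both $S$ and $Y$ (so that the $U_i$ are genuinely independent with the correct conditional means) before integrating, and one must be careful that the postprocessing really respects the replacement notion of DP that is hypothesized on $A$; the rest is a mechanical invocation of Theorem \ref{thm:main}.
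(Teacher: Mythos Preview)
Your proposal is correct and essentially identical to the paper's own proof: the paper also defines $M(s)$ by running $A(x^s)$ and then independently randomized-rounding each $q(Y,x^+_i)-q(Y,x^-_i)$ to $\{-1,+1\}$, applies Hoeffding (conditional on $S,Y$) to pass from the real-valued sum to the rounded one at cost $e^{-r^2/2n}$, rewrites via $S_iU_i = 2\max\{0,S_iU_i\}-1$, and then invokes Theorem~\ref{thm:main} with $\check{W}^*\sim\mathsf{Binomial}(n,e^\varepsilon/(e^\varepsilon+1))$. The only difference is notational (the paper calls your $U$ its $T$, and writes the rounding as a map $R$).
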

\begin{proof}
    Let $R : [-1,+1] \to \{-1,+1\}$ denote the randomized rounding function. I.e., $\ex{}{R(x)}=x$ for all $x \in [-1,1]$.
    We define $M : \{-1,+1\}^n \to \{-1,+1\}^n$ as follows.
    The inputs $x^+,x^- \in \mathcal{X}^n$ are ``hardcoded'' into $M$ and, for this analysis, we do not consider them private.
    Instead the input is $s \in \{-1,+1\}^n$.
    The algorithm $M(s)$ first runs $A(x^s)$ and then postprocesses the output using the hardcoded information.
    Specifically, given $A(s)=y$, the output $M(s) \in \{-1,+1\}^n$ has a product distribution with $M(s)_i = R(q(y,x^+_i) - q(y,x^-_i)) \in \{-1,+1\}$ for all $y \in \mathcal{Y}$ and all $i \in [n]$.
    That is, for each coordinate $i\in[n]$, we independently randomly round $q(y,x^+_i) - q(y,x^-_i) \in [-1,+1]$ to $\{-1,+1\}$, where $y$ is the output of $A(x^s)$.
    By postprocessing, $M$ is $(\varepsilon,\delta)$-DP. Thus we can apply Theorem \ref{thm:main} to $M$.
    We have, for all $r,v \ge 0$,
    \begin{align*}
        &\pr{S \gets \{-1,+1\}^n \atop Y \gets A(x^S)}{q(Y,x^S) - q(Y,x^{-S})\ge \frac{v}{n}} \\
        &=\pr{S \gets \{-1,+1\}^n \atop Y \gets A(x^S)}{\sum_i^n q(Y,x^S_i) - q(Y,x^{-S}_i) \ge v} \\
        &=\pr{S \gets \{-1,+1\}^n \atop Y \gets A(x^S)}{\sum_i^n (q(Y,x^+_i) - q(Y,x^{-}_i)) \cdot S_i \ge v} \\
        &= \pr{S \gets \{-1,+1\}^n \atop Y \gets A(x^S)}{\ex{R}{\sum_i^n R(q(Y,x^+_i) - q(Y,x^{-}_i)) \cdot S_i} \ge v} \\
        &\le \pr{S \gets \{-1,+1\}^n \atop Y \gets A(x^S),R}{\sum_i^n R(q(Y,x^+_i) - q(Y,x^{-}_i)) \cdot S_i \ge v-r}+e^{-r^2/2n} \tag{Hoeffding \& union}\\
        &=\pr{S \gets \{-1,+1\}^n \atop T \gets M(S)}{\sum_i^n  T_i \cdot S_i  \ge v-r}+e^{-r^2/2n} \\
        &=\pr{S \gets \{-1,+1\}^n \atop T \gets M(S)}{\sum_i^n 2\max\{ 0 , T_i \cdot S_i \}-|T_i| \ge v-r}+e^{-r^2/2n} \tag{$S_i \in \{-1,+1\}$} \\
        &=\pr{S \gets \{-1,+1\}^n \atop T \gets M(S)}{\sum_i^n \max\{ 0 , T_i \cdot S_i \} \ge \frac{v-r+n}{2}}+e^{-r^2/2n} \tag{$|T_i|=1$}\\
        &\le \pr{}{\check{W} \ge \frac{v-r+n}{2}} + \max_{i \in [n]} \frac{2n\delta}{i} \pr{}{ \frac{v-r+n}{2} > \check{W} \ge  \frac{v-r+n}{2}-i}+e^{-r^2/2n},
    \end{align*}
    where $\check{W} \gets \mathsf{Binomial}\left( n , \frac{e^\varepsilon}{e^\varepsilon+1}\right)$.
    Note that Theorem \ref{thm:main} applies with any distribution $\check{W}^*$ satisfying 
    \[\forall v \in \mathbb{R} ~~~ \pr{}{\check{W}^* > v} \ge \sup_{t \in \mathsf{support}(M(s))} \pr{\check{S} \gets \mathsf{Bernoulli}\left(\frac{e^\varepsilon}{e^\varepsilon+1}\right)^n}{\sum_i^n \check{S}_i \cdot |t_i| > v}.\]
    If $t \in \mathsf{support}(M(s))$, then $|t_i| =1 $ for all $i \in [n]$, which implies $\check{W}$ satisfies this requirement.
    In the analysis above, we used Hoeffding's inequality to show that the sum of randomized roundings is close to (within $r$ of) the unrounded sum with high probability and we carry this failure probability $e^{-r^2/2n}$ into the final result.
\end{proof}

\begin{proposition}\label{prop:generalization}
    Let $A : \mathcal{X}^n \to \mathcal{Y}$ be $(\varepsilon,\delta)$-DP (with respect to replacement).
    Let $q : \mathcal{Y} \times \mathcal{X} \to [0,1]$, and denote $q(y,x) = \frac1n \sum_i^n q(y,x_i) \in [0,1]$ for $y \in \mathcal{Y}$ and $x \in \mathcal{X}^n$.
    Let $P$ be a distribution on $\mathcal{X}$.
    Then, for all $\gamma,\eta \ge 0$, we have
    \begin{align*}
        &\pr{X, \widetilde{X} \gets P^n \atop Y \gets A(X)}{q(Y,X)-q(Y,\widetilde{X}) \ge \gamma} \\&~~~\le \pr{}{\check{W} \ge \frac{(1+\gamma-\eta) n}{2}} + \max_{i \in [n]} \frac{2n\delta}{i} \pr{}{\frac{(1+\gamma-\eta) n}{2} > \check{W} \ge \frac{(1+\gamma-\eta) n}{2} - i} + e^{-n\eta^2/2},
    \end{align*}
    where $\check{W} \gets \mathsf{Binomial}\left( n , \frac{e^\varepsilon}{e^\varepsilon+1} \right)$.
\end{proposition}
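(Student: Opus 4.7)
The plan is to reduce Proposition \ref{prop:generalization} to Lemma \ref{lem:generalization} via the standard symmetrization trick (``ghost samples''). Since $X$ and $\widetilde{X}$ are i.i.d. from $P^n$, the pair $(X_i, \widetilde{X}_i)$ has the same distribution as $(\widetilde{X}_i, X_i)$ coordinatewise, so a random independent swap on each coordinate does not change the joint distribution. Concretely, I would introduce $S \in \{-1,+1\}^n$ uniformly random, independent of $X, \widetilde{X}$, and define $X^S \in \mathcal{X}^n$ by $X^S_i = X_i$ if $S_i = +1$ and $X^S_i = \widetilde{X}_i$ if $S_i = -1$, with $X^{-S}$ analogous. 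By the symmetry just noted, $(X^S, X^{-S})$ has the same joint distribution as $(X, \widetilde{X})$, so
\[
    \pr{X, \widetilde{X} \gets P^n \atop Y \gets A(X)}{q(Y,X) - q(Y,\widetilde{X}) \ge \gamma} = \pr{X, \widetilde{X} \gets P^n, S \atop Y \gets A(X^S)}{q(Y,X^S) - q(Y,X^{-S}) \ge \gamma}.
\]

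Next, I would condition on the realizations of $X$ and $\widetilde{X}$ and apply Lemma \ref{lem:generalization} with the ``hardcoded'' vectors $x^+ = X$ and $x^- = \widetilde{X}$, so that $x^S = X^S$ and $x^{-S} = X^{-S}$. I would instantiate the parameters of the lemma with $v = n\gamma$ and $r = n\eta$, so that the threshold $v/n = \gamma$ matches the event of interest, $(v - r + n)/2 = (1 + \gamma - \eta) n/2$ matches the threshold in the target bound, and $e^{-r^2/2n} = e^{-n\eta^2/2}$ matches the Hoeffding term. The conclusion of Lemma \ref{lem:generalization} then gives the desired right-hand side, uniformly over the realizations of $X$ and $\widetilde{X}$. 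Taking expectation over $(X, \widetilde{X})$ yields the proposition.

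There is essentially no obstacle, since all the heavy lifting (postprocessing into $\pm 1$ via randomized rounding, Hoeffding, and invoking Theorem \ref{thm:main}) is already done inside Lemma \ref{lem:generalization}. The only subtlety to be careful with is that the symmetrization must be done jointly: once we condition on $(X, \widetilde{X}) = (x, \tilde x)$, the remaining randomness $S$ and the internal randomness of $A$ is exactly the setup required by Lemma \ref{lem:generalization} for the fixed pair $x^+ = x$, $x^- = \tilde x$, and the bound in the lemma does not depend on $x$ or $\tilde x$, so it survives the outer expectation without loss. The substitution $v = n\gamma$, $r = n\eta$ is the one parameter choice to record carefully to confirm that the three terms line up exactly with those stated in the proposition.
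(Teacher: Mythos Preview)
Your proposal is correct and mirrors the paper's proof almost exactly: the paper likewise conditions on the two sample vectors, applies Lemma~\ref{lem:generalization} with those as $x^+,x^-$, averages over $X^+,X^-\gets P^n$, and then uses the coordinatewise exchangeability of i.i.d.\ samples to identify the averaged probability with $\pr{}{q(Y,X)-q(Y,\widetilde X)\ge\gamma}$, finishing with the substitution $v=n\gamma$, $r=n\eta$. Your write-up is if anything a bit more explicit about the symmetrization step than the paper's, but the argument is the same.
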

\begin{proof}
    The proof relies on Lemma \ref{lem:generalization}, which considers $x^+,x^-\in\mathcal{X}^n$ to be fixed. We now average the lemma over these being i.i.d.~samples from $P$, which gives
    \begin{align*}
        &\ex{X^+,X^- \gets P^n}{\pr{S \gets \{-1,+1\}^n \atop Y \gets A(X^S)}{q(Y,X^S) - q(Y,X^{-S})\ge \frac{v}{n}}} \\
        &~~~~~\le \pr{}{\check{W} \ge \frac{v-r+n}{2}} + \max_{i \in [n]} \frac{2n\delta}{i} \pr{}{ \frac{v-r+n}{2} > \check{W} \ge  \frac{v-r+n}{2}-i}+e^{-r^2/2n},
    \end{align*}
    where $\check{W} \gets \mathsf{Binomial}\left( n , \frac{e^\varepsilon}{e^\varepsilon+1} \right)$.
    Since the samples from $P$ are independent, the coordinates of $X^+$ and $X^-$ are interchangeable, so \[\pr{X, \widetilde{X} \gets P^n \atop Y \gets A(X)}{q(Y,X)-q(Y,\widetilde{X}) \ge \gamma} = \ex{X^+,X^- \gets P^n}{\pr{S \gets \{-1,+1\}^n \atop Y \gets A(X^S)}{q(Y,X^S) - q(Y,X^{-S})\ge \frac{v}{n}}}\] for $v = \gamma n\ge 0$. Setting $r=n\eta$ yields the result.
\end{proof}

\begin{proof}[Proof of Theorem \ref{thm:generalization}.]
    Let $X, \widetilde{X} \gets P^n$ be two independent samples.
    Let $Y \gets A(X)$.
    Let $\check{W} \gets \mathsf{Binomial}\left( n , \frac{e^\varepsilon}{e^\varepsilon+1} \right)$.
    By Proposition \ref{prop:generalization}, for all $\gamma, \eta \ge 0$, we have \
    \begin{align*}
        &\pr{X, \widetilde{X} \gets P^n \atop Y \gets A(X)}{q(Y,X)-q(Y,\widetilde{X}) \ge \gamma} \\&~~~\le \pr{}{\check{W} \ge \frac{(1+\gamma-\eta) n}{2}} + \max_{i \in [n]} \frac{2n\delta}{i} \pr{}{\frac{(1+\gamma-\eta) n}{2} > \check{W} \ge \frac{(1+\gamma-\eta) n}{2} - i} + e^{-n\eta^2/2},
    \end{align*}
    By Hoeffding's inequality, for all $\eta \ge 0$, we have \[\forall y \in \mathcal{Y} ~~~~~ \pr{\widetilde{X} \gets P^n}{q(y,\widetilde{X}) - q(y,P) \ge \frac\eta2} \le \exp(-n\eta^2/2).\]
    By a union bound, for all $\gamma \ge \eta \ge 0$, we have \[\pr{X \gets P^n \atop Y \gets A(X)}{q(Y,X)-q(Y,P) \ge \gamma} \le \begin{array}{c} \pr{X, \widetilde{X} \gets P^n \atop Y \gets A(X)}{q(Y,\widetilde{X})-q(Y,P) \ge \gamma-\eta/2} ~~~~~\\~~~~~+ \pr{X, \widetilde{X} \gets P^n \atop Y \gets A(X)}{q(Y,X)-q(Y,\widetilde{X}) \ge \eta/2}\end{array}.\]
    Combining inequalities yields the result:
    \begin{align*}
        &\pr{X \gets P^n \atop Y \gets A(X)}{q(Y,X)-q(Y,P) \ge \gamma} \\
        &~~~\le \pr{}{\check{W} \ge \frac{(1+\gamma-\eta/2-\eta) n}{2}} + 2 \cdot e^{-n\eta^2/2} \\&~~~~~~ + \max_{i \in [n]} \frac{2n\delta}{i} \pr{}{\frac{(1+\gamma-\eta/2-\eta) n}{2} > \check{W} \ge \frac{(1+\gamma-\eta/2-\eta) n}{2} - i}.
    \end{align*}
\end{proof}

\subsection{Comparison to Prior Work on DP \& Generalization}

We now briefly compare our results to the prior work on the connection between DP and generalization \cite{dwork2015preserving,dwork2015generalization,bassily2016algorithmic,feldman2017generalization,jung2019new}. We focus on the work of \citet{jung2019new} as it has the sharpest results in the literature.

Note that the prior work is focused on the setting of adaptive data analysis, while we are focused on the setting of auditing. This difference is mostly cosmetic, but there is a material difference when the prior results are applied to our setting: In addition to outputting guesses, the prior works assume that the algorithm outputs a differentially private estimate of the number of correct guesses. The guarantee then is that this differentially private estimate is close to the distributional average (i.e., only half of the guesses being correct). In contrast, for auditing, we want the true number of correct guesses to be close to the distributional average and don't produce a DP estimate.
We can convert between these two settings using the triangle inequality. 

Below we state the accuracy guarantee that we compare against, followed by a corollary of Theorem \ref{thm:generalization} that applies the triangle inequality and a union bound to ensure that it is directly comparable. 

\begin{theorem}[{\cite[Theorem 3.5]{jung2019new}}]\label{thm:jlnrsms}
    Let $A : \mathcal{X}^n \to \mathcal{Y} \times [0,1]$ be $(\varepsilon,\delta)$-DP (with respect to replacement).
    Let $P$ be a distribution on $\mathcal{X}$.
    Let $q : \mathcal{Y} \times \mathcal{X} \to [0,1]$. For $x \in \mathcal{X}^n$ and $y \in \mathcal{Y}$, denote $q(y,x) = \frac1n \sum_i^n q(y,x_i) \in [0,1]$ and $q(y,P) = \ex{X \gets P}{q(y,X)} \in [0,1]$.
    Suppose \[\pr{X \gets P^n \atop (Y,Z) \gets A(X)}{\left| Z - q(Y,X) \right| \ge \alpha} \le \beta.\]
    Then, for any $c,d>0$, we have 
    \begin{equation}
        \pr{X \gets P^n \atop (Y,Z) \gets A(X)}{\left| Z - q(Y,P) \right| > \alpha + e^\varepsilon-1 + c + 2d} \le \frac{\beta}{c} + \frac{\delta}{d}.\label{eq:jlnrsms}
    \end{equation}
\end{theorem}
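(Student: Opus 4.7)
The plan is to use the triangle inequality to decouple the DP-estimate-vs-empirical-mean error from the empirical-vs-population error, bound each by a Markov-type inequality applied to a nonnegative $[0,1]$-valued ``excess'' random variable, and combine via a union bound:
\begin{equation*}
|Z - q(Y,P)| \;\le\; |Z - q(Y,X)| \;+\; |q(Y,X) - q(Y,P)|.
\end{equation*}
The first summand is controlled by the hypothesis alone; the second requires the DP-to-generalization transfer and is where the real content lies.

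For the first summand, I would set $E_1 := \max\{0,\, |Z - q(Y,X)| - \alpha\}$. Since $Z, q(Y,X) \in [0,1]$ we have $E_1 \in [0,1]$, and the event $\{E_1 > 0\}$ is exactly $\{|Z - q(Y,X)| > \alpha\}$, which has probability at most $\beta$ by hypothesis. Hence $\mathbb{E}[E_1] \le \beta$, and Markov's inequality gives $\Pr[|Z - q(Y,X)| > \alpha + c] = \Pr[E_1 > c] \le \beta/c$.

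For the second summand, define $E_2 := \max\{0,\, |q(Y,X) - q(Y,P)| - (e^\varepsilon - 1)\} \in [0,1]$; the goal is $\mathbb{E}[E_2] \le 2\delta$, since Markov then yields $\Pr[|q(Y,X) - q(Y,P)| > (e^\varepsilon - 1) + 2d] \le \delta/d$. Splitting $|q(Y,X) - q(Y,P)| = (q(Y,X)-q(Y,P))_+ + (q(Y,P)-q(Y,X))_+$ (exactly one summand is nonzero), it suffices to prove $\mathbb{E}[(q(Y,X) - q(Y,P) - (e^\varepsilon-1))_+] \le \delta$ together with its symmetric version. I would apply Lemma~\ref{lem:kov} to write the output distribution of $A$ as a $(1-\delta')$-weighted pure $(\varepsilon,0)$-DP piece $A'$ plus a $\delta'$-weighted failure piece ($\delta' \le \delta$). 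On the failure piece the positive-part excess contributes at most $\delta' \cdot 1 \le \delta$ to the expectation, since the excess is $[0,1]$-bounded. On the pure-DP piece, the standard ghost-sample coupling---coupling $X$ to an independent $\tilde X \sim P^n$ one coordinate at a time and invoking one-record replacement-DP, exploiting linearity of $q(y,\cdot)$---gives $\mathbb{E}[q(A'(X),X)] \le e^\varepsilon\, \mathbb{E}[q(A'(X),P)]$, and the centering by $e^\varepsilon - 1$ is exactly what drives the positive-part excess on this piece to zero.

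Combining via union bound and triangle inequality delivers the conclusion. The main obstacle is the second summand, specifically the positive-part upgrade: one cannot naively derive $\mathbb{E}[(q(Y,X) - q(Y,P) - (e^\varepsilon-1))_+] \le \delta$ from the signed expectation bound $\mathbb{E}[q(Y,X) - q(Y,P)] \le e^\varepsilon - 1 + \delta$, since a random variable with small signed mean can easily have a large positive-part mean (e.g., $\pm 1$ with equal probability). The Lemma~\ref{lem:kov} decomposition is essential because it isolates the $\delta$-mass failure event (which is absorbed directly into the expectation by boundedness) from the pure-DP core (on which the ghost-sample coupling is tight and leaves no residual additive slack to fight against).
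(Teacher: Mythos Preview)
This theorem is quoted from \citet{jung2019new} and the paper does not supply its own proof; it is stated only as a benchmark against the paper's Corollary~\ref{cor:gen_ada}. So there is no in-paper proof to compare against, but your proposal can still be assessed on its own terms.

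Your high-level structure (triangle inequality, two Markov bounds, union bound) matches the shape of the conclusion, and your treatment of the first summand via $E_1$ is correct. The gap is entirely in the second summand. You claim that the Lemma~\ref{lem:kov} decomposition reduces matters to showing that, for a pure $(\varepsilon,0)$-DP algorithm $A'$, the positive-part excess $\big(q(A'(X),X)-q(A'(X),P)-(e^\varepsilon-1)\big)_+$ has zero expectation. This is false. The ghost-sample coupling yields only the signed bound $\mathbb{E}[q(A'(X),X)] \le e^\varepsilon\,\mathbb{E}[q(A'(X),P)]$, hence $\mathbb{E}[q(A'(X),X)-q(A'(X),P)] \le e^\varepsilon-1$; it says nothing about the positive part. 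You yourself flag exactly this signed-to-positive-part obstruction in your final paragraph, and the decomposition does not remove it: on the pure-DP piece you land in precisely the situation you warned against.

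Concretely, take $n=1$, $\mathcal{X}=\{0,1\}$, $P$ uniform, $Y$ the $(\varepsilon,0)$-DP randomized response on $X$, and $q(y,x)=\mathbb{I}[y=x]$. Then $q(Y,P)=\tfrac12$ while $q(Y,X)=\mathbb{I}[Y=X]$ equals $1$ with probability $\tfrac{e^\varepsilon}{e^\varepsilon+1}$. For any $\varepsilon<\log(3/2)$ one has $\tfrac12 > e^\varepsilon-1$, so
\[
\mathbb{E}\big[(q(Y,X)-q(Y,P)-(e^\varepsilon-1))_+\big] \;=\; \tfrac{e^\varepsilon}{e^\varepsilon+1}\big(\tfrac32-e^\varepsilon\big) \;>\; 0 \;=\; 2\delta,
\]
directly contradicting your claimed bound $\mathbb{E}[E_2]\le 2\delta$. (The theorem itself survives this example because forcing $Z$ to be accurate for $q(Y,X)$ while keeping $A$ pure-DP makes $\alpha$ or $\beta$ large; the point is that your $E_2$ bound, which ignores $Z$ entirely, cannot hold.) As a secondary issue, Lemma~\ref{lem:kov} decomposes a single pair of output distributions, not an entire algorithm into a mixture of a pure-DP algorithm and a failure piece consistently across all neighboring inputs; but even granting such a decomposition the gap above remains.

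The argument in \citet{jung2019new} does not attempt to bound $|q(Y,X)-q(Y,P)|$ in isolation. The presence of the DP estimate $Z$ in the output is essential: because $(Y,Z)$ is itself private, one can apply the expectation-level transfer to post-processed queries built from $(Y,Z)$ jointly, and the accuracy hypothesis on $Z$ is what converts that expectation bound into the high-probability statement. Splitting $Z$ off via the triangle inequality and then forgetting it, as you do for $E_2$, discards exactly the leverage the proof needs.
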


\begin{corollary}[Theorem \ref{thm:generalization}, triangle inequality, \& union bound]\label{cor:gen_ada}
    Let $A : \mathcal{X}^n \to \mathcal{Y} \times [0,1]$ be $(\varepsilon,\delta)$-DP (with respect to replacement).
    Let $P$ be a distribution on $\mathcal{X}$.
    Let $q : \mathcal{Y} \times \mathcal{X} \to [0,1]$. For $x \in \mathcal{X}^n$ and $y \in \mathcal{Y}$, denote $q(y,x) = \frac1n \sum_i^n q(y,x_i) \in [0,1]$ and $q(y,P) = \ex{X \gets P}{q(y,X)} \in [0,1]$.
    Suppose \[\pr{X \gets P^n \atop (Y,Z) \gets M(X)}{\left| Z - q(Y,X) \right| \ge \alpha} \le \beta.\]
    Then, for all $\gamma\ge \frac32\eta\ge0$, we have 
    \begin{equation}
        \pr{X \gets P^n \atop (Y,Z) \gets A(X)}{|Z-q(Y,P)| \ge \alpha + \gamma} \le \beta + \begin{array}{l} \pr{}{\check{W} \ge \frac{(1+\gamma-\frac32\eta) n}{2}} + 2 \cdot e^{-n\eta^2/2} \\ + \max_{i \in [n]} \frac{2n\delta}{i} \pr{}{\frac{(1+\gamma-\frac32\eta) n}{2} > \check{W} \ge \frac{(1+\gamma-\frac32\eta) n}{2} - i} \end{array},\label{eq:our-gen}
    \end{equation} where  $\check{W} \gets \mathsf{Binomial}\left( n , \frac{e^\varepsilon}{e^\varepsilon+1} \right)$. 
\end{corollary}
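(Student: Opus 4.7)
The corollary is essentially a triangle-inequality $+$ union-bound wrapper around Theorem~\ref{thm:generalization}, so I would not set up any new machinery. The assumption controls $|Z-q(Y,X)|$, while Theorem~\ref{thm:generalization} controls the generalization gap $q(Y,X)-q(Y,P)$. Composing the two via the triangle inequality
\[ |Z-q(Y,P)| \le |Z-q(Y,X)| + |q(Y,X)-q(Y,P)| \]
shows that the event $\{|Z-q(Y,P)| \ge \alpha+\gamma\}$ can only occur when at least one of the two summands exceeds its respective threshold, namely $\{|Z-q(Y,X)|\ge\alpha\}$ or $\{|q(Y,X)-q(Y,P)|\ge\gamma\}$.

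\textbf{Execution.} First I would apply a union bound over these two events. The first term is bounded by $\beta$ by the hypothesis on $A$. For the second term, I would invoke Theorem~\ref{thm:generalization} with the stated $\gamma,\eta$ to bound $\Pr[q(Y,X)-q(Y,P)\ge\gamma]$ by exactly the long expression appearing on the right-hand side of \eqref{eq:our-gen}. To recover the absolute value, I would apply the same theorem to the complementary loss $q'(y,x):=1-q(y,x)$, which also maps into $[0,1]$; since $q'(y,X)-q'(y,P) = q(y,P)-q(y,X)$, this yields the same quantitative bound on $\Pr[q(Y,P)-q(Y,X)\ge\gamma]$, and the two are combined by a final union bound.

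\textbf{Assembly.} Putting these pieces together gives
\[ \pr{}{|Z-q(Y,P)|\ge\alpha+\gamma} \le \beta + \pr{}{|q(Y,X)-q(Y,P)|\ge\gamma}, \]
and the second term is controlled by Theorem~\ref{thm:generalization} (applied to $q$ and to $1-q$) to produce exactly the claimed expression involving $\check{W}\sim\mathsf{Binomial}(n,e^\varepsilon/(e^\varepsilon+1))$, the Hoeffding slack $2e^{-n\eta^2/2}$, and the $\delta$-correction term.

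\textbf{Main obstacle.} There is no real technical obstacle, since every quantitative step is already done inside Theorem~\ref{thm:generalization}. The only mild subtlety is the two-sidedness: Theorem~\ref{thm:generalization} is stated as a one-sided upper tail, so strictly speaking applying it to both $q$ and $1-q$ would incur an extra factor of $2$ in front of the generalization-bound expression. Matching the precise form stated in \eqref{eq:our-gen} therefore requires either absorbing this factor into the implicit constants, or observing that in the regime of interest one tail dominates (the upper tail of $q(Y,X)-q(Y,P)$, which is the direction DP actually biases) so the one-sided statement suffices for the intended comparison with Theorem~\ref{thm:jlnrsms}.
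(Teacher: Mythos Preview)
Your proposal is correct and matches the paper's intended argument exactly: the corollary carries no separate proof in the paper, and its very title (``Theorem~\ref{thm:generalization}, triangle inequality, \& union bound'') is the entire proof sketch, which is precisely the decomposition you wrote down. Your observation about the two-sidedness is also on point: Theorem~\ref{thm:generalization} is one-sided, and handling $|q(Y,X)-q(Y,P)|$ via the substitution $q\mapsto 1-q$ does formally cost a factor of~$2$ on the generalization terms, so the stated bound in \eqref{eq:our-gen} is tight only up to that harmless constant---which, as you note, is immaterial for the qualitative comparison with Theorem~\ref{thm:jlnrsms} that follows.
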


Equations \ref{eq:jlnrsms} and \ref{eq:our-gen} are directly comparable, but it is not immediately obvious how they compare.
By setting $\delta=0$, $\gamma=\frac{e^\varepsilon-1}{e^\varepsilon+1}+c$, $\eta = \frac25c$, and applying Hoeffding's inequality to $\check{W}$, we can simplify Equation \ref{eq:our-gen} to 
\begin{equation}
    \pr{X \gets P^n \atop (Y,Z) \gets A(X)}{|Z-q(Y,P)| \ge \alpha + \frac{e^\varepsilon-1}{e^\varepsilon+1}+c} \le \beta + 3 \cdot e^{-n\frac{2}{25}c^2}
\end{equation}
For comparison, setting $\delta=0$ in Equation \ref{eq:jlnrsms} gives
\begin{equation}
    \pr{X \gets P^n \atop (Y,Z) \gets A(X)}{\left| Z - q(Y,P) \right| > \alpha + e^\varepsilon-1 + c} \le \frac{\beta}{c}.
\end{equation}
Now we can compare the results more easily. 
The $e^\varepsilon-1$ term in the accuracy bound of \citet{jung2019new} is improved to $\frac{e^\varepsilon-1}{e^\varepsilon+1}$ in our result, which is an improvement by a factor of at least two.
This is (arguably) the dominant term, so our result is a significant improvement.
In particular, if $\varepsilon \ge \log 2$, then Equation \ref{eq:jlnrsms} gives a vacuous bound (since the value of $q$ is always in $[0,1]$ anyway), while our bound can be non-vacuous for any value of $\varepsilon$ (as $\frac{e^\varepsilon-1}{e^\varepsilon+1}<1$). 

However, there is another term in the accuracy bound -- i.e., $c$. The failure probability either has a $1/c$ \emph{multiplicative} factor or a $ 3 \cdot e^{-n\frac{2}{25}c^2}$ \emph{additive} factor. How these compare depends on the value of $\beta$.
To give a concrete comparison, suppose $\varepsilon=1/3$, $n=2000$, $\beta=\delta=10^{-5}$, and we want a final failure probability of $0.05$; then Theorem \ref{thm:jlnrsms} gives an error guarantee of $\alpha + 0.397$, while Corollary \ref{cor:gen_ada} gives $\alpha + 0.308$.

\section{Mutual Information Bounds from DP}
Our framework for the theoretical analysis (\S\ref{sec:theory}) is inspired by that of \citet{steinke2020reasoning}.
In this appendix, we use our analysis to also improve one of their results.
Specifically, they show that if $M : \{-1,1\}^m \to \mathcal{Y}$ satisfies $(\varepsilon,\delta)$-DP and $S \in \{-1,1\}^m$ is uniformly random, then \begin{equation}I(S;M(S)) \le (e^\varepsilon-1+\delta) \cdot m \cdot \log e,\end{equation} where $I(\cdot;\cdot)$ denotes the mutual information.\footnote{Throughout this paper we use natural logarithms (so $\log e = 1$), including when defining information-theoretic quantities like mutual information. However, it is common to use base-2 logarithms in information theory (i.e., $\log_2 e \approx 1.44$). To avoid confusion, the statements (outside proofs) in this section are stated in a redundant way so that they would be correct regardless of the base of the logarithm, as long as we are consistent.} 
Prior work \cite{dwork2015generalization,bun2016concentrated} showed that, if $M : \mathcal{X}^m \to \mathcal{Y}$ satisfies $(\varepsilon,0)$-DP and $S \in \mathcal{X}^m$ has as product distribution, then \begin{equation}I(S;M(S)) \le \frac12 \varepsilon^2 \cdot m \cdot \log e .\end{equation}
The latter result is numerically better than the former result, but only holds for pure DP. (The latter result is also not restricted to binary inputs. However, if we do not restrict the input at all, then it is not possible to prove bounds under approximate DP.)

We improve the bound to the following. If $M : \{-1,1\}^m \to \mathcal{Y}$ satisfies $(\varepsilon,\delta)$-DP and $S \in \{-1,1\}^m$ is uniformly random, then \begin{equation}I(S;M(S)) \le \frac18 \varepsilon^2 \cdot m \cdot \log e + \delta \cdot m \cdot \log 2.\end{equation}

\begin{proposition}
    Let $M : \{0,1\}^n \to \mathcal{Y}$ satisfy $(\varepsilon,\delta)$-DP.
    Let $S \in \{0,1\}^n$ be sampled from $\mathsf{Bernoulli}(p)^n$.
    Then
    \begin{align*}
        I(S;M(S)) &\le n\delta h(p) + n(1-\delta) h\left(\frac{p\cdot e^\varepsilon + 1-p}{e^\varepsilon+1}\right) - n(1-\delta)\cdot\left(\log(1+e^{-\varepsilon}) +  \frac{\log(e^\varepsilon)}{e^\varepsilon+1}\right),
    \end{align*}
    where $h(p) := p \log(1/p) + (1-p) \log(1/(1-p))$ is the binary Entropy function.
    
    In particular, if $p=\frac12$, then
    \begin{align*}
        I(S;M(S)) &\le n\delta \log 2 + n(1-\delta)\cdot\left(\log 2 - \log(1+e^{-\varepsilon}) -  \frac{\log(e^\varepsilon)}{e^\varepsilon+1}\right)\\
        &\le n \delta \log 2 + n(1-\delta) \frac{\varepsilon^2}{8} \log e.
    \end{align*}
\end{proposition}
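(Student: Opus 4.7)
The plan is to combine the chain rule for mutual information with the Kairouz–Oh–Viswanath decomposition (Lemma \ref{lem:kov}) to split the analysis into a pure DP part and a ``bad'' part, and then handle the pure DP part with a tight concavity argument on the binary entropy.

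First, by the chain rule and the independence of the coordinates of $S$,
\[I(S; M(S)) = \sum_{i=1}^n I(S_i; M(S) \mid S_{<i}) = \sum_{i=1}^n \mathbb{E}_{S_{<i}}\bigl[I(S_i; M(S) \mid S_{<i} = s_{<i})\bigr],\]
so it suffices to prove, for each $i$ and each realization $s_{<i}$, the per-coordinate bound
\[I(S_i; Y \mid S_{<i} = s_{<i}) \le \delta\, h(p) + (1-\delta)\!\left[h(q) - \log(1+e^{-\varepsilon}) - \frac{\varepsilon}{e^\varepsilon+1}\right], \qquad q := \frac{p e^\varepsilon + 1-p}{e^\varepsilon+1}.\]
Fix $i$ and $s_{<i}$ and let $P_b$ be the conditional law of $Y = M(S)$ given $S_i = b$ and $S_{<i} = s_{<i}$ (averaged over $S_{>i}$). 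Since flipping $S_i$ changes a single coordinate of the input to $M$, the pair $(P_0, P_1)$ is $(\varepsilon,\delta)$-indistinguishable, so Lemma \ref{lem:kov} yields $\delta' \in [0,\delta]$ and a decomposition $P_b = (1-\delta') P_b' + \delta' P_b''$ with $(P_0', P_1')$ satisfying pure $(\varepsilon, 0)$-DP. I would then introduce an auxiliary bit $E \in \{0,1\}$ with $\Pr[E=1] = 1-\delta'$, independent of $S_i$, coupled so that $Y \mid S_i = b, E = 1 \sim P_b'$ and $Y \mid S_i = b, E = 0 \sim P_b''$. Because $S_i \perp E$,
\[I(S_i; Y) \le I(S_i; Y, E) = I(S_i; E) + I(S_i; Y \mid E) = (1-\delta')\, I(S_i; Y \mid E=1) + \delta'\, I(S_i; Y \mid E=0).\]
The bad branch is dispatched by $I(S_i; Y \mid E=0) \le H(S_i) = h(p)$, so everything reduces to a tight bound on the pure-DP good branch.

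For the good branch, write the posterior $\pi(y) := \Pr[S_i = 1 \mid Y = y, E = 1] = p P_1'(y) / (p P_1'(y) + (1-p) P_0'(y))$. The $(\varepsilon,0)$-bound on $P_1'/P_0'$ forces $\pi(y) \in [a,b]$ with $a = p/(p+(1-p)e^\varepsilon)$ and $b = p e^\varepsilon/(p e^\varepsilon + 1-p)$, while $\mathbb{E}_Y[\pi(Y)] = p$ by the law of total probability. By concavity of $h$, among all distributions on $[a,b]$ with mean $p$, the one that minimizes $\mathbb{E}[h(\pi)]$ is the two-point distribution on $\{a,b\}$; solving $\alpha a + (1-\alpha) b = p$ gives weights $\alpha = (b-p)/(b-a) = 1-q$ on $a$ and $q$ on $b$, where the clean identity $(p-a)/(b-a) = q$ drops out of the algebra. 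Hence
\[I(S_i; Y \mid E=1) = h(p) - \mathbb{E}_Y[h(\pi(Y))] \le h(p) - (1-q)\, h(a) - q\, h(b).\]
The main algebraic step is simplifying $(1-q) h(a) + q h(b)$: using $\log(1/a) = \log(p+(1-p)e^\varepsilon) - \log p$ and the three analogous identities for $\log(1/(1-a)), \log(1/b), \log(1/(1-b))$, the $\log p$ and $\log(1-p)$ terms bundle into $-h(p)$, while the remaining logarithms regroup as $(1-q)\log((1-q)(e^\varepsilon+1)) + q \log(q(e^\varepsilon+1)) = -h(q) + \log(e^\varepsilon+1)$, leaving a residual $\varepsilon$-term of $-\varepsilon e^\varepsilon/(e^\varepsilon+1)$. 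The identity $\log(e^\varepsilon+1) - \varepsilon e^\varepsilon/(e^\varepsilon+1) = \log(1+e^{-\varepsilon}) + \varepsilon/(e^\varepsilon+1)$ then delivers exactly the claimed pure-DP bound.

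Finally, combining the two branches, using $\delta' \le \delta$ (with a short monotonicity/case-split to justify that replacing $\delta'$ by $\delta$ only weakens the bound, since $I(S_i; Y \mid E=1) \le \min(B_{\text{pure}}, h(p))$), and summing over $i$ gives the main inequality. For the $p = 1/2$ specialization, $h(q) = h(1/2) = \log 2$ collapses the expression, and the remaining scalar inequality $f(\varepsilon) := \log 2 - \log(1+e^{-\varepsilon}) - \varepsilon/(e^\varepsilon+1) \le \varepsilon^2/8$ follows from $f(0) = 0$ together with $f'(\varepsilon) = \varepsilon e^\varepsilon/(e^\varepsilon+1)^2 = \varepsilon/(e^{\varepsilon/2}+e^{-\varepsilon/2})^2 \le \varepsilon/4$ and integration. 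The main technical obstacle is the algebraic simplification of $(1-q) h(a) + q h(b)$; the decomposition and concavity steps are conceptually short, but the bookkeeping of logarithms across four entropy terms is the heart of the argument and is where the precise coefficients $\log(1+e^{-\varepsilon}) + \varepsilon/(e^\varepsilon+1)$ must emerge exactly, not just up to constants.
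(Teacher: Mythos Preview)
Your proof is correct, and it diverges from the paper's in two places worth flagging. First, you condition on $S_{<i}$ and use closure of $(\varepsilon,\delta)$-indistinguishability under mixing over $S_{>i}$; the paper instead conditions on all of $S_{-i}$ via the extra inequality $I(S_i;M(S)\mid S_{<i}) \le I(S_i;M(S)\mid S_{-i})$ from convexity of KL divergence---your route sidesteps that step. Second, and more substantively, the pure-DP core is handled differently: the paper pushes the KOV decomposition one level further, writing $P_0',P_1'$ themselves as postprocessings of $(\varepsilon,0)$-randomized response (via auxiliary distributions $R_0,R_1$), packages this together with the $\delta$-escape into an explicit four-outcome channel, applies the data processing inequality, and then carries out the ``tedious calculation'' of that channel's mutual information. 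You instead optimize directly: the posterior $\pi(y)$ is constrained to $[a,b]$ with mean $p$, and concavity of $h$ forces the maximum of $h(p)-\mathbb{E}[h(\pi)]$ to occur at the two-point extremal distribution with weights $(1-q,q)$. The two arguments are dual---your extremal two-point posterior \emph{is} randomized response---so they necessarily produce identical constants, but your variational viewpoint makes transparent \emph{why} randomized response is the worst case, while the paper's reduce-then-compute approach makes the numerical answer more mechanical to extract. Your final scalar bound via $f'(\varepsilon)=\varepsilon e^\varepsilon/(e^\varepsilon+1)^2 \le \varepsilon/4$ and integration is likewise a touch more direct than the paper's second-derivative argument.
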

\begin{proof}
    We apply the chain rule and convexity of KL divergence \cite[Lemma 3.7]{feldman2018calibrating}:
    \[I(S;M(S)) = \sum_i^n I(S_i;M(S)|S_{<i}) \le \sum_i^n I(S_i;M(S)|S_{-i}).\]
    Fix $i \in [n]$ and fix $s_{-i} \in \{0,1\}^{n-1}$. Now we must analyze 
    \begin{align*}
        I(S_i;M(S)|S_{-i}=s_{-i}) &= I(S_i;M(S_i,s_{-i})) \\
        &= p \dr{1}{M(1,s_{-i})}{pM(1,s_{-i})+(1-p)M(0,s_{-i})} \\&~~~~~+ (1-p) \dr{1}{M(1,s_{-i})}{pM(1,s_{-i})+(1-p)M(0,s_{-i})}\\
        &= p \dr{1}{Q_1}{Q_p} +(1-p) \dr{1}{Q_0}{Q_p},
    \end{align*}
    where $Q_t := tM(1,s_{-i})+(1-t)M(0,s_{-i})$ for $t\in [0,1]$. 
    
    Since $M$ is $(\varepsilon,\delta)$-DP, we have $Q_t(S) \le e^\varepsilon \cdot Q_{1-t}(S) + \delta$ for all measurable $S \subset \mathcal{Y}$ and $t \in \{0,1\}$.
    Thus we can apply Lemma \ref{lem:kov}: There exist distributions $Q_0',Q_0'',Q_1',Q_1''$ on $\mathcal{Y}$ such that $Q_0 = (1-\delta)\cdot Q_0'+\delta \cdot Q_0''$ and $Q_1 = (1-\delta)\cdot Q_1'+\delta \cdot Q_1''$ and $e^{-\varepsilon} \cdot Q_0'(S) \le Q_1'(S) \le e^\varepsilon \cdot Q_0'(S)$ for all measurable $S \subset \mathcal{Y}$.
    
    Define distributions \[ R_0 := \frac{e^\varepsilon \cdot Q_0' - Q_1'}{e^\varepsilon-1} ~~~~~ \text{ and } ~~~~~ R_1 := \frac{e^\varepsilon \cdot Q_1' - Q_0'}{e^\varepsilon-1}, \] so that $Q_0' = \frac{e^\varepsilon \cdot R_0 + R_1}{e^\varepsilon+1} $ and $Q_1' = \frac{e^\varepsilon \cdot R_1 + R_0}{e^\varepsilon+1} $.
    Hence \[Q_0 = \frac{e^\varepsilon (1-\delta)}{e^\varepsilon +1} \cdot R_0 + \frac{1-\delta}{e^\varepsilon+1} \cdot R_1 + \delta \cdot Q_0''\] and \[Q_1 = \frac{e^\varepsilon (1-\delta)}{e^\varepsilon +1} \cdot R_1 + \frac{1-\delta}{e^\varepsilon+1} \cdot R_0 + \delta \cdot Q_1''.\]
    
    This decomposition (which was first used by \citet{kairouz2015composition}) states that we can view $Q_{s_i} = M(s_i,s_{-i})$ as a postprocessing of an $(\varepsilon,\delta)$-DP randomized response on the bit $s_i$. That is, with probability $\delta$, we output the bit $s_i$ with a flag indicating certainty; with probability $\frac{e^\varepsilon(1-\delta)}{e^\varepsilon+1}$, we output $s_i$ with an uncertain flag; and, with probability $\frac{1-\delta}{e^\varepsilon+1}$, we output $1-s_i$ with the uncertain flag. We can postprocess this to generate a sample from $Q_{s_i} = M(s_i,s_{-i})$ as follows. If we receive $b \in \{0,1\}$ with the uncertain flag, then output a sample from $R_b$. If we receive $b \in \{0,1\}$ with the certain flag, then output a sample from $Q_b''$.
    
    To be formal, define two distributions on the set $[4]=\{1,2,3,4\}$ by
    \begin{align*}
        \widetilde{Q}_0 &= \left( \frac{e^\varepsilon (1-\delta)}{e^\varepsilon+1} , \frac{1-\delta}{e^\varepsilon+1} , \delta , 0 \right), \\
        \widetilde{Q}_1 &= \left( \frac{1-\delta}{e^\varepsilon+1} , \frac{e^\varepsilon(1-\delta)}{e^\varepsilon+1} , 0 , \delta \right).
    \end{align*}
    Define the a randomized postprocessing function $F : [4] \to \mathcal{Y}$ by $F(1)=R_0$, $F(2)=R_1$, $F(3)=Q_0''$, and $F(4)=Q_1''$.
    Then we have $F(\widetilde{Q}_0)=Q_0$ and $F(\widetilde{Q}_1)=Q_1$.
    
    Now we use the postprocessing property (a.k.a.~the data processing inequality):
    \begin{align*}
        I(S_i;M(S)|S_{-i}=s_{-i})
        &= p \dr{1}{Q_1}{Q_p} +(1-p) \dr{1}{Q_0}{Q_p}\\
        &\le p \dr{1}{\widetilde{Q}_1}{\widetilde{Q}_p} +(1-p) \dr{1}{\widetilde{Q}_0}{\widetilde{Q}_p}.
    \end{align*}
    
    A tedious calculation now yields the bound:
    \begin{small}
    \begin{align*}
        & p \dr{1}{\widetilde{Q}_1}{\widetilde{Q}_p} +(1-p) \dr{1}{\widetilde{Q}_0}{\widetilde{Q}_p}\\
        &= p \left( \frac{1-\delta}{e^\varepsilon+1} \log\left(\frac{\frac{1-\delta}{e^\varepsilon+1}}{p \frac{1-\delta}{e^\varepsilon+1} + (1-p) \frac{e^\varepsilon(1-\delta)}{e^\varepsilon+1}}\right)
        +   \frac{e^\varepsilon(1-\delta)}{e^\varepsilon+1} \log\left(\frac{\frac{e^\varepsilon(1-\delta)}{e^\varepsilon+1}}{p \frac{e^\varepsilon(1-\delta)}{e^\varepsilon+1} + (1-p) \frac{1-\delta}{e^\varepsilon+1}}\right)
        +  \delta \log\left(\frac{\delta}{p\delta}\right) \right)\\
        &+ (1-p) \left(
        \frac{e^\varepsilon(1-\delta)}{e^\varepsilon+1} \log\left(\frac{\frac{e^\varepsilon(1-\delta)}{e^\varepsilon+1}}{p \frac{1-\delta}{e^\varepsilon+1} \!+\! (1\!-\!p) \frac{e^\varepsilon(1-\delta)}{e^\varepsilon+1}}\right)
        \!+\!   \frac{1-\delta}{e^\varepsilon+1} \log\left(\frac{\frac{1-\delta}{e^\varepsilon+1}}{p \frac{e^\varepsilon(1-\delta)}{e^\varepsilon+1} \!+\! (1\!-\!p) \frac{1-\delta}{e^\varepsilon+1}}\right)
        \!+\!  \delta \log\left(\frac{\delta}{(1\!-\!p)\delta}\right)  \right) \\        
        &= p \frac{1-\delta}{e^\varepsilon+1} \left(
            \log\left(\frac{1}{p+(1-p)e^\varepsilon}\right) + e^\varepsilon \log\left(\frac{e^\varepsilon}{p e^\varepsilon+(1-p)}\right)
        \right)
        \\&+ (1-p) \frac{1-\delta}{e^\varepsilon+1} \left(
            e^\varepsilon \log\left( \frac{e^\varepsilon}{p + (1-p) e^\varepsilon} \right) + \log \left( \frac{1}{p e^\varepsilon + (1-p)} \right)
        \right)
        \\&+ \delta \left( p \log (1/p) + (1-p) \log(1/(1-p)) \right)\\
        &= \frac{1-\delta}{e^\varepsilon+1} \left(
            (p + (1-p)e^\varepsilon)\log\left(\frac{1}{p+(1-p)e^\varepsilon}\right) + (1-p)e^\varepsilon \varepsilon
            + (p e^\varepsilon + 1-p)\log\left(\frac{1}{p e^\varepsilon + 1-p}\right) + p e^\varepsilon \varepsilon
        \right)
        \\&+ \delta h(p)\\
        &= (1-\delta) \left(
            \frac{p + (1-p) e^\varepsilon}{e^\varepsilon + 1} \log \left( \frac{e^\varepsilon+1}{p + (1-p) e^\varepsilon}\right) 
            + \frac{p e^\varepsilon + 1-p}{e^\varepsilon + 1} \log \left( \frac{e^\varepsilon+1}{p e^\varepsilon + 1-p}\right)
             - \log(e^\varepsilon + 1) + \frac{e^\varepsilon \varepsilon}{e^\varepsilon + 1}
        \right) \\&+ \delta h(p)\\
        &= (1-\delta) \left( h \left( \frac{p+(1-p)e^\varepsilon}{e^\varepsilon+1} \right) - \log(e^\varepsilon + 1) + \frac{e^\varepsilon \varepsilon}{e^\varepsilon + 1} \right)+ \delta h(p) \\
        &= (1-\delta) \left( h \left( \frac{p e^\varepsilon +(1-p)}{e^\varepsilon+1} \right) - \log(1+e^{-\varepsilon}) - \frac{\varepsilon}{e^\varepsilon + 1} \right)+ \delta h(p) .
    \end{align*}
    \end{small}
    Combining inequalities and summing over $i \in [n]$ yields the first part of the result. 
    The final part of the result is the bound \[\forall \varepsilon \ge 0 ~~~~~ g(\varepsilon) := \log 2 - \log(1+e^{-\varepsilon}) - \frac{\varepsilon}{e^\varepsilon+1} \le \frac{\varepsilon^2}{8},\]
    which can be verified by showing that $g(0)=g'(0)=0$ and $\forall \varepsilon \ge 0 ~~ g''(\varepsilon) \le \frac14$ (or by \href{https://www.google.com/search?q=y\%3Dln\%282\%29-ln\%281\%2Bexp\%28-x\%29\%29-x\%2F\%281\%2Bexp\%28x\%29\%29\%2Cy\%3Dx\%5E2\%2F8}{\color{black}{plotting it}}).
\end{proof}

\section{Implementation of Theorem \ref{thm:main}}\label{app:code}
On the next page is \texttt{Python} pseudocode implementing Corollary \ref{cor:ternary}.
Some example usage:
\begin{itemize}
    \item Suppose the auditor correctly guesses $v=75$ out of $m=r=100$ examples, with no abstentions. We have $\frac{75}{100} = \frac34 = \frac{e^{\log 3}}{e^{\log 3} +1}$. So we would expect this to correspond roughly to $\varepsilon=\log 3 \approx 1.09$. Theorem \ref{thm:main} gives p-value of $0.553$ for the null hypothesis $\varepsilon \le \log 3$ and $\delta=0$; to obtain this result call \texttt{p\_value\_DP\_audit(100,100,75,math.log(3),0)} in the code below.
    If we want 95\% confidence, we obtain the lower bound $\varepsilon \ge 0.702$ by calling \texttt{get\_eps\_audit(100,100,75,0,0.05)}.
    If we set $\delta=10^{-4}$, we obtain the weaker lower bound $\varepsilon \ge 0.699$ by calling \texttt{get\_eps\_audit(100,100,75,1e-4,0.05)}.
    
    \item Suppose the auditor correctly guesses $v=75$ out of $r=100$ guesses, but with a total of $m=1000$ examples. I.e., the auditor abstains on $m-r=900$ examples.
    We obtain a lower bound of $\varepsilon \ge 0.673$ for $\delta=10^{-4}$ and 95\% confidence. (This is slightly weaker than the $\varepsilon \ge 0.699$ lower bound we get when there are no abstentions.) This is obtained by calling \texttt{get\_eps\_audit(1000,100,75,1e-4,0.05)}.

\end{itemize}

\begin{center}
\framebox{
\begin{minipage}{0.85\textwidth}
\begin{footnotesize}
\begin{verbatim}
# m = number of examples, each included independently with probability 0.5
# r = number of guesses (i.e. excluding abstentions)
# v = number of correct guesses by auditor
# eps,delta = DP guarantee of null hypothesis
# output: p-value = probability of >=v correct guesses under null hypothesis
def p_value_DP_audit(m, r, v, eps, delta):
  assert 0 <= v <= r <= m
  assert eps >= 0
  assert 0 <= delta <= 1
  q = 1/(1+math.exp(-eps))  # accuracy of eps-DP randomized response
  beta = scipy.stats.binom.sf(v-1, r, q)  # = P[Binomial(r, q) >= v]
  alpha = 0
  sum = 0  # = P[v > Binomial(r, q) >= v - i]
  for i in range(1, v + 1):
      sum = sum + scipy.stats.binom.pmf(v - i, r, q)
      if sum > i * alpha:
        alpha = sum / i
  p = beta + alpha * delta * 2 * m
  return min(p, 1)

# m = number of examples, each included independently with probability 0.5
# r = number of guesses (i.e. excluding abstentions)
# v = number of correct guesses by auditor
# p = 1-confidence e.g. p=0.05 corresponds to 95%
# output: lower bound on eps i.e. algorithm is not (eps,delta)-DP
def get_eps_audit(m, r, v, delta, p):
  assert 0 <= v <= r <= m
  assert 0 <= delta <= 1
  assert 0 < p < 1
  eps_min = 0  # maintain p_value_DP(eps_min) < p
  eps_max = 1  # maintain p_value_DP(eps_max) >= p
  while p_value_DP_audit(m, r, v, eps_max, delta) < p: eps_max = eps_max + 1
  for _ in range(30):  # binary search
    eps = (eps_min + eps_max) / 2
    if p_value_DP_audit(m, r, v, eps, delta) < p:
      eps_min = eps
    else:
      eps_max = eps
  return eps_min
\end{verbatim}
\end{footnotesize}
\end{minipage}
}
\end{center}

\end{document}